\newtheorem{theorem}{Theorem}
\newtheorem{lemma}{Lemma}
\newtheorem{proposition}{Proposition}
\theoremstyle{definition}
\newtheorem{remark}{Remark}
\newcommand{\bet}{\beta_{0}}
\newcommand{\cD}{\mathcal{D}}
\newcommand{\prob}{\Pr}
\newcommand{\linner}{\left\langle}
\newcommand{\rinner}{\right\rangle}
\DeclareMathOperator*{\argmin}{arg\,min}
\DeclareMathOperator*{\E}{\mathbf{E}}
\newcommand{\re}{\mathbb{R}}
\renewcommand{\epsilon}{\varepsilon}
\newcommand{\inote}[1]{\textcolor{red}{(Shinji: #1)}}
\newcommand{\hnote}[1]{\textcolor{red}{(Honda: #1)}}
\newcommand{\tnote}[1]{\textcolor{red}{(Tsuchiya: #1)}}
\renewcommand{\inote}[1]{}
\renewcommand{\hnote}[1]{}
\renewcommand{\tnote}[1]{}
\newcommand{\simplex}[1]{\mathcal{P}_{#1}}
\title{
  Adversarially Robust Multi-Armed Bandit Algorithm \\
  with Variance-Dependent Regret Bounds
}
\author{
  Shinji Ito\footnote{
    NEC Corporation and RIKEN AIP; \texttt{i-shinji@nec.com}.
    Supported by JST, ACT-I Grant Number JPMJPR18U5, Japan.
  }
  \and
  Taira Tsuchiya\footnote{
    Kyoto University and RIKEN AIP; 
    \texttt{tsuchiya@sys.i.kyoto-u.ac.jp}.
    Supported by JST, ACT-X Grant Number JPMJAX210E,
    and JSPS, KAKENHI Grant Number JP21J21272.
  }
  \and
  Junya Honda\footnote{
    Kyoto University and RIKEN AIP; \texttt{honda@i.kyoto-u.ac.jp}.
    Supported by JSPS, KAKENHI Grant Number 21K11747, Japan.
  }
}
\begin{document}

\maketitle

\begin{abstract}%
  This paper considers the multi-armed bandit (MAB) problem and provides a new best-of-both-worlds (BOBW) algorithm that works nearly optimally in both stochastic and adversarial settings. In stochastic settings, some existing BOBW algorithms achieve tight gap-dependent regret bounds of $O(\sum_{i: \Delta_i>0} \frac{\log T}{\Delta_i})$ for suboptimality gap $\Delta_i$ of arm $i$ and time horizon $T$. As \citet{audibert2007tuning} have shown, however, that the performance can be improved in stochastic environments with low-variance arms. In fact, they have provided a stochastic MAB algorithm with gap-variance-dependent regret bounds of $O(\sum_{i: \Delta_i>0} (\frac{\sigma_i^2}{\Delta_i} + 1) \log T )$ for loss variance $\sigma_i^2$ of arm $i$. In this paper, we propose the first BOBW algorithm with gap-variance-dependent bounds, showing that the variance information can be used even in the possibly adversarial environment. Further, the leading constant factor in our gap-variance dependent bound is only (almost) twice the value for the lower bound. Additionally, the proposed algorithm enjoys multiple data-dependent regret bounds in adversarial settings and works well in stochastic settings with adversarial corruptions. The proposed algorithm is based on the follow-the-regularized-leader method and employs adaptive learning rates that depend on the empirical prediction error of the loss, which leads to gap-variance-dependent regret bounds reflecting the variance of the arms.
\end{abstract}

% \begin{keywords}%
% multi-armed bandit, variance-dependent regret bound, best-of-both-worlds algorithm, adversarial robustness,
% optimistic follow-the-regularized-leader
% \end{keywords}

\section{Introduction}
In this paper,
we consider the multi-armed bandit (MAB)
problem where
a player is given $K$ arms.
In each round $t \in [T]=\{1, 2, \ldots, T\}$ for time horizon $T$,
%problem.
%In the MAB problem,
%a player is given the number $K$ of arms and the number $T$ of rounds,
%before the game starts.
%In each round $t = 1, 2, \ldots, T$,
the environment determines the loss vector $\ell(t) = ( \ell_1(t), \ell_2(t), \ldots, \ell_K(t) )^\top \in [0, 1]^K$
and then the player chooses an arm $I(t) \in [K] := \{ 1, 2, \ldots, K \}$ without knowing $\ell(t)$.
%  and suffer a loss of $\ell_{I(t)}(t)$.
After that,
the player can observe only the loss $\ell_{I(t)}(t)$ for the chosen arm.
Player performance is measured by means of the regret $R(T)$ defined as
%The performance of the player is measured by means of the (pseudo-)regret $R(T)$ defined as
%\hnote{do we need "pseudo" in this definition?}
\begin{align}
  \label{eq:defRT}
  R_{i^*}(T) = \E \left[
    \sum_{t=1}^T \ell_{I(t)}(t)
    -
    \sum_{t=1}^T \ell_{i^*}(t)
  \right],
  \quad
  R(T) = \max_{i^* \in [K]}R_{i^*}(T) ,
\end{align}
where the expectation is taken with respect to the algorithm's internal randomness and the randomness of loss vectors $\{ \ell(t) \}_{t=1}^{T}$.
There are mainly two types of environments to determine the loss vector $\ell(t)$, that is,
the stochastic MAB \citep{lai1985asymptotically,auer2002finite} and the adversarial MAB \citep{auer2002nonstochastic},
which are detailed as follows.

%In the MAB context,
%two main types of environments have been studied
%the stochastic MAB \citep{lai1985asymptotically,auer2002finite} and the adversarial MAB \citep{auer2002nonstochastic}.

In stochastic MAB,
we assume that there exists a fixed distribution $\cD$ over $[0, 1]^K$,
from which the loss vectors $\ell(t)$ for $t\in [T]$ are i.i.d.
%which the loss vectors $\ell_{(t)}$ follow, i.i.d.~for all $t \in [T]$.
It is known that the difficulty of the stochastic MAB can be characterized by the
\textit{suboptimality gap} $\Delta_i = \mu_i - \mu_{i^*}$,
%\textit{suboptimality gap} $\{ \Delta_i \}_{i=1}^K$,
%which is defined as $\Delta_i = \mu_i - \mu_{i^*}$,
where $\mu_i = \E_{\ell \sim \cD } [ \ell_i ]$ represents the expected loss for the arm $i$ and
$i^* \in \argmin_{i \in [K]} \{ \mu_i \}$ represents an optimal arm.
The optimal regret bound given the suboptimality gap can be expressed as $R(T) = \allowbreak O(\sum_{i: \Delta_i > 0} \frac{\log T}{\Delta_i})$,
which has been achieved by several algorithms,
including the UCB1 policy \citep{auer2002finite}.
In addition, there have been many studies to incorporate richer information of the loss distribution
not limited to the expected loss.
For example,
the UCB-V algorithm by
\citet{audibert2007tuning} achieves a variance-dependent regret bound of 
$R(T) \leq 10 \sum_{i: \Delta_i > 0} (\frac{\sigma_i^2}{\Delta_i} + 2) \log T $,
where $\sigma_i^2=\E_{\ell \sim \cD} [ (\ell_i - \mu_i)^2 ]$ is the loss variance of the $i$-th arm.
The DMED algorithm \citep{honda_dmed} is then proposed to achieve a bound that is
optimal for general loss distributions over $[0,1]^K$,
and thus is optimal in terms of $\Delta_i$ and $\sigma_i^2$.
Recently, Thompson sampling \citep{thompson_kaufmann, thompson_further} known for its practicality
has also been extended so as to have the optimality for general loss distributions
\citep{riou_nonparametric, baudry_nonparametric}.
%This bound is tight up to a constant,
%as mentioned,
%e.g.,
%in Exercise 16.7 of the book by \citet{lattimore2020bandit}.
%The optimal algorithm in terms of $(\mu_i,\,\mu_{i^*},\sigma_i^2)$
%is proposed in \citet{honda_moment}.
%Recently efficient nonparametric algorithms that are optimal for general distributions
%(and thus optimal in terms of $(\mu_i,\,\mu_{i^*},\sigma_i^2)$)
%are proposed \citep{riou_nonparametric, baudry_nonparametric}.
%Still, the derived bounds of these optimal algorithms have complicated forms
%and the dependence on $(\Delta_i,\sigma_i^2)$ is not explicitly seen.

In adversarial MAB,
no generative model for the losses is assumed,
and the environment may choose loss vectors $\ell(t)$ depending on the algorithm's choices $\{ I(s) \}_{s=1}^{t-1}$ so far,
in an adversarial manner.
For this problem,
algorithms based on follow-the-regularized-leader methods with Tsallis entropy regularization achieve $O(\sqrt{KT})$-regret bounds
\citep{audibert2009minimax,zimmert2019connections},
which matches the lower bound of $\Omega(\sqrt{KT})$ shown, e.g., by \citet{auer2002nonstochastic}.
% up to constants.

In general,
stochastic MAB algorithms including UCB1 and Thompson sampling
work poorly
%do not work well
for adversarial MAB
while most adversarial MAB algorithms show suboptimal performance for stochastic MAB.
This means that,
in practice,
it is important to choose an algorithm
appropriate for the environment,
%that is appropriate for the environment,
but it is difficult to know in advance how the environment will behave.

A promising approach to this issue is to develop
best-of-both-worlds (BOBW) algorithms \citep{bubeck2012best} that work well for both stochastic and adversarial environments.
One representative BOBW algorithm is the Tsallis-INF algorithm proposed by \citet{zimmert2021tsallis},
which achieves regret bounds of $O(\sum_{i: \Delta_i > 0} \frac{\log T}{\Delta_i} )$ for stochastic MAB and
$O(\sqrt{KT})$ for adversarial MAB.
The Tsallis-INF algorithm also works well in
the \textit{stochastic environment with adversarial corruption},
% or the \textit{corrupted stochastic environment},
an intermediate setting between stochastic and adversarial environments.
In this setting,
loss vectors are generated stochastically and then corrupted by the adversary.
The \textit{corruption level} parameter $C > 0$ quantifies the total amount of corruption.
% if $C = 0$, the model is equivalent to the stochastic model,
% and if $C$ is not constrained (e.g., $C = O(T)$), the model is equivalent to the adversarial model.
The Tsallis-INF algorithm achieves an
$ O( \sum_{i: \Delta_i > 0} \frac{\log T}{\Delta_i} + \sqrt{\sum_{i: \Delta_i > 0} \frac{C \log T}{\Delta_i}} ) $-regret bound,
which have been improved to a more refined expression
by \citet{masoudian2021improved}.
% This is tight up to logarithmic factors,
% as shown in \citet{ito2021optimal}.

One challenge for BOBW algorithms is how to exploit information of the loss distribution richer than the expected loss
for the stochastic environment.
In the stochastic MAB algorithms, the loss variance and the loss distribution can be naturally incorporated to the regret bound
by algorithms using the empirical variance \citep{audibert2007tuning} and the empirical distribution \citep{honda_dmed}, respectively.
On the other hand for the BOBW framework, we need to carefully explore and estimate even the expected loss to guarantee the adversarial robustness,
which makes it much more difficult to construct an algorithm adaptive to the variance or the loss distribution.
Unlike UCB-type BOBW algorithms
\citep{bubeck2012best,seldin2017improved,auer2016algorithm},
% without regret bounds of $O(\log T)$,
the Tsallis-INF algorithm is based on totally different frameworks and it seems challenging to
simply combine it with stochastic MAB algorithms such as the UCB-V algorithm.

%One possible approach may be to combine
%the stochastic MAB algorithms such as UCB-V algorithm with Tsallis-INF algorithm.
%However, this
%seems challenging as these algorithms are based on totally different frameworks,
%although there are UCB-type BOBW algorithms
%\citep{bubeck2012best,seldin2017improved,auer2016algorithm}
%when we are not interested to achieve the regret bound of $O(\log T)$ for the stochastic environment.

%A limitation of previous studies on BOBW algorithms is that
%variance-dependent regret bounds have not been shown.
%More precisely,
%there is no known algorithm with an $O( \sum_{i \neq i^*} ( \frac{\sigma_i^2}{\Delta_i} + 1) \log T )$-regret bound for stochastic environments
%and a non-trivial regret bound for adversarial environments.
%One possible approach is to combine the results of stochastic MAB algorithms with variance-dependent regret bounds (e.g., by \citet{audibert2007tuning}) and Tsallis-INF algorithm,
%which, however,
%seems challenging as these algorithms are based on totally different frameworks;
%UCB-type methods (UCB-V by \citet{audibert2007tuning}) (Tsallis-INF).
%In addition,
%it is also non-trivial how to combine the approach of UCB-V into BOBW algorithm using UCB-type methods,
%Although there are BOBW algorithms using UCB-type methods,
%e.g.,
%algorithms by \citet{bubeck2012best,seldin2017improved,auer2016algorithm}.

\subsection{Contribution of This Work}
The main contribution of this work is 
the proposal of an algorithm with a variance-dependent regret bound for the stochastic environment,
while preserving the BOBW property and the robustness to adversarial corruption.
This result can be viewed as the first step for the construction of BOBW algorithms that can
exploit information of the loss distribution not limited to the expected loss.
%A contribution of this work is to 
%propose an algorithm with gap-variance-dependent regret bounds, the BOBW property, and the robustness to adversarial corruption.

The proposed algorithm is based on the optimistic follow-the-regularized-leader (OFTRL) \citep{rakhlin2013online,rakhlin2013optimization} approach.
In this approach,
we use optimistic predictions to
reduce the variances in unbiased estimators for loss vectors.
These are also referred to as hint vectors.
This approach is employed in such existing MAB algorithms as
ones by \citet{wei2018more,ito2021parameter},
as well.
The major difference between such existing algorithms and our proposed algorithm
is in the choice of regularizer functions,
learning rates,
and optimistic predictions.
More specifically,
the proposed algorithm uses a learning rate dependent on
the empirical prediction error of the hint vectors, which is naturally linked to the loss variance in the stochastic environment.
This modification plays
%the estimated variance of each arm,
%and uses the sample mean of each arm as the optimistic prediction.
%These modifications play
a central role in obtaining a variance-dependent regret bound.
The regularizer functions of the proposed algorithm are composed of a combination of
(1) entropy terms for the complement $(1-p_i)$ of the probability $p_i$ of choosing the arm $i$
and
(2) logarithmic barriers that are used in existing works by \citet{wei2018more,ito2021parameter}.
%  as well,
% and 
% The former component alone has been used in existing studies \citet{wei2018more,ito2021parameter}.
The former entropy component has the role of removing the impact of the variance of the optimal arm from the regret bound.
We note that the idea of adding the entropy for $(1-p_i)$ has already been introduced by \citet{zimmert2019beating}.
However,
their motivation for adopting this idea was to address combinatorial semi-bandits,
and our study is the first to show that this idea is also useful in deriving variance-dependent regret bounds.
A recent study on combinatorial semi-bandit by \citet{ito2021hybrid} has also employed a similar regularizer.
% This work,
% however,
% has different 

The performance of the proposed algorithm can be summarized as follows:
\begin{theorem}
  \label{thm:main}
  In any stochastic setting with a unique optimal arm,
  % the regret for 
  the proposed algorithm achieves
  \begin{align}
    \label{eq:thm1-bound-sto}
    R(T)
    \leq
    \left(
    2
    (1+\epsilon)
    +
    \sum_{i \in [K] \setminus \{ i^* \}} \max\left\{4\frac{\sigma_i^2}{\Delta_i}
    +
    c
    \log \left(
      1 + \frac{\sigma_i^2}{\Delta_i}
    \right), 2(1+\epsilon)\right\}
    \right)
    \log T
%    R(T)
%    \leq
%    \left(
%    \sum_{i \in [K] \setminus \{ i^* \}}\max\left\{ 4\frac{\sigma_i^2}{\Delta_i}
%    +
%    c
%    \log \left(
%      1 + \frac{\sigma_i^2}{\Delta_i}
%    \right),\,2(1+\epsilon)\right\}
%    +
%    2
%    (1+\epsilon)
%    \right)
%    \log T
%
%    R(T)
%    \leq
%    \sum_{i \in [K] \setminus \{ i^* \}}\left( 4 \frac{\sigma_i^2}{\Delta_i}
%    +
%    c
%    \log \left(
%      1 + \frac{\sigma_i^2}{\Delta_i}
%    \right)
%    \right)
%    \log T
%    +
%    2 (1 + \epsilon) K \log T
    +
    o (\log T),
  \end{align}
  where $i^* \in [K]$ is the optimal arm,
  $\epsilon \in (0, 1/2]$ is an input parameter for the algorithm,
  and $c = O((\log \frac{1}{\epsilon})^2)$.
  Further,
  in adversarial settings,
  the regret for the proposed algorithm satisfies
  \begin{align}
    \label{eq:thm1-bound-adv}
    R(T) \leq \sqrt{K \log T \min\left\{ T, 4L^*,  4 (T-L^*),  4 Q_{\infty}  \right\}} + O(K \log T)%,
  \end{align}
  for 
  $L^* = \min_{i^* \in [K] }\E [ \sum_{t=1}^T \ell_{i^*} (t) ] $ and 
  $Q_{\infty} = \min_{\bar{\ell} \in [0,1]^K} \E [ \sum_{t=1}^T \|\ell_i(t) - \bar{\ell}\|_{\infty}^2 ]$.
  In addition,
  in corrupted stochastic settings,
  %  with adversarial corruption,
  we have $R(T) \leq \mathcal{R} + O(\sqrt{C \mathcal{R}})$,
  where $\mathcal{R}$ is the RHS of \eqref{eq:thm1-bound-sto}.
  % and $C$ is the corruption level.
%  where $L^*$ and $Q_{\infty}$ are defined in Table~\ref{table:parameters}.
%  In addition,
%  in the stochastic setting with adversarial corruption,
%  the regret is bounded as $R(T) \leq \mathcal{R} + \sqrt{C \mathcal{R}}$,
%  where $\mathcal{R}$ is the right-hand side of \eqref{eq:thm1-bound-sto} and $C$ is the corruption level.
\end{theorem}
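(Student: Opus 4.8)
The plan is to funnel all four claims through a single ``master'' regret bound for the OFTRL iterates and then instantiate it. Writing $p(t)=(p_1(t),\dots,p_K(t))$ for the sampling distribution, $\hat\ell_i(t)=m_i(t)+\frac{\ell_i(t)-m_i(t)}{p_i(t)}\mathbf{1}[I(t)=i]$ for the hint-shifted importance-weighted estimator (so $\E_t[\hat\ell_i(t)]=\ell_i(t)$), and $N_i(T)=\sum_{t=1}^T p_i(t)$, one first notes that for every comparator $i^*$, $R_{i^*}(T)=\E[\sum_t\langle p(t)-e_{i^*},\hat\ell(t)\rangle]$, and the optimistic-FTRL decomposition bounds this by a \emph{penalty} term coming from the time-varying regularizer $\psi_t$ (its value at the comparator minus its initial minimum, plus the round-by-round change) and a \emph{stability} term $\E[\sum_t\langle p(t)-\tilde p(t),\hat\ell(t)-m(t)\rangle]$, where $\tilde p(t)$ is the one-step-ahead iterate that also sees the hint $m(t)$. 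Strong convexity of $\psi_t$ — treated separately on its log-barrier part and its complement-entropy part — turns the stability term into a local-norm estimate $\lesssim\E[\sum_t\sum_i\frac{(\hat\ell_i(t)-m_i(t))^2}{\psi_{t,i}''(p_i(t))}]$ up to the log-barrier correction, and taking the conditional expectation replaces $(\hat\ell_i(t)-m_i(t))^2$ by $(\ell_i(t)-m_i(t))^2/p_i(t)$. The decisive structural feature is that, because the complement-entropy regularizer acts on $1-p_i$, the optimal arm's contribution is multiplied by $1-p_{i^*}(t)=\sum_{j\ne i^*}p_j(t)$, which is precisely what keeps $\sigma_{i^*}^2$ out of the final bound.

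Next I would fix the adaptive learning rate: each arm $i$ carries a weight $\beta_{t,i}$ that tracks (the square root of) arm $i$'s running cumulative empirical prediction error $\sum_{s<t}(\ell_i(s)-m_i(s))^2\mathbf{1}[I(s)=i]$, kept well-defined by the log-barrier lower bound $p_i(t)\ge\mathrm{poly}(1/T)$. With this choice the penalty and stability terms balance, and a Cauchy--Schwarz step against $\sum_t 1/t$ (equivalently, the $\Theta(\sqrt t)$-type growth of $\beta_{t,i}$) produces a factor $\sqrt{\log T}$; the outcome is the master bound
\[
  R(T)\ \lesssim\ \sum_i\sqrt{\log T\cdot W_i}\ +\ O(K\log T),\qquad W_i=\E\Big[\sum_t p_i(t)\,(\ell_i(t)-m_i(t))^2\Big],
\]
refined by the fact that $W_{i^*}$ actually carries the extra factor $\sum_{j\ne i^*}p_j(t)$ inside the sum. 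One must be careful here that $\beta_{t,i}$ is random and predictable, so a monotone-learning-rate / ``ghost iterate'' argument is needed to control the drift caused by the weight changing between rounds, and the comparator must be replaced by $(1-1/T)e_{i^*}+\frac{1}{KT}\mathbf{1}$ so that the log-barrier is finite there, at a cost of $O(1)$.

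To finish, I would instantiate the master bound. For the adversarial bounds I take $m_i(t)$ to be the running empirical loss average of arm $i$, which as an online square-loss predictor is simultaneously competitive up to $O(\log T)$ with every fixed offset $\bar\ell$; bounding the quadratic-variation quantity in the master bound by $\bar\ell\equiv 0$ (using $\ell_i^2\le\ell_i$, giving $L^*$), by $\bar\ell\equiv 1$ (using $(1-\ell_i)^2\le 1-\ell_i$, giving $T-L^*$), by the minimizing $\bar\ell$ (giving $Q_\infty$), and trivially by $1$ (giving $T$), yields the minimum in \eqref{eq:thm1-bound-adv}, with $\sqrt K$ from Cauchy--Schwarz over arms and $O(K\log T)$ from the log-barrier penalty plus the predictor regrets. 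For the stochastic bound with a unique optimal arm, the regret decomposes exactly as $R(T)=\sum_{i\ne i^*}\Delta_i\,\E[N_i(T)]$; a concentration argument shows $W_i=\sigma_i^2\,\E[N_i(T)]+o(\log T)$ for the running-mean hint, and $W_{i^*}\lesssim\sum_{j\ne i^*}\E[N_j(T)]$ because of the $\sum_{j\ne i^*}p_j(t)$ factor. Substituting into the master bound gives $R(T)\lesssim\sum_{i\ne i^*}\sqrt{\log T\cdot\sigma_i^2\,\E[N_i(T)]}+O(K\log T)+o(\log T)$, and applying the self-bounding inequality $2\sqrt{ab}\le\kappa a+b/\kappa$ arm-by-arm against $R(T)=\sum_{i\ne i^*}\Delta_i\E[N_i(T)]$, with $\kappa$ tuned to produce the leading constant $4$, yields $\Delta_i\E[N_i(T)]\le\big(4\sigma_i^2/\Delta_i+c\log(1+\sigma_i^2/\Delta_i)\big)\log T+o(\log T)$; the $\max\{\,\cdot\,,2(1+\epsilon)\}$ and the additive $2(1+\epsilon)\log T$ absorb the per-arm and optimal-arm log-barrier floors. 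In the corrupted stochastic case the decomposition acquires an extra term bounded by $2C$, and redoing the self-bounding step with this perturbation gives $R(T)\le\mathcal{R}+O(\sqrt{C\mathcal{R}})$ in the standard way.

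The main obstacle I expect is the stochastic step, specifically the interplay between the random adaptive learning rate and the heavy-tailed importance weights: one must show simultaneously that (i) $\sum_t(\ell_i(t)-m_i(t))^2\mathbf{1}[I(t)=i]$ concentrates to $\sigma_i^2\,\E[N_i(T)]$ with the \emph{correct leading constant} even though $m_i(t)$ is only a noisy running mean, (ii) the log-barrier keeps every $p_i(t)$ large enough that all the $1/p_i(t)$ factors (appearing both in the estimator variance and in the definition of $\beta_{t,i}$) cost only $o(\log T)$, and (iii) the complement-entropy term genuinely attaches the multiplier $\sum_{j\ne i^*}p_j(t)$ to the optimal arm's contribution so that $\sigma_{i^*}^2$ never enters. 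Pushing the constants far enough that the leading factor is $4$ rather than some larger universal constant is the delicate part, and is the source of the dependence $c=O((\log\frac{1}{\epsilon})^2)$.
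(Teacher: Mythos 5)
Your plan follows essentially the same route as the paper: an OFTRL penalty/stability decomposition with the hybrid log-barrier plus complement-entropy regularizer, per-arm adaptive learning rates driven by the cumulative squared prediction error, the running-mean hint analyzed as an online square-loss regressor, the self-bounding trick (in the form $R=2R-R$, and $(1+\lambda)R-\lambda R$ for the corrupted case), and Cauchy--Schwarz with $m^*\in\{0,\mathbf{1},\bar\ell\}$ for the adversarial bounds. The one obstacle you single out --- concentration of $\sum_t \mathbf{1}[I(t)=i](\ell_i(t)-m_i(t))^2$ around $\sigma_i^2\,\E[N_i(T)]$ with the correct leading constant --- is resolved in the paper without any concentration argument: the deterministic online-regression inequality gives $\sum_t \mathbf{1}[I(t)=i](\ell_i(t)-m_i(t))^2 \le \sum_t \mathbf{1}[I(t)=i](\ell_i(t)-\mu_i)^2 + \log(1+N_i(T)) + \frac{5}{4}$ pathwise, and taking expectations yields exactly $\sigma_i^2 P_i + \log(1+P_i) + \frac{5}{4}$.
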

Note that the proposed algorithm does not require any prior knowledge on
$\sigma^2_i$, $\Delta_i$, $L^*$, $Q_\infty$,
and $C$.

\begin{table*}%
\caption{
  List of parameters.
  % Tsallis-INF: \citep{zimmert2021tsallis},
  % LB-INF: \citep{ito2021parameter}.
}
  \label{table:parameters}
    \centering
    \begin{tabular}{lll}
        \toprule
        Parameter & Region &  Description
        % \\
        % \midrule
        \\
        \midrule
        % $T$&$  \mathbb{N}$& Number of total rounds
        % \\
        % $K$&$  \mathbb{N}$& Number of arms
        % \\
        $\ell(t) = ( \ell_1(t), \ldots, \ell_K(t) )^\top $&$  [0, 1]^K$& Loss vector for the $t$-th round
        \\
        $\mu_i = \E[ \ell_i(t) ]$&$  [0, 1]$& Expected loss for the arm $i$ (sto.~setting)
        \\
        $\Delta_i = \mu_i - \min_{i^* \in [K]} \mu_{i^*}$&$  [0, 1]$& Suboptimality gap for the arm $i$ (sto.~setting)
        \\
        $\sigma_i^2 = \E[ (\ell_i(t) - \mu_i )^2 ] $&$ [0, 1/4]$& Variance for the arm $i$ (sto.~setting)
        \\
        $L^* = \min_{i^* \in [K]} \E[ \sum_{t=1}^T \ell_{i^*}(t) ]$&$ [0, T]$& Cumulative loss for the optimal arm
        \\
        % Tsallis-INF (RV) &  $2 \sqrt{KT} + 10K \log T  + 16$& 
        $Q_{\infty} = \min_{\bar{\ell} \in \re^K} \E [ \sum_{t=1}^T \| \ell(t) - \bar{\ell} \|_{\infty}^2 ] $&$ [0, T/4]$& Empirical variation of loss vectors 
        \\
        $V_1 = \sum_{t=1}^{T-1} \| \ell(t) - \ell(t+1) \|_1$&$ [0, T]$ & Path-length of loss vectors  
        \\
        $C = \sum_{t=1}^T \| \ell(t) - \ell'(t) \|_{\infty}$&$  [0, T]$ & Corruption level (sto.~setting with adv.~corruption), 
        \\
        &&
        where $\ell'(t)$ follows a fixed distribution
        \\
        % $\epsilon $&$ (0, 1/2]$ & An input parameter for the proposed algorithm
        % \\
        \bottomrule
    \end{tabular}
\end{table*}%
\begin{table*}[ht]%
\caption{
  Regret bounds for multi-armed bandit.
  % UCB-V: \citep{audibert2007tuning},
  % Tsallis-INF: \citep{zimmert2021tsallis},
  % LB-INF: \citep{ito2021parameter}.
  % We denote $L^*=\min_{i^* \in [K]} \sum_{t=1}^T \ell_{i^*}(t)$
  % and $Q_{\infty} = \min_{\bar{\ell} \in \re^K} \sum_{t=1}^T \| \ell(t) - \bar{\ell}  \|_{\infty}^2  $.
}
  \label{table:regretbound}
    \centering
    \begin{tabular}{llccccccccccc}
        \toprule
        Environment & Bound & UCB-V & Tsallis-INF  & LB-INF & \textbf{[Proposed]}
        \\
        \midrule
        Stochastic & $\Delta$-dependent & \checkmark & \checkmark & \checkmark & \checkmark
        \\
        & $(\Delta, \sigma^2)$-dependent & \checkmark &  & & \checkmark
        \\
        \midrule
        Adversarial & Worst case: $\tilde{O}(\sqrt{KT})$ & & \checkmark & \checkmark & \checkmark
        \\ 
        & First order: $\tilde{O}(\sqrt{K L^*})$ & & & \checkmark & \checkmark
        \\ 
        & Second order: $\tilde{O}(\sqrt{K Q_{\infty}})$ & & & \checkmark & \checkmark
        \\
        \midrule
        Sto.~with adv. & $(\Delta, C)$-dependent & & \checkmark & \checkmark & \checkmark        \\
        corruption & $(\Delta, \sigma^2, C)$-dependent & &  & & \checkmark        \\
        \bottomrule
    \end{tabular}
\end{table*}%

Qualitative features of the proposed algorithm and a comparison of it with existing algorithms are summarized in Table~\ref{table:regretbound}.
The proposed algorithm is superior to
the UCB-V algorithm by \citet{audibert2007tuning},
in having regret bounds for adversarial settings and for corrupted stochastic settings.
In addition,
the proposed algorithm is superior to
such existing BOBW algorithm as the Tsallis-INF algorithm by \citet{zimmert2021tsallis} and the LB-INF algorithm \citet{ito2021parameter},\footnote{
  In this paper,
  we refer to Algorithm 1 in the paper \citep{ito2021parameter} as LB-INF,
  which stands for Logarithmic Barrier-Implicit Normalized Forecaster~\citep{audibert2009minimax}.
}
in that it enjoys variance-dependent regret bounds in (corrupted) stochastic settings.
To our knowledge,
the proposed algorithm is the first to have,
simultaneously,
variance-dependent regret bounds in stochastic settings and a BOBW regret guarantee.

Table~\ref{table:regretbound-sto} summarizes asymptotic regret bounds in stochastic settings.
To quantitatively compare the proposed algorithm with asymptotically optimal
algorithms for the stochastic environment such as the DMED algorithm,
we show that the best-possible bound in terms of $(\Delta_i,\sigma_i)$ is roughly expressed as
$\sum_{i \neq i^*} ( 2 \frac{\sigma^2}{\Delta_i} + 0.5 \log (1 +\frac{\sigma_i^2}{\Delta_i})+1)$.
The proposed algorithm has input parameter $\epsilon \in (0, 1/2]$
with a regret bound of
$\sum_{i \neq i^*} \max\{ 4 \frac{\sigma^2}{\Delta_i} + 4.2 \log (1 +\frac{\sigma_i^2}{\Delta_i}),\, 2.4 K\}$ for $\epsilon = 0.2$,
%$\sum_{i \neq i^*} ( 4 \frac{\sigma^2}{\Delta_i} + 0.5 \log (1 +\frac{\sigma_i^2}{\Delta_i})) + 2.4 K$ for $\epsilon = 0.2$,
%$\sum_{i \neq i^*} ( 4 \frac{\sigma^2}{\Delta_i} + 3 \log (1 + 4 \frac{\sigma_i^2}{\Delta_i})) + 3 K$ if we set $\epsilon = \frac{1}{2}$,
%The bound by the proposed algorithm depends on the input parameter $\epsilon \in (0, 1/2]$,
%%which is bounded by $\sum_{i \neq i^*} ( 4 \frac{\sigma^2}{\Delta_i} + 3 \log (1 + 4 \frac{\sigma_i^2}{\Delta_i})) + 3 K$ if we set $\epsilon = \frac{1}{2}$,
%which is bounded by $\sum_{i \neq i^*} ( 4 \frac{\sigma^2}{\Delta_i} + 0.5 \log (1 +\frac{\sigma_i^2}{\Delta_i})) + 2.4 K$ if we set $\epsilon = 0.2$,
as long as the optimal arm is unique.
%i.e.,
%if $\Delta_i > 0$ for any $i \in [K] \setminus \{ i^* \}$.
This regret bound is close to twice the optimal algorithms, and at most $2.5$ times in the worst case.
In particular, this bound is much better than the UCB-V algorithm, which is directly designed for a variance-dependent bound.
%When there are multiple optimal arms,
%it is not clear whether the proposed algorithm has $O(\log T)$-regret bounds.
In Table~\ref{table:regretbound-sto},
(IW) and (RV) stand,
respectively,
for the importance-weighted sampling estimator and reduced-variance estimator
used in the Tsallis-INF algorithm by \citet{zimmert2019connections}.
% The regret bounds of $4 \sum_{i \neq i^*} \frac{1}{\Delta_i} + 4$ and $\sum_{i\neq i^*} \frac{1}{\Delta_i} + 28 K$ are given in Theorem 1 of their paper,
% and $25 \sum_{i: \Delta_i > 0} \frac{1}{\Delta_i} + 20 K $ is from Theorem 2 of the paper \citep{ito2021parameter}.
% Please note that,
As we have $\sigma_i^2 \leq 1/4$ from the assumption of $\ell(t) \in [0,1]^K$,
the proposed algorithm has the bound of
%$\sum_{i \neq i^*}\left( \frac{1}{\Delta_i} + 3 \log (1 + \frac{1}{\Delta_i}) \right) + 3K$,
$\sum_{i \neq i^*}\left( \frac{1}{\Delta_i} + 4.2 \log (1 + \frac{1}{4\Delta_i}) +2.4\right)$,
which in most cases is better than the bounds with Tsallis-INF.

\begin{table*}%
\caption{
  Regret bounds for the stochastic multi-armed bandit.
  % UCB-V: \citep{audibert2007tuning},
  % Tsallis-INF: \citep{zimmert2021tsallis},
  % LB-INF: \citep{ito2021parameter}.
}
  \label{table:regretbound-sto}
    \centering
    \begin{tabular}{llccccccccccc}
        \toprule
         & $(\Delta_i,\sigma_i^2)$-dependent bound on $\lim_{T \to \infty} R(T)/\log T $& 
        \\
	\midrule
        Lower bound& $\approx \sum_{i\neq i^*} \left(2 \frac{\sigma_i^2}{\Delta_i} + \frac{1}{2} \log\left( 1 + \frac{\sigma_i^2}{\Delta_i} \right)+1\right) $ (with relative err. $\le 6\%$)
        \\
          \midrule
        UCB-V & $\sum_{i: \Delta_i > 0} \left(10 \frac{\sigma_i^2}{\Delta_i} + 20 \right)$ & 
        \\
        DMED etc. & (optimal, in a form without an explicit dependence on $(\Delta_i,\sigma_i^2)$)& 
        \\
        Tsallis-INF (IW) &
        %  $ \min \left\{ 4 \sum_{i \neq i^*} \frac{1}{\Delta_i} + 4  , \quad  25 \sum_{i: \Delta_i > 0} \frac{1}{\Delta_i} + 20 K \right\} $ 
        $  \sum_{i \neq i^*} \frac{4}{\Delta_i}+4 $ 
%        $  4 \sum_{i \neq i^*} \frac{1}{\Delta_i} + 4  $ 
        &
        \\
        Tsallis-INF (RV) &  $\sum_{i \neq i^*} \frac{1}{\Delta_i} + 28K  $ &
%        Tsallis-INF (RV) &  $\sum_{i \neq i^*} \frac{1}{\Delta_i} + 28K  $ &
        \\
	\textbf{[Proposed]} & $\sum_{i\neq i^*} \max\left\{4 \frac{\sigma_i^2}{\Delta_i} + c(\epsilon) \log\left( 1 + \frac{\sigma_i^2}{\Delta_i}\right),\,2(1+\epsilon)\right\}+2(1+\epsilon) $ 
%	\textbf{[Proposed]} & $\sum_{i\neq i^*} \max\left\{4 \frac{\sigma_i^2}{\Delta_i} + c(\epsilon) \log\left( 1 + \frac{\sigma_i^2}{\Delta_i}\right),\,2(1+\epsilon)\right\} + 2(1+\epsilon) $ 
  % ~ c =  O \left(\left(\log \frac{1}{\epsilon}\right)^2 \right)$
  & 
	\\
  & $
  \left(
    \leq
    \sum_{i\neq i^*} \max\left\{4 \frac{\sigma_i^2}{\Delta_i} + 4.2 \log\left( 1 + \frac{\sigma_i^2}{\Delta_i}\right),\,2.4\right\}+2.4
%    \sum_{i\neq i^*} \max\left\{4 \frac{\sigma_i^2}{\Delta_i} + 4.2 \log\left( 1 + \frac{\sigma_i^2}{\Delta_i}\right),\,2.4\right\}
%    + 2.4
    \quad
    \mbox{if }
    % \quad
    \epsilon := 0.2
  \right)
  $
%	\textbf{[Proposed]} & $\sum_{i\neq i^*} \left(4 \frac{\sigma_i^2}{\Delta_i} + c \log\left( 1 + \frac{\sigma_i^2}{\Delta_i} \right) + 2(1 + \epsilon ) \right)  \quad \left( c =  O \left( \left( \log \frac{1}{\epsilon}\right)^2 \right)\right)$ & 
%	\\
%  & $
%  \left(
%    \leq
%    \sum_{i\neq i^*} \left( 4 \frac{\sigma_i^2}{\Delta_i} + \frac12\log \left( 1 + \frac{\sigma_i^2}{\Delta_i} 
%    \right)
%    %  + 3
%    \right)
%    + 2.4 K
%    % + 3
%    \quad
%    \mbox{if we choose }
%    %\quad
%    \epsilon = 1/5
%  \right)
%  $
  \\
%  & $
%  \left(
%    \leq
%    \sum_{i\neq i^*} \left( 4 \frac{\sigma_i^2}{\Delta_i} + 3 \log \left( 1 + 4 \frac{\sigma_i^2}{\Delta_i} 
%    \right)
%    %  + 3
%    \right)
%    + 3 K
%    % + 3
%    \quad
%    \mbox{if we choose}
%    \quad
%    \epsilon = 1/2
%  \right)
%  $
%  \\
        \bottomrule
    \end{tabular}
\end{table*}%
\begin{table*}[ht]%
\caption{
  Regret bounds for the adversarial multi-armed bandit.
  % Tsallis-INF: \citep{zimmert2021tsallis},
  % LB-INF: \citep{ito2021parameter}.
}
  \label{table:regretbound-adv}
    \centering
    \begin{tabular}{llccccccccccc}
        \toprule
         & Bound on $R(T)$& 
        \\
  \midrule
  Lower bound & $\frac{1}{20} \min\{ \sqrt{KT}, T \}$
  \\
        \midrule
        Tsallis-INF (IW) &  $4\sqrt{KT} + 1$ & 
        \\
        % Tsallis-INF (RV) &  $2 \sqrt{KT} + 10K \log T  + 16$& 
        Tsallis-INF (RV) &  $2 \sqrt{KT} + O(K \log T)$& 
        \\
        Tsallis-INF (RV-const) &  $ \sqrt{2KT} + 48 K$& 
        \\
        LB-INF &  
        % $6\sqrt{K \log T \min\{ L^*, Q_{\infty}, 8 V_1 \}} + 54 K \log T$  & 
        $3\sqrt{K \log T \min\{ T, 4 L^*, 4 (T - L^*), 4 Q_{\infty}, 32 V_1 \}} + O(K\log T)$  & 
        \\
	\textbf{[Proposed]} & 
        $\sqrt{K \log T \min\{T, 4 L^*, 4(T - L^*) , 4 Q_{\infty} \}} + O(K \log T)$  & 
	\\
        \bottomrule
    \end{tabular}
\end{table*}%

Comparisons of regret upper bounds for adversarial settings
are given
%can be found
in Table~\ref{table:regretbound-adv}.
Tsallis-INF (RV-const) in this table refers to the algorithm in
\citet[Theorem~10]{zimmert2019connections}.
This algorithm achieves the smallest leading constant factor for $O(\sqrt{KT})$-regret upper bounds among those known,
whereas it does not have an $O(\log T)$-regret bound for stochastic settings
unlike the other algorithms in Table~\ref{table:regretbound-adv}.
%\citet[Theorem~10]{zimmert2019connections},
%which achieves the smallest leading constant factor for $O(\sqrt{KT})$-regret upper bounds among those known.
%Please note,
%however,
%that Tsallis-INF (RV-const) does not have $O(\log T)$-regret bounds for stochastic settings,
%unlike the other algorithms in Table~\ref{table:regretbound-adv}.
Worst-case regret bounds for LB-INF by \citet{ito2021parameter} and the proposed algorithm are of $O(\sqrt{KT \log T})$,
which includes an additional factor of $O(\sqrt{\log T})$ that is not found in those of 
Tsallis-INF algorithms.
By way of contrast,
LB-INF and the proposed algorithm have regret bounds dependent on
the cumulative loss $L^* = \sum_{t=1}^T \ell_{i^*}(t)$ and
the total variation $Q_{\infty} = \sum_{t=1}^T \| \ell(t) - \bar{\ell} \|_{\infty}^2$,
which are the most fundamental quantities to measure the difficulties of problem instances.
These bounds,
called data-dependent bounds,
mean that the algorithms work more effectively for environments having certain characteristics,
such as situations in which losses have only small variances or the optimal arm has only a small cumulative loss.
%\inote{update}
\begin{remark}
  Unlike LB-INF,
  the proposed algorithm does not have a bound
  dependent on the path length $V_1 = \sum_{t=1}^{T-1} \| \ell(t) - \ell(t+1) \|_1$.
  However,
  a minor modification to the proposed algorithm can yield a path-length regret bound,
  % a minor modification to the proposed algorithm enable it to also have a path-length regret bound,
  in exchange for a larger constant factor in the regret bounds.
  % this property can also be equipped with
  % by minor changes to the proposed algorithm, in exchange for a larger constant factor in the regret bound.
  More precisely,
  by modifying the algorithm in the way described in Appendix~\ref{sec:path-length},
  % for an input parameter $\eta \in (0, 1 / 2)$,
  we can obtain a regret bound of
  \begin{align}
    \label{eq:bound-pathlength}
    R(T)
    \leq
    \sqrt{
      \frac{K}{1-2\eta}
      \left(
      \min\left\{
        T,
        4 L^*,
        4 (T - L^*),
        4 Q_{\infty},
        \frac{8}{\eta} V_1
      \right\}
      +
      \frac{K}{\eta}
      \right)
      \log T
    }
    +
    O(K \log T)
  \end{align}
  in adversarial settings,
  where $\eta \in (0, 1 / 2 )$ is an input parameter.
  In corrupted stochastic settings,
  this modified algoirthm achieves a regret bound of 
  $
  R(T) \leq  \frac{1}{1-2\eta} \mathcal{R} + 
  O \left( \sqrt{ \frac{1}{1-2\eta} C \mathcal{R} } + K \sqrt{\frac{ \log T }{\eta (1-2\eta)}} \right)
  $,
  % we have $R(T) \leq \mathcal{R} + O(\sqrt{C \mathcal{R}})$,
  where $\mathcal{R}$ is the RHS of \eqref{eq:thm1-bound-sto}.
  More details on the modification and regret bounds can be found in Appendix~\ref{sec:path-length}.
\end{remark}
%In addition,
%LB-INF enjoys a bound dependent on the path length $V_1 = \sum_{t=1}^{T-1} \| \ell(t) - \ell(t+1) \|_1$,
%which is not addressed in the proposed algorithm.
%By making minor changes to our proposed algorithm,
%however,
%we can also add a $V_1$-dependent bound in exchange for a larger constant factor,
%details on which are given in Appendix~\ref{sec:path-length}.

To show regret bounds for stochastic settings (with adversarial corruption),
we use the self-bounding technique \citep{wei2018more,zimmert2021tsallis,gaillard2014second},
which uses a regret {\it lower} bound called a self-bounding constraint
\citep{zimmert2021tsallis}
to derive a regret upper bound.
Previous studies used this technique
to derive a regret bound for stochastic settings
by
a regret upper bound dependent on 
$\{ p(t) \}$
and the self-bounding constraint,
where $\{p(t)\}$ is the distribution of the chosen arm.
In contrast,
we present a regret upper bound dependent on $\{ p(t) \}$ and the squared prediction error of the loss,
which naturally leads to a variance-dependent regret bound.
To our knowledge,
the proposed algorithm is the first to achieve a variance-dependent regret bound
using the follow-the-regularized-leader approach and a self-bounding technique.

%To show regret bounds for stochastic settings (with adversarial corruption),
%we use the self-bounding technique \citep{wei2018more,zimmert2021tsallis,gaillard2014second}.
%Previous studies based on this technique
%have demonstrated regret bounds for stochastic settings
%that were obtained by combining regret upper bounds dependent on 
%$\{ p(t) \}$,
%probability distributions over arms that the algorithm chooses,
%and regret \textit{lower} bounds,
%which are referred to as self-bounding constraints,
%e.g.,
%by \citet{zimmert2021tsallis}.
%In contrast to such existing studies,
%we present regret upper bound dependent on $\{ p(t) \}$ and variances in arms,
%which naturally leads to a variance-dependent regret bound.
%To our knowledge,
%the proposed algorithm is the first to achieve a variance-dependent regret bound
%using the follow-the-regularized-leader approach and a self-bounding technique.

\subsection{Related Work}
Since \citet{bubeck2012best} pioneered the study on BOBW algorithms for the MAB problem,
the scope of their research has been extended to a variety of problem settings,
including the problem of prediction with expert advice \citep{amir2020prediction,mourtada2019optimality,de2014follow,gaillard2014second,luo2015achieving},
combinatorial semi-bandits \citep{zimmert2019beating,ito2021hybrid},
online linear optimization \citep{huang2016following},
linear bandits \citep{lee2021achieving},
online learning with feedback graphs \citep{erez2021best},
and episodic Markov decision processes \citep{jin2020simultaneously,jin2021best}.
Among these,
the studies particularly relevant to this study are those on follow-the-regularized-leader (FTRL)-based algorithms,
e.g.,
by \citet{zimmert2021tsallis,wei2018more,zimmert2019beating,gaillard2014second,erez2021best,jin2021best}.
For such algorithms,
regret bounds for the stochastic settings are shown via the so-called self-bounding technique \citep{zimmert2021tsallis,masoudian2021improved}.
One advantage of this technique is that it naturally leads, as well,
to regret bounds for stochastic settings with adversarial corruption.
Typically,
when we have a regret bound of $O(\mathcal{R})$ in a stochastic setting,
we get a bound of $O(\mathcal{R} + \sqrt{C\mathcal{R}})$ in a corrupted stochastic setting 
as shown,
e.g.,
in existing studies by \citet{zimmert2021tsallis,ito2021hybrid,ito2021optimal,ito2021parameter,jin2021best,erez2021best},
and this study.
For the MAB problem,
\citet{masoudian2021improved} have provided a more refined regret bound for corrupted stochastic settings 
by using a self-bounding technique.
It is not clear at this time,
however,
whether their approach can be applied to our proposed algorithm.
% For such algorithms,
% Among these,
% algorithms based on the follow-the-regularized-leader (FTRL) method
% is particularly relevant to this study

% \inote{Related work on variance-dependent regret bounds, other problem-dependent regret bounds and lower bounds}
% \hnote{I added references in Introduction.}

In the research area of adversarial MAB,
algorithms with various data-dependent regret bounds have been developed.
Typical examples of these are first-order bounds dependent on cumulative loss and
second-order bounds depending on sample variances in losses.
\citet{allenberg2006hannan} have provided an algorithm with a first-order regret bound of $O(\sqrt{K L^* \log K})$,
where $L^*$ is given in Table~\ref{table:parameters}.
Second-order regret bounds
have been shown in some studies,
e.g.,
by \citet{hazan2011better,wei2018more,bubeck2018sparsity}.
We note that the algorithm by \citet{wei2018more} has a bound dependent on the sample variance for the best arm,
which is better than those dependent on $Q_{\infty}$,
as achieved by the proposed algorithm in this paper.
%\inote{update}
Other examples of data-dependent bounds include
path-length bounds and sparsity-dependent bounds,
which have been shown,
e.g.,
by \citet{bubeck2019improved,bubeck2018sparsity,wei2018more}.
% are examples of data-dependent bounds that are not in this study. 

% \begin{itemize}
%   % \item Best of both worlds: \citet{bubeck2012best,seldin2014one,seldin2017improved,auer2016algorithm,zimmert2021tsallis,ito2021parameter,wei2018more}
%   % \item Variance-dependent regret bound, problem-dependent regret bounds: \citet{audibert2007tuning}, Exercises 7.6 and 16.7 in \citet{lattimore2020bandit}, 
%   % \item Robustness to adversarial corruption: \citet{lykouris2018stochastic,gupta2019better,bogunovic2020corruption,hajiesmaili2020adversarial,jun2018adversarial,masoudian2021improved}
%   \item Adversarial data-dependent regret bound \citet{allenberg2006hannan,bubeck2019improved,bubeck2018sparsity,wei2018more,ito2021parameter}
%   % \item Optimistic mirror descent, optimistic follow the regularized leader: \citet{rakhlin2013online,rakhlin2013optimization}
% \end{itemize}

\section{Problem Setup}
\label{sec:setting}
In this section,
we formalize the environments of the problem considered in this paper.
In each round $t \in [T]$,
the environment chooses a loss vector $\ell(t) = (\ell_{1}(t) , \ell_{2}(t), \ldots, \ell_{K}(t))^\top \in [0, 1]^K$ and the player chooses an arm $I(t) \in [K]$.
The player then observes the incurred loss $\ell_{I(t)} (t)$ for the chosen arm.
%The goal of the player is to minimize the cumulative loss $\sum_{t=1}^T \ell_{I(t)}(t)$.
For a sake of simplicity,
we assume that $T\ge \max\{55,K\}$ and is known to the player.\footnote{
  The assumption that $T$ is known in advance can be removed via
  the well-known doubling trick \citep{auer2010ucb,besson2018doubling},
  in exchange for increasing a constant factor in the regret bound. 
}
%Here we assume $T \geq 3$ for simplicity.
The performance of the player is evaluated in terms of the (expected) regret $R(T)$ defined
in \eqref{eq:defRT}.
%  follows:
% \begin{align}
%   R_{i^*}(T) = \E \left[ \sum_{t=1}^T ( \ell_{I(t)}(t) - \ell_{i^*}(t) ) \right],
%   \quad
%   R(T) = \max_{i^* \in [K]} R_{i^*} (T) .
% \end{align}
We consider the following settings for the environment:

%Before the game starts,
%the player is given the number $K$ of arms and the number $T$ of rounds.
%For simplicity,
%we assume $T \geq 3$.
%In each round $t \in [T] = \{ 1, 2, \ldots, T \}$,
%the environment choose a loss vector $\ell(t) = (\ell_{1}(t) , \ell_{2}(t), \ldots, \ell_{K}(t))^\top \in [0, 1]^K$ and the player chooses an arm $I(t) \in [K]$.
%Then, the player observes the incurred loss $\ell_{I(t)} (t)$ for the chosen arm.
%The goal of the player is to minimize the cumulative loss $\sum_{t=1}^T \ell_{I(t)}(t)$.
%The performance of the player is evaluated in terms of the (expected) regret $R(T)$ defined as
%in \eqref{eq:defRT}.
%%  follows:
%% \begin{align}
%%   R_{i^*}(T) = \E \left[ \sum_{t=1}^T ( \ell_{I(t)}(t) - \ell_{i^*}(t) ) \right],
%%   \quad
%%   R(T) = \max_{i^* \in [K]} R_{i^*} (T) .
%% \end{align}
%We consider the following settings for the environment:
\paragraph{Stochastic environment}
In a stochastic setting,
$\ell(t) $ follows an unknown distribution $\cD$ over $[0, 1]^K$,
i.i.d.~for $t = 1, 2, \ldots, T$.
The expectation and the variance of $\ell_i(t)$ of each arm $i$ are denoted by
$\mu_i = \E_{\ell \sim \cD} [ \ell_i ] \in [0,1]$ and
$\sigma_i^2 = \E_{\ell \sim \cD} [ ( \ell_i - \mu_i )^2 ] \in [0,1/4]$, respectively.
We denote the suboptimality gap by
$\Delta_i = \mu_i - \mu_{i^*}$,
where $i^* \in \argmin_{i \in [K]} \mu_i$ represents the optimal arm.

%Let $\mu \in [0,1]^K$ denote the expectation of $\ell(t)$,
%i.e.,
%$\mu_i = \E_{\ell \sim \cD} [ \ell_i ]$.
%Let $\sigma_i^2$ represent the variance of $\ell_i(t)$,
%i.e.,
%$\sigma_i^2 = \E_{\ell \sim \cD} [ ( \ell_i - \mu_i )^2 ]$.
%In this paper,
%we assume that there exists an unique optimal action 
%$i^* \in \argmin_{i \in [K]} \mu_i$ and denote $\mu^* = \mu_{i^*}$.
%We define the \textit{suboptimality gap} $\Delta_i \in (0, 1]$ for each $i \in [K] \setminus \{ i^* \}$ by
%% \begin{align}
%$
%  \Delta_i = \mu_i -  \mu^*
%$.
%% where we denote $\mu^* = \min_{i\in[K]} \mu_{i}$.
%% \hnote{using the optimal arms $i^*$ rather than $i$ is what you meant?}
%% \end{align}
%% $\Delta_i > 0$ for all $i \in [K] \setminus \{ i^* \}$.

\paragraph{Adversarial environment}
In the adversarial setting,
we do not assume any generative models for the loss vector $\ell(t)$,
but
it can be chosen in an adversarial manner.
More precisely,
in each round,
$\ell(t)$ may depend on the history of loss vectors and
chosen arms $\{ (\ell(s), I(s)) \}_{s=1}^{t-1}$.
%arms $\{ (\ell(s), I(s)) \}_{s=1}^{t-1}$ that have been chosen until reaching that round.
%\inote{update}

\paragraph{Stochastic environment with adversarial corruption}
A stochastic setting with adversarial corruption,
or a corrupted stochastic setting,
is an intermediate setting between
a stochastic setting and an adversarial setting.
In such a setting,
a \textit{temporary loss} $\ell'(t) \in [0, 1]^K$ is generated from an unknown distribution $\cD$,
and the adversary then corrupts $\ell'(t)$ to determine $\ell(t)$.
We define the \textit{corruption level} $C \geq 0$ by
% \begin{align}
$
  C = \E \left[ \sum_{t=1}^T \left\|  \ell'(t) - \ell(t)  \right\|_{\infty} \right]
$.
% \end{align}
Note that we have $C \leq T$
from this definition of $C$ and the assumption of $\ell(t), \ell'(t) \in [0, 1]^K$.
If $C = 0$,
this setting will be equivalent to the stochastic setting with the distribution $\cD$.
If there is no restriction on $C$,
the setting will be equivalent to the adversarial setting,
as $\ell(t)$ can be arbitrarily chosen.
For each $i \in [K]$,
we define $\mu_i$, $\sigma_i^2$, and $\Delta_i$ in terms of the distribution $\cD$ that generates $\ell'(t)$,
similarly to that done in stochastic settings without adversarial corruption.
%\inote{update}

In a corrupted stochastic setting,
regret is defined in terms of the loss $\ell(t)$ after corruption,
rather than in terms of the temporary loss $\ell'(t)$ before corruption,
as given in \eqref{eq:defRT}.
We note that some existing studies consider an alternative regret notion $R'(T)$ that is defined in terms of $\ell'(t)$.
A regret bound on $R(T)$ immediately leads to a bound on $R'(T)$.
In fact,
as $|R(T) - R'(T)| = O(C)$ follows from the definition,
a regret bound of $R(T) \leq \mathcal{R} + O(\sqrt{C\mathcal{R}})$ leads to
$R'(T) \leq \mathcal{R} + O(\sqrt{C \mathcal{R} } + C)
\leq O(\mathcal{R} + C)$,
where the second inequality follows from the AMGM inequality.
% \inote{The regret is defined in terms of $\ell(t)$ in this work. In some existing work, the regret is defiend with $\ell'(t)$.}

% \paragraph{Adversarial regime with a self-bounding constraint}
% \begin{align}
%   R(T)
%   \geq
%   \sum_{i=1}^T \linner \Delta, p(t) \rinner - C
% \end{align}

\section{Proposed Algorithm}
The proposed algorithm is in the \textit{optimistic follow-the-regularized-leader (OFTRL)} framework,
which is also used in some existing studies \citep{wei2018more,ito2021parameter}.
This algorithm computes a probability vector $p(t) = (p_1(t), p_2(t), \ldots, p_K(t))^\top \in \simplex{K} = \{ p \in [0, 1]^K \mid \| p \|_1 = 1 \}$ 
%The proposed algorithm computes a probability vector $p(t) = (p_1(t), p_2(t), \ldots, p_K(t))^\top \in \simplex{K} = \{ p \in [0, 1]^K \mid \| p \|_1 = 1 \}$ 
%using the \textit{optimistic follow the regularized leader (OFTRL)} framework,
%similarly to
%that done in some existing studies \citep{wei2018more,ito2021parameter}
and then chooses $I(t) \in [K]$ following $p(t)$
(i.e., so that $\prob [I(t) = i | p(t)] = p_i(t)$).
The OFTRL update rule
is expressed as
%follows:
\begin{align}
  \label{eq:defOFTRL}
  p(t) \in \argmin_{p \in \simplex{K}} \left\{
    \linner
    m(t)
    +
    \sum_{s=1}^{t-1}
    \hat{\ell}(s),
    p
    \rinner
    +
    \psi_t (p) 
  \right\},
\end{align}
where $m(t) \in [0, 1]^K$ corresponds to an optimistic prediction of the true loss vector $\ell(t)$,
$\hat{\ell}(t) \in \re^K$ is an unbiased estimator of $\ell(t)$,
and $\psi_t(p)$ is a regularizer
%function,
function that is
a convex function over $\simplex{K}$.
We choose the optimistic prediction to be the empirical mean of
the
%values
losses so far,
i.e.,
we define $m(t) = (m_1(t), \ldots, m_K(t))^\top \in [0, 1]^K$ by
\begin{align}
  m_i(t) = \frac{1}{1 + N_i(t-1) } \left( \frac{1}{2} + 
  \sum_{s=1}^{t-1}\mathbf{1}[I(s) = i] \ell_i(s) \right)  ,
  % \in
  % \argmin_{m \in \re}
  % \left\{
  %   \left( \frac{1}{2} - m \right)^2
  %   +
  %   \sum_{s=1}^{t-1}\mathbf{1}[I(s) = i] (\ell_i(s) - m)^2
  % \right\},
  \label{eq:defmi}
\end{align}
where $N_i(t)$ is the number of times the arm $i$ is chosen until
%hhonda: reaching
the $t$-th round,
i.e.,
$N_i(t) = |\{ s \in [t] \mid I(s) = i \}|$.
We use an unbiased estimator $\hat{\ell}(t) = ( \hat{\ell}_1(t), \ldots, \hat{\ell}_K(t) )^\top \in \re^K$
given as follows:
%The unbiased estimator $\hat{\ell}(t) = ( \hat{\ell}_1(t), \ldots, \hat{\ell}_K(t) )^\top \in \re^K$
%is given as follows:
\begin{align}
  \label{eq:defellhat}
  \hat{\ell}_i(t) = m_i(t) + \frac{\mathbf{1}[ I(t) = i ]}{ p_i(t) } (\ell_i(t) - m_i(t)) .
\end{align}
% We can easily see that 
%hhonda
This is indeed an unbiased estimator of ${\ell}(t)$
since
%as we have
% i.e.,
$
\E[ \hat{\ell}_i(t) | p(t) ] =
m_i(t) + \frac{p_i(t)}{p_i(t)} (\ell_i(t) - m_i(t))
=
\ell_i(t)
$.
The optimistic prediction $m(t)$ in \eqref{eq:defellhat} has a role in reducing the variance of $\hat{\ell}(t)$;
the better $m(t)$ predicts $\ell(t)$,
the smaller the variance in $\hat{\ell}(t)$.
% \inote{An intuiton should be explained.}
% \item $\psi_t: \simplex{K} \rightarrow \re$: convex regularizer function defined as
The regularizer function in this work is given as
\begin{align}
  \label{eq:defpsi}
  \psi_t(p) 
  = \sum_{i=1}^K \beta_i(t) \phi (p_i) ,
  \quad
  \mbox{where}
  \quad
  \phi(x) = 
  x - 1 - \log x + \gamma \cdot ( x + ( 1 - x) \log (1 - x ) )
  \quad
\end{align}
with $\gamma = \log T$ and the regularization parameters $\beta_i(t) \geq 0$.
% \inote{An intuiton should be explained.}
Our regularizer in \eqref{eq:defpsi} consists of parts of the logarithmic barrier ($- \log p_i$),
the entropy term ($(1-p_i)\log (1-p_i)$) for the complement of $p_i$,
and affine functions for minor adjustments.\footnote{
  Affine parts have been added to ensure 
  $0 \leq \phi(x) \leq \log T$
  % for $x \in (0, 1]$ and $\phi(x) \leq \gamma$ for
  for $x \in (1/T, 1]$,
  which makes the analysis simpler and leads to smaller constant factors than in existing studies,
  e.g.,
  by \citet{wei2018more,ito2021hybrid,ito2021parameter}.
}
Regularization parameters $\beta_i(t) $ are updated as
% \item $\beta_i(t) \in \re$: regularization parameter defined as
\begin{align}
  &
  \beta_i(t) = \sqrt{(1 + \epsilon)^2 + \frac{1}{ \gamma } \sum_{s=1}^{t-1} \alpha_i(s) } ,\nonumber\\
&  \phantom{wwwwwwwww}
  \mbox{where} \quad
  \alpha_i(t) 
  =
  \mathbf{1}[I(t)=i]( \ell_i(t) - m_i(t) )^2 \min \left\{ 1 ,\frac{2(1 - p_i(t))}{p_i(t)^2 \gamma} \right\} 
  \label{eq:defalphabeta}
\end{align}
%\begin{align}
%  % \nonumber
%  &
%  \beta_i(t) = \sqrt{(1 + \epsilon)^2 + \frac{1}{ \gamma } \sum_{s=1}^{t-1} \alpha_i(s) } ,
%  % \quad \mbox{ where }
%  % \\ 
%  % &
%  \quad
%  \mbox{where} \quad
%  \alpha_i(t) 
%  =
%  \mathbf{1}[I(t)=i]( \ell_i(t) - m_i(t) )^2 \min \left\{ 1 ,\frac{2(1 - p_i(t))}{p_i(t)^2 \gamma} \right\} 
%  % = \left\{ 
%  %   \begin{array}{ll}
%  %   ( \ell_i(t) - m_i(t) )^2 \min \left\{ 1 ,\frac{2(1 - p_i(t))}{p_i(t)^2 \gamma}  \right\}& (i = I(t))\\
%  %   0 & (i \neq I(t))
%  %   \end{array}
%  % \right. 
%  \label{eq:defalphabeta}
%\end{align}
and $\epsilon \in (0, 1/2]$ is an input parameter.
% As we will see in Section~\ref{sec:OFTRL},
The parameter $\alpha_i(t)$ in \eqref{eq:defalphabeta} is 
defined so that the \textit{stability term}
(discussed in Section~\ref{sec:OFTRL})
in the regret can be bounded by an $O( \sum_{i=1}^K \alpha_i(t) / \beta_i(t))$-term.
The factor $(1-p_i(t))$ in $\alpha_i(t)$ is due to the entropy portion 
of the regularizer function defined in \eqref{eq:defpsi},
which allows us to remove the influence of terms related to the optimal arm in our regret analysis,
as we will see in Section~\ref{sec:analysis-sto}.
%\inote{update}
For each suboptimal arm $i \in [K] \setminus \{ i^* \}$,
we have $\alpha_i(t) = \mathbf{1}[I(t) = i](\ell_i(t) - m_i(t))^2$ as $p_i(t)$ is far from $1$ in most cases.
This means that $\sum_{s=1}^{t-1} \alpha_i(s)$,
which appears in the definition of the regularization parameter $\beta_i(t)$,
corresponds to the cumulative error in predicting $\ell_i(s)$ with $m_i(s)$,
which can be linked to the variance of loss vectors if $m_i(t)$ is given by \eqref{eq:defmi}.
This connection will be used to show the variance-dependent regret bounds in Section~\ref{sec:analysis-sto}.

\section{Regret Analysis for the Proposed Algorithm} 
This section provides a regret analysis for the proposed algorithm
to show bounds in Theorem~\ref{thm:main}.

\subsection{Overview of the analysis}
In the following subsections,
we first show a regret bound dependent on $\{ \alpha_i(t) \}$ expressed as
\begin{align}
  \label{eq:approxboundOFTRL}
  R(T) = O \left(
    \sum_{i=1}^K
    \E
    \left[
    \sqrt{ \sum_{t=1}^T \alpha_i(t)  \log T}
    \right]
    +
    K \log T
  \right),
\end{align}
which corresponds to Lemma~\ref{lem:boundRTbeta} below.
%\inote{update}
To show Lemma~\ref{lem:boundRTbeta} and \eqref{eq:approxboundOFTRL},
we use a standard analysis technique for OFTRL 
to decompose the regret into the \textit{penalty terms} and the \textit{stability terms}.
We give an upper bound on each of them in terms of $\{ \alpha_i(t) \}$,
with the aid of a specific structure of the regularizer function given in \eqref{eq:defpsi}
and the construction of the regularization parameters $\{ \beta_i(t) \}$ in \eqref{eq:defalphabeta}.
% \inote{update}
% We show this 
% which can be proved via a standard analysis for OFTRL.
% \inote{
%   Add more details:
%   How to use the structures of regularizers
% }

In a stochastic setting,
we evaluate the terms in \eqref{eq:approxboundOFTRL}
separately for the optimal arm $i^*$ and
for the other suboptimal arms $i \in [K] \setminus \{ i^* \}$.
For suboptimal arms,
as $\alpha_i(t) \leq \mathbf{1}[I(t) = i] (\ell_i(t) - m_i(t))^2$ follows from the definition
and $m_i(t)$ converges to $\mu_i$,
each term in \eqref{eq:approxboundOFTRL} can be bounded as
$
  \E \left[ \sum_{t=1}^T \alpha_i(t) \right] 
  \lesssim 
  \E \left[ \sum_{t=1}^T \mathbf{1}[ I(t) = i ] (\ell_i(t) - \mu_i)^2 \right]
  =
  \sigma_i^2 P_i
$,
where $P_i \in [0, T]$ is defined as the expected number of times the arm $i$ is chosen,
i.e.,
\begin{align}
  \label{eq:defPi}
  P_i 
  =
  \E \left[ N_i(T) \right]
  =
  \E \left[
    \sum_{t=1}^T \mathbf{1}[ I(t) = i ] 
  \right]
  =
  \E \left[
    \sum_{t=1}^T p_i(t)
  \right].
\end{align}
For the optimal arm $i^*$,
we give a bound on $\alpha_{i^*}(t)$ as
$
  \E\left[\sum_{t=1}^T\alpha_{i^*}(t)\right] \leq \E \left[ \sum_{t=1}^T (1 - p_{i^*}(t)) \right]/\sqrt{\gamma}
  = \sum_{i \neq i^*} P_i /\sqrt{\gamma} 
$,
which is asymptotically negligible
due to the condition of $\gamma = \log T$.
Consequently,
we obtain a 
$\{ \sigma_i^2 P_i \}$-dependent regret bound of
$
  R(T) = 
  O \left( \sum_{i \neq i^*}
  \sqrt{\sigma_i^2 P_i \log T}
  +
  K \log T
  \right)
$,
which implies an $O(\log T)$-regret bound with the aid of the self-bounding technique.
We use the fact that $R(T)$ can be expressed as $R(T) = \sum_{i\neq i^*} \Delta_i P_i $ in a stochastic setting.
By combining this with the above $\{ \sigma_i^2 P_i \}$-dependent regret bound,
we obtain
% $ 
\begin{align}
  \nonumber
  R(T)
  &
  = 2 R(T) - R(T) 
  \\
  &
  =
  O \left( \sum_{i \neq i^*}
  \left( \sqrt{\sigma_i^2 P_i \log T} - \Delta_i P_i \right)
  +
  K \log T
  \right)
  =
  O \left( \sum_{i \neq i^*}
  \frac{\sigma_i^2 \log T}{\Delta_i} 
  +
  K \log T
  \right)
  \label{eq:approxboundsto}
\end{align}
% $,
where the last equality follows from $2\sqrt{ax} - bx = - (\sqrt{a/b} - \sqrt{bx})^2 + a/b \leq a/b$ that holds for any $a, x \geq 0$, and
$b>0$.

For the adversarial setting,
we evaluate the RHS of \eqref{eq:approxboundOFTRL} via inequalities
of
$
  \sum_{i=1}^K \sqrt{ \sum_{t=1}^T \alpha_i(t) }
  \leq
  \sqrt{ K\sum_{i=1}^K \sum_{t=1}^T \alpha_i(t) }
  \leq
  \sqrt{ K\sum_{t=1}^T ( \ell_{I(t) }(t) - m_{I(t)}(t) )^2 }
$,
where we have used the Cauchy-Schwarz inequality and
the second inequality follows from the definition of $\alpha_i(t)$ given in \eqref{eq:defalphabeta}.
Further,
for $m(t)$ defined by \eqref{eq:defmi},
we have
$ \sum_{t=1}^T ( \ell_{I(t)}(t) - m_{I(t)}(t) )^2 \leq 
\sum_{t=1}^T ( \ell_{I(t)}(t) - m_{I(t)}^* )^2 + O(K \log T)
$ for any $m^* \in [0, 1]^K$,
which follows from a standard regret analysis for online linear regression with square loss
(see, e.g., Chapter 11 of the book by \citet{cesa2006prediction}).
Combining these with \eqref{eq:approxboundOFTRL},
we obtain
$R(T) = O\left( \sqrt{ \E [ K \sum_{t=1}^T ( \ell_{I(t)}(t) - m_{I(t)}^* )^2  ] \log T} + K \log T \right)$
for any $m^*$,
which implies a regret bound in the form of \eqref{eq:thm1-bound-adv}.

\subsection{Analysis for OFTRL}
\label{sec:OFTRL}
% \hnote{The relation with the last section is a little unclear at the first sight.
% Do we need e.g., ``Now we investigate the sketch of the analysis given above in more details.''?
% }
%\inote{Update:}
Now we introduce the sketch of the analysis given above in more details.
Let us start here with the standard
%honda: removed just to save the space
%regret
analysis for OFTRL. 
Let $D_t$ denote the Bregman divergence associated with $\psi_t$,
i.e.,
% \begin{align}
  $
  D_t(q, p) =
  \psi_t(q) - \psi_t(p) - \linner \nabla \psi_t(p), q - p \rinner
  $.
% \end{align}
The regret for OFTRL can then be bounded as follows:
\begin{lemma}
  \label{lem:boundOFTRL}
  If $p(t)$ is given by \eqref{eq:defOFTRL},
  for any $p^* \in \simplex{K} \cap \re^K_{>0}$,
  we have
  \begin{align}
    \nonumber
    \sum_{t=1}^T \linner 
      \hat{\ell}(t), p(t) - p^*
    \rinner
    &
    \leq
    % \under{penalty~term}
    % { \underline{
    % \underbrace{
    \uline{
      \psi_{T+1}(p^*)
      -
      \psi_1( q(1) )
      +
      \sum_{t=1}^T
      (\psi_{t}(q(t+1))
      - \psi_{t+1}(q(t+1)) 
      )
    }
    % }_{\mbox{penalty terms}}
    % } }
    \\
    &
    \quad
    +
    % \underset{stability~term}
    % { \underline{
    % \underbrace{
    \uwave{
    \sum_{t=1}^T
    \left(
      \linner \hat{\ell}(t) - m(t), p(t) - q(t+1) \rinner
      - D_t(q(t+1), p(t)) 
  %     + D_{t}(p(t), q(t))
    \right)
    }
    % }_{\mbox{stability terms}}
    % } },
    \label{eq:boundOFTRL}
  \end{align}
  where 
  $q(t)$ is defined as
  $
    q(t) \in \argmin_{p \in \simplex{K}} \left\{ 
      \linner \sum_{s=1}^{t-1} \hat{\ell}(s), p \rinner + \psi_t(p) 
    \right\}
  $.
\end{lemma}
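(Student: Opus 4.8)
\textit{Proof sketch.}
The plan is to run the now-standard ``be-the-leader'' analysis of optimistic FTRL, organised so that the Bregman slack created by the optimistic correction step is exactly the slack saved in the be-the-leader telescoping. Write $L(t) = \sum_{s=1}^{t-1}\hat{\ell}(s)$ and $\Phi_t(p) = \linner L(t), p \rinner + \psi_t(p)$, so that by \eqref{eq:defOFTRL} we have $q(t) = \argmin_{p\in\simplex{K}}\Phi_t(p)$ and $p(t) = \argmin_{p\in\simplex{K}}\{\linner m(t), p\rinner + \Phi_t(p)\}$. Because each $\psi_t$ contains the barrier term $\beta_i(t)(-\log p_i)$ with $\beta_i(t) \ge 1+\epsilon > 0$, each objective is strictly convex and diverges at the boundary of $\simplex{K}$, so the minima are attained uniquely in the relative interior; hence $\nabla\Phi_t(q(t))$ and the gradient of $p \mapsto \linner m(t), p\rinner + \Phi_t(p)$ at $p(t)$ are proportional to $\mathbf{1}$, and therefore orthogonal to $p - q(t)$ and to $p - p(t)$ for every $p\in\simplex{K}$. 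Combined with the fact that a linear term contributes nothing to a Bregman divergence, this yields the exact identities, valid for all $p \in \simplex{K}$,
\begin{align*}
  \Phi_t(p) &= \Phi_t(q(t)) + D_t(p, q(t)),\\
  \linner m(t), p\rinner + \Phi_t(p) &= \linner m(t), p(t)\rinner + \Phi_t(p(t)) + D_t(p, p(t)),
\end{align*}
which are the only structural facts the proof uses.

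First I would split each summand of the left-hand side of \eqref{eq:boundOFTRL} as $\linner \hat{\ell}(t), p(t) - p^* \rinner = \linner \hat{\ell}(t), q(t+1) - p^* \rinner + \linner \hat{\ell}(t), p(t) - q(t+1)\rinner$ and handle the two sums separately. For the first (the ``follow the leader in hindsight'' part), combining the pure-algebra identity $\Phi_{t+1}(q(t+1)) = \Phi_t(q(t+1)) + \linner \hat{\ell}(t), q(t+1)\rinner + \psi_{t+1}(q(t+1)) - \psi_t(q(t+1))$ with the first exact identity above at $p=q(t+1)$, i.e.\ $\Phi_t(q(t+1)) = \Phi_t(q(t)) + D_t(q(t+1),q(t))$, gives $\linner \hat{\ell}(t), q(t+1)\rinner = \Phi_{t+1}(q(t+1)) - \Phi_t(q(t)) - D_t(q(t+1),q(t)) + \psi_t(q(t+1)) - \psi_{t+1}(q(t+1))$. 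Summing over $t$, the differences $\Phi_{t+1}(q(t+1)) - \Phi_t(q(t))$ telescope to $\Phi_{T+1}(q(T+1)) - \psi_1(q(1))$, and bounding $\Phi_{T+1}(q(T+1)) \le \Phi_{T+1}(p^*)$ (the defining property of $q(T+1)$) yields
\begin{align*}
  \sum_{t=1}^T \linner \hat{\ell}(t), q(t+1) - p^* \rinner
  &\le
  \psi_{T+1}(p^*) - \psi_1(q(1))
  + \sum_{t=1}^T\bigl(\psi_t(q(t+1)) - \psi_{t+1}(q(t+1))\bigr)\\
  &\quad - \sum_{t=1}^T D_t(q(t+1), q(t)),
\end{align*}
which is exactly the underlined penalty term of \eqref{eq:boundOFTRL} together with an extra nonpositive slack $-\sum_t D_t(q(t+1), q(t))$ that I retain for the moment.

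For the second sum I would peel off $m(t)$ via $\linner \hat{\ell}(t), p(t) - q(t+1)\rinner = \linner \hat{\ell}(t) - m(t), p(t) - q(t+1)\rinner + \linner m(t), p(t) - q(t+1)\rinner$ and bound the last inner product using the optimality of $p(t)$. Applying the second exact identity above at $p = q(t+1)$ gives $D_t(q(t+1), p(t)) = \linner m(t) + L(t), q(t+1) - p(t)\rinner + \psi_t(q(t+1)) - \psi_t(p(t))$, hence $\linner m(t), p(t) - q(t+1)\rinner = -D_t(q(t+1), p(t)) + \bigl(\Phi_t(q(t+1)) - \Phi_t(p(t))\bigr)$; and since $\Phi_t(p(t)) \ge \Phi_t(q(t))$ while $\Phi_t(q(t+1)) = \Phi_t(q(t)) + D_t(q(t+1), q(t))$, the parenthesised quantity is at most $D_t(q(t+1), q(t))$, so
\begin{align*}
  \linner \hat{\ell}(t), p(t) - q(t+1)\rinner
  &\le
  \linner \hat{\ell}(t) - m(t), p(t) - q(t+1)\rinner - D_t(q(t+1), p(t)) + D_t(q(t+1), q(t)).
\end{align*}
Adding this (summed over $t$) to the previous display, the $+\sum_t D_t(q(t+1), q(t))$ here cancels the $-\sum_t D_t(q(t+1), q(t))$ there, leaving precisely the bound \eqref{eq:boundOFTRL}.

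The only point needing care — the one I would flag as the main obstacle — is this exact matching: the Bregman term $D_t(q(t+1), q(t))$ produced by the optimistic step must be cancelled exactly by the slack saved in the be-the-leader telescoping, which forces me to carry that telescoping as an \emph{equality} (up to the harmless step $\Phi_{T+1}(q(T+1)) \le \Phi_{T+1}(p^*)$) rather than as the looser inequality that would suffice for plain FTRL. It is precisely the interiority of all iterates, guaranteed by the logarithmic-barrier component of the regularizer in \eqref{eq:defpsi}, that turns every first-order optimality condition into an equality orthogonal to $\mathbf{1}$ and thereby makes the two identities of the first paragraph exact. The time dependence of $\psi_t$ costs nothing beyond bookkeeping of the $\psi_t(q(t+1)) - \psi_{t+1}(q(t+1))$ differences, which reassemble into the stated penalty term.
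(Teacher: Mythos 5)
Your proof is correct and yields exactly the claimed bound; it differs from the paper's only in how the same ingredients are organized. The paper proves the lemma by a single backward-chained inequality: starting from $\Phi_{T+1}(p^*)\ge \Phi_{T+1}(q(T+1))+D_{T+1}(p^*,q(T+1))$ (with $\Phi_t(p)=\linner\sum_{s<t}\hat\ell(s),p\rinner+\psi_t(p)$), it unrolls one round at a time using the optimality of $p(t)$ (keeping $D_t(q(t+1),p(t))$) and of $q(t)$ (dropping the associated Bregman term), so the quantity $D_t(q(t+1),q(t))$ never appears. You instead split $p(t)-p^*=(q(t+1)-p^*)+(p(t)-q(t+1))$, run the be-the-leader telescoping on the first piece, and cancel the resulting $-\sum_t D_t(q(t+1),q(t))$ against the $+\sum_t D_t(q(t+1),q(t))$ produced when bounding $\linner m(t),p(t)-q(t+1)\rinner$. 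The trade-off is minor but real: the paper only ever uses the one-sided three-point inequality $\Phi_t(p)\ge\Phi_t(q(t))+D_t(p,q(t))$, which holds for any constrained minimizer, whereas your step $\linner m(t),p(t)-q(t+1)\rinner\le D_t(q(t+1),q(t))-D_t(q(t+1),p(t))$ needs $\Phi_t(q(t+1))\le\Phi_t(q(t))+D_t(q(t+1),q(t))$, i.e.\ the \emph{equality} version of the three-point identity, which requires the iterates to lie in the relative interior of $\simplex{K}$. You correctly flag this and justify it by the logarithmic-barrier component of $\psi_t$ (with $\beta_i(t)\ge 1+\epsilon>0$), so the argument goes through; it is just slightly less robust than the paper's in that it would not survive removing the barrier.
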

All omitted proofs are given in the appendix.
We refer to the parts
% We call the parts 
\uline{underscored with a straight line} and
\uwave{underscored with a wavy line} in \eqref{eq:boundOFTRL}
{\textit{penalty terms}} and {\textit{stability terms}},
respectively.
In our analysis,
we use this lemma with $p^* = (1 - \frac{K}{T}) \chi_{i^*} + \frac{1}{T}\mathbf{1}$,
%\footnote{
%  We assume $T\geq K$ for simplicity.
%}
where $\chi_{i^*} \in \{ 0 , 1 \}^K$ is the indicator vector associated with the optimal arm $i^*$.
Each of the
{{penalty terms}} and {{stability terms}}
can then be bounded as follows.

\paragraph{\uline{Penalty terms}}
From the definition of $\phi(x)$ in \eqref{eq:defpsi},
we have $\phi(x) \geq 0$ for any $x \in (0, 1]$
and $\phi(x) \leq \log T = \gamma$ for any $x \in (1/T, 1 ]$.
%\inote{Added: [$= \gamma$]}
Hence,
as the regularizer $\psi_t(p)$ is defined as 
$\sum_{i=1}^K \beta_i(t) \phi(p_i)$,
the first part of the penalty terms are bounded as
$\psi_{T+1}(p^*) - \psi_1(q(1)) \leq \psi_{T+1}(p^*) \leq \gamma \sum_{i=1}^K \beta_i(T+1) $.
Further,
as $\beta_i(t)$ given in \eqref{eq:defalphabeta}
is monotonically non-decreasing in $t$,
the remaining parts in the penalty terms is non-positive.
Hence,
the penalty terms can be bounded by $\gamma \sum_{i=1}^K \beta_i(T+1)$.
% where we have used $\gamma = \log T$.
% \begin{align}
%   \psi_{t} (p^*)
%   =
%   \sum_{i=1}^K \beta_i(t) \phi(p^*_i)
%   \leq
%   \sum_{i=1}^K \beta_i(t) \max_{x \in [1/T, 1]}  \phi(x)
%   \leq
%   \sum_{i=1}^K \beta_i(t) 
%   \max \{ \phi(1/T), \phi(1) \},
% \end{align}
% As $\phi$ defined in \eqref{eq:defpsi} satisfies
% $0 \leq \phi(x) \leq \log T$ for any $x \in (0, 1/T)$,
% the penalty terms in \eqref{eq:boundOFTRL} can be bounded by
% $
%   \sum_{i=1}^K (\beta_i(T+1) - \beta_i(1)) \log T
%   =
%   O( \gamma \sum_{i=1}^K \sqrt{ \beta_0^2 + \frac{1}{\gamma}\sum_{t=1}^T  } )
% $.

\paragraph{\uwave{Stability terms}}
From the logarithmic-barrier part and the entropy part of the regularizer function,
the stability terms for the $t$-th round can be bounded as
$O\left( \frac{(\ell_{i}(t) - m_{i}(t) )^2}{ \beta_i(t) } \right)$ and
$O\left( \frac{(\ell_{i}(t) - m_{i}(t) )^2 (1-p_{i})}{\beta_i(t) p_i(t)^2 \gamma} \right)$,
respectively,
with $i = I(t)$.
This means that the stability terms for the $t$-th round can be
bounded by an $O(\sum_{i=1}^K \alpha_i(t)/\beta_i(t))$-term with $\alpha_i(t)$ defined in \eqref{eq:defalphabeta}.
By taking the sum of this for all rounds $t \in [T]$ and using the definition of $\beta_i(t)$ in \eqref{eq:defalphabeta},
we can show
$
  O(\sum_{t=1}^T \sum_{i=1}^K \alpha_i(t) / \beta_i(t))
  \leq
  O(\gamma \sum_{i=1}^K \beta_i(T+1) )
$.

By formalizing the above discussion,
we can obtain the following regret bound:
% By further refining the above analysis,
% we can obtain the following regret bound:
% % From this lemma and the construction of $\psi_t$,
% % we can show that
% \hnote{
% By formalizing the above discussion,
% we can obtain the following regret bound:
% }
\begin{lemma}
  \label{lem:boundRTbeta}
  For any environment,
  the regret for the proposed algorithm is bounded as
  \begin{align}
    R(T)
    \leq
    \gamma
    \sum_{i=1}^K
    \E \left[
      % 2 \sqrt{\gamma \sum_{t=1}^T \alpha_i(t)}
      % +
      % \gamma
      % \log \left(
      %   1 + \frac{1}{\gamma \beta_0^2 - 1 } \sum_{t=1}^T \alpha_i(t)
      % \right)
      2  \beta_i(T+1)
      -
      \beta_i(1)
      % \sqrt{\tilde{\beta}_0^2 + \frac{1}{\gamma} \sum_{t=1}^T \alpha_i(t)}
      % -
      % \tilde{\beta}_0
      +
      2
      \delta \log  \frac{\beta_{i}(T+1)}{\beta_i(1)}
      % \left(
      %   1 + \frac{1}{\gamma \tilde{\beta}_0^2} \sum_{t=1}^T \alpha_i(t)
      % \right)
    \right]
    +
    2K ( 1 + \delta ),
  \end{align}
  % where we define $\tilde{\beta}_0 = \sqrt{\beta_0^2 - \frac{1}{\gamma}}$.
  where $\delta > 0$ is defined by
  $\delta = (1+ \epsilon)^3 \log \frac{1+\epsilon}{\epsilon} - (1+\epsilon)^2 - \frac{1+\epsilon}{2} =
  O(\log \frac{1}{\epsilon})$.
\end{lemma}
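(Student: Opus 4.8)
The plan is to start from Lemma~\ref{lem:boundOFTRL} applied with $p^* = (1 - K/T)\chi_{i^*} + (1/T)\mathbf{1}$, and to bound the two groups of terms on the right-hand side of \eqref{eq:boundOFTRL} as sketched in Section~\ref{sec:OFTRL}, then add the regret-to-surrogate conversion term $\sum_t \langle \hat\ell(t), p^* - \chi_{i^*}\rangle$, whose expectation is $O(K/T \cdot T) = O(K)$ since $\hat\ell(t)$ is an unbiased estimator of $\ell(t)\in[0,1]^K$ and $\|p^*-\chi_{i^*}\|_1 \le 2K/T$. For the penalty terms, I would use the stated facts $0\le \phi(x)\le\gamma$ on $(1/T,1]$ and $\phi(p^*_i)\ge 0$; since every coordinate of $p^*$ is $\ge 1/T$, we get $\psi_{T+1}(p^*) \le \gamma\sum_i \beta_i(T+1)$, and $-\psi_1(q(1))\le 0$. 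The telescoping remainder $\sum_t(\psi_t - \psi_{t+1})(q(t+1)) = \sum_t \sum_i (\beta_i(t)-\beta_i(t+1))\phi(q_i(t+1))$ is $\le 0$ because $\beta_i(t)$ is non-decreasing in $t$ and $\phi\ge 0$. So the penalty contribution is at most $\gamma\sum_i\beta_i(T+1)$.

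For the stability terms, the core step is a per-round bound: for each $t$ with $i=I(t)$, I would show
\begin{align*}
  \langle \hat\ell(t)-m(t), p(t)-q(t+1)\rangle - D_t(q(t+1),p(t)) = O\!\left(\frac{\alpha_i(t)}{\beta_i(t)}\right).
\end{align*}
This is where most of the work sits. The argument splits the Bregman divergence along the two pieces of $\phi$: the $-\log x$ piece contributes strong convexity of order $\beta_i(t)/p_i(t)^2$ locally, which absorbs the linear term $(\hat\ell_i(t)-m_i(t))(p_i(t)-q_i(t+1))$ and leaves a residual $O((\ell_i(t)-m_i(t))^2/\beta_i(t))$ after noting $\hat\ell_i(t)-m_i(t) = (\ell_i(t)-m_i(t))/p_i(t)$; the $\gamma(x+(1-x)\log(1-x))$ piece contributes curvature $\gamma\beta_i(t)/(1-p_i(t))$ near the relevant range, yielding the complementary residual $O((\ell_i(t)-m_i(t))^2(1-p_i(t))/(p_i(t)^2\gamma\beta_i(t)))$. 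Taking the minimum of the two bounds recovers exactly the $\min\{1,\,2(1-p_i(t))/(p_i(t)^2\gamma)\}$ factor in the definition of $\alpha_i(t)$, so the per-round stability term is $O(\alpha_i(t)/\beta_i(t))$; here one must be careful that the local-norm / Taylor argument is valid, i.e. that $q(t+1)$ stays in a region where these curvature lower bounds hold, which is the standard delicate point and where the affine adjustments in $\phi$ and the choice $\gamma=\log T$ are used to keep constants clean.

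Finally I would sum over $t$. Using $\beta_i(t)^2 = (1+\epsilon)^2 + \gamma^{-1}\sum_{s<t}\alpha_i(s)$, so that $\beta_i(t+1)^2 - \beta_i(t)^2 = \alpha_i(t)/\gamma$, a standard telescoping/integral comparison gives
\begin{align*}
  \sum_{t=1}^T \frac{\alpha_i(t)}{\beta_i(t)}
  = \gamma\sum_{t=1}^T \frac{\beta_i(t+1)^2-\beta_i(t)^2}{\beta_i(t)}
  \le \gamma\left(2(\beta_i(T+1)-\beta_i(1)) + \kappa\log\frac{\beta_i(T+1)}{\beta_i(1)}\right)
\end{align*}
for an absolute-type constant $\kappa$, where the logarithmic term comes from the second-order correction $\beta_i(t+1)-\beta_i(t) = (\beta_i(t+1)^2-\beta_i(t)^2)/(\beta_i(t+1)+\beta_i(t))$ and bounding the ratio $\beta_i(t)/\beta_i(t+1)$; the constant in front of the log is where the $\delta = O(\log(1/\epsilon))$ dependence enters, since the single-round jump $\alpha_i(t)\le 2(1-p_i(t))/(p_i(t)^2\gamma) \le 2/(\gamma p_i(t)^2)$ together with the FTRL lower bound $p_i(t) \gtrsim \epsilon^{\Theta(1)}$ (guaranteed by the $\log$-barrier strength $\beta_i(t)\ge 1+\epsilon$) controls how large $\beta_i(t+1)/\beta_i(t)$ can be in one step. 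Collecting the penalty bound $\gamma\sum_i\beta_i(T+1)$, the stability bound $\gamma\sum_i(2(\beta_i(T+1)-\beta_i(1)) + O(\log(1/\epsilon))\log(\beta_i(T+1)/\beta_i(1)))$, and the $O(K)$ conversion term, and folding $\beta_i(1)=1+\epsilon$ constants into the additive $2K(1+\delta)$, yields exactly the claimed inequality. The main obstacle is the per-round stability estimate and, inside it, verifying the curvature lower bounds on the segment joining $p(t)$ and $q(t+1)$ with explicit enough constants to land the factor $2$ (rather than a larger constant) on $\beta_i(T+1)$; everything after that is bookkeeping with the $\beta_i$ recursion.
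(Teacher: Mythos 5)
Your skeleton matches the paper's proof: Lemma~\ref{lem:boundOFTRL} with $p^* = (1-\tfrac{K}{T})\chi_{i^*}+\tfrac{1}{T}\mathbf{1}$, the $O(K)$ conversion term, the penalty bound $\gamma\sum_i\beta_i(T+1)$ via $0\le\phi\le\gamma$ and monotonicity of $\beta_i(t)$, and a per-round stability bound driven by the two pieces of the regularizer. However, there is a genuine gap in how you account for the constant $2$ on $\beta_i(T+1)$ and for the $\delta\log\frac{\beta_i(T+1)}{\beta_i(1)}$ term, and your explanation of where $\delta$ enters is wrong. In the paper the per-round stability term is not a single $O(\alpha_i(t)/\beta_i(t))$ but the two-term bound $\bigl(\tfrac{1}{2\beta_i(t)}+\tfrac{\delta}{\beta_i(t)^2}\bigr)\alpha_i(t)$. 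The leading coefficient $\tfrac12$ comes from an \emph{exact} evaluation of $\max_y\{a(x-y)-D^{(1)}(y,x)\}=g(ax)$ with $g(x)=x-\log(1+x)$ (Lemma~\ref{lem:boundD12}), followed by the Taylor bound $g(x)\le\tfrac12 x^2+\delta|x|^3$ valid for $x\ge -1/(1+\epsilon)$; the constant $\delta=O(\log\frac{1}{\epsilon})$ is precisely the cubic-remainder coefficient of $-\log(1-x)-x-x^2/2$ on $x\le 1/(1+\epsilon)$, which blows up as $\epsilon\to 0$ because the argument approaches the singularity of the log. So $\delta$ is born in the \emph{per-round} estimate, attached to the $\alpha_i(t)/\beta_i(t)^2$ correction — not in the summation step.

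Consequently your summation claim is also off. Summing $\alpha_i(t)/\beta_i(t)$ using $\beta_i(t+1)^2-\beta_i(t)^2=\alpha_i(t)/\gamma$ and $\alpha_i(t)\le 1$ gives $\sum_t\frac{\alpha_i(t)}{2\beta_i(t)}\le\gamma(\beta_i(T+1)-\beta_i(1))+1$ with \emph{no} logarithmic term and no $\epsilon$-dependence; the ratio $\beta_i(t+1)/\beta_i(t)$ is automatically close to $1$ because each increment of $\beta_i^2$ is at most $1/\gamma$ while $\beta_i\ge 1+\epsilon$. The $\log\frac{\beta_i(T+1)}{\beta_i(1)}$ term arises from the separate sum $\sum_t\alpha_i(t)/\beta_i(t)^2\le 2\gamma\log\frac{\beta_i(T+1)}{\beta_i(1)}+2$, a logarithmic potential rather than a square-root one. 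Your appeal to an FTRL lower bound $p_i(t)\gtrsim\epsilon^{\Theta(1)}$ is both unnecessary and unjustified — the log-barrier gives no such uniform lower bound, and none is used anywhere in the paper's argument. With your single-term per-round bound $C\,\alpha_i(t)/\beta_i(t)$ you would end up with a coefficient $1+2C$ on $\gamma\beta_i(T+1)$, and $C=\tfrac12$ cannot be achieved without splitting off the $\delta/\beta_i(t)^2$ correction, so the stated inequality would not be recovered in the claimed form.
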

As we have $\log (x + 1) \leq \sqrt{x}$,
this lemma implies that $R(T) = O \left( \gamma \sum_{i=1}^K \E [ \beta_i(T+1) ] + K \right) $.
Further,
from 
the condition of $\gamma = \log T$ and the definition of $\beta_i(t)$ in \eqref{eq:defalphabeta},
we can see that \eqref{eq:approxboundOFTRL} holds.
% the regret is bounded as
% From this lemma,
% we have
% $R(T)
% = O \left(
%   \gamma \sum_{i=1}^K \E[ \beta_i(T+1) ]
% \right)
% = O \left( \sqrt{\gamma} \sum_{i=1}^K \E[ \sqrt{ \gamma + \sum_{t=1}^T \alpha_i(t) } ] \right)
% $,
% $
% = O \left( \sqrt{\gamma} \sum_{i=1}^K  \sqrt{ \gamma + \E\left[ \sum_{t=1}^T \alpha_i(t) \right] } \right)
% $,
% where the second and the last equality follow from the definition \eqref{eq:defalphabeta} of $\beta_i(t)$ and
% Jensen's inequality,
% respectively.

\subsection{Stochastic setting (with adversarial corruption) 
\inote{added (with adv...)}
}
\label{sec:analysis-sto}
This subsection shows \eqref{eq:thm1-bound-sto} by using Lemma~\ref{lem:boundRTbeta} and the self-bounding technique.
Lemma~\ref{lem:boundRTbeta} combined with \eqref{eq:defalphabeta}
implies that the regret can be bounded as \eqref{eq:approxboundOFTRL}
in terms of $\sum_{t=1}^T \alpha_i(t)$.
We analyze $\sum_{t=1}^T \alpha_i(t)$,
separately for the optimal arm $i^* \in [K]$ and suboptimal arms $i \in [K] \setminus \{ i^* \}$.

Recall that
  $\alpha_i(t)$ and $m_i(t)$ are given by \eqref{eq:defalphabeta} and \eqref{eq:defmi},
  respectively.
For suboptimal arms $i \in [K] \setminus \{ i^* \}$,
we give a bound on $\sum_{t=1}^T \alpha_i(t)$ via the following lemma:
\begin{lemma}
  \label{lem:bound-sumalpha-suboptimal}
  It holds for any $i \in [K]$ and $m_i^* \in [0, 1]$ that
  \begin{align*}
    \sum_{t=1}^T \alpha_i(t)
    \leq
    \sum_{t=1}^{T}\mathbf{1}[I(t) = i] (\ell_i(t) - m_i(t) )^2 
    \leq
    \sum_{t=1}^{T}\mathbf{1}[I(t) = i] (\ell_i(t) - m_i^* )^2 
    + \log (1 + N_i(T))  + \frac{5}{4} .
  \end{align*}
\end{lemma}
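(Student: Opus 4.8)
The first inequality is immediate from the definition of $\alpha_i(t)$ in \eqref{eq:defalphabeta}: since $\min\{1,\cdot\}\le 1$ we have $\alpha_i(t)\le \mathbf{1}[I(t)=i](\ell_i(t)-m_i(t))^2$, and summing over $t\in[T]$ gives the claim. The substance is the second inequality, which is a regret guarantee for online least‑squares prediction of a $[0,1]$‑valued sequence by the ``add‑a‑$\tfrac12$‑pseudo‑sample'' empirical mean. The plan is to pass to the subsequence of rounds in which arm $i$ is pulled, recognize $m_i$ as the regularized leader for the squared loss, and run the standard potential‑function (follow‑the‑leader with strongly convex losses) argument, tracking the constants explicitly.

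Concretely, let the rounds with $I(t)=i$ be $t_1<\dots<t_n$ with $n=N_i(T)$, and set $x_k:=\ell_i(t_k)\in[0,1]$. By \eqref{eq:defmi}, $m_i(t_k)=\tfrac1k(\tfrac12+\sum_{j=1}^{k-1}x_j)$, which is exactly the minimizer $z_k:=\argmin_z F_{k-1}(z)$ of $F_{k-1}(z):=(z-\tfrac12)^2+\sum_{j=1}^{k-1}(z-x_j)^2$. Writing $f_0(z)=(z-\tfrac12)^2$, $f_k(z)=(z-x_k)^2$ for $k\ge 1$, $F_k:=\sum_{j=0}^k f_j$, and $\Phi_k:=\min_z F_k(z)=F_k(z_{k+1})$, the middle expression in the lemma equals $\sum_{k=1}^n f_k(z_k)$.

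The key step is the identity $f_k(z_k)=\tfrac{k+1}{k}(\Phi_k-\Phi_{k-1})$. Since $F_k$ is a sum of $k+1$ unit‑quadratics it has leading coefficient $k+1$, so $F_k(z)=(k+1)(z-z_{k+1})^2+\Phi_k$ and likewise $F_{k-1}(z)=k(z-z_k)^2+\Phi_{k-1}$; evaluating $\Phi_k=F_k(z_{k+1})=F_{k-1}(z_{k+1})+f_k(z_{k+1})=k(z_{k+1}-z_k)^2+\Phi_{k-1}+f_k(z_{k+1})$. The explicit form of $z_k$ gives $z_{k+1}-z_k=-\tfrac{1}{k+1}(z_k-x_k)$ and $z_{k+1}-x_k=\tfrac{k}{k+1}(z_k-x_k)$, and substituting collapses this to $\Phi_k-\Phi_{k-1}=\tfrac{k}{k+1}(z_k-x_k)^2=\tfrac{k}{k+1}f_k(z_k)$, which is the claimed identity. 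Hence $\sum_{k=1}^n f_k(z_k)=\sum_{k=1}^n(\Phi_k-\Phi_{k-1})+\sum_{k=1}^n\tfrac1k(\Phi_k-\Phi_{k-1})=\Phi_n+\sum_{k=1}^n\tfrac1k(\Phi_k-\Phi_{k-1})$, using $\Phi_0=\min_z(z-\tfrac12)^2=0$.

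It remains to bound the two pieces. For the first, optimality of $z_{n+1}$ gives $\Phi_n\le F_n(m_i^*)=(m_i^*-\tfrac12)^2+\sum_{k=1}^n(x_k-m_i^*)^2\le \tfrac14+\sum_{t=1}^T\mathbf{1}[I(t)=i](\ell_i(t)-m_i^*)^2$ for any $m_i^*\in[0,1]$. For the second, $z_k$ is a convex combination of $\tfrac12,x_1,\dots,x_{k-1}$, all in $[0,1]$, so $(z_k-x_k)^2\le 1$ and $\Phi_k-\Phi_{k-1}\le\tfrac{k}{k+1}$; thus $\sum_{k=1}^n\tfrac1k(\Phi_k-\Phi_{k-1})\le\sum_{k=1}^n\tfrac1{k+1}\le\int_1^{n+1}\tfrac{\rd x}{x}=\log(1+n)$. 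Combining these, $\sum_{k=1}^n f_k(z_k)\le\sum_{t=1}^T\mathbf{1}[I(t)=i](\ell_i(t)-m_i^*)^2+\log(1+N_i(T))+\tfrac14$, which is slightly stronger than the stated bound with $\tfrac54$. The only genuine obstacle is getting the recursion $f_k(z_k)=\tfrac{k+1}{k}(\Phi_k-\Phi_{k-1})$ right, as it hinges on the self‑similar shrinkage $z_{k+1}-x_k=\tfrac{k}{k+1}(z_k-x_k)$ of the empirical‑mean update; alternatively one could invoke the textbook forward‑algorithm / online‑ridge regret bound (as in the adversarial analysis sketched in the overview), but extracting the explicit $O(1)$ and $\log$ constants amounts to the same computation. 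The telescoping and the harmonic‑sum estimate are routine.
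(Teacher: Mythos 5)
Your proof is correct and follows essentially the same route as the paper's: both reduce the claim to an online least-squares (follow-the-leader with a $\tfrac12$-pseudo-sample) regret bound on the subsequence of rounds where arm $i$ is pulled, exploiting the quadratic-minimizer identity $F_k(z)=\Phi_k+(k+1)(z-z_{k+1})^2$ that the paper records as \eqref{eq:quadraticopt}. Your exact telescoping identity $f_k(z_k)=\tfrac{k+1}{k}(\Phi_k-\Phi_{k-1})$ merely reorganizes the paper's be-the-leader plus completion-of-the-square argument in Lemma~\ref{lem:logTregret}, and in fact yields the slightly sharper additive constant $\tfrac14$ in place of $\tfrac54$.
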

% }
% For suboptimal arms $i \in [K] \setminus \{ i^* \}$,
% we give a bound on $\sum_{t=1}^T \alpha_i(t)$ via the following lemma:
% \begin{lemma}
%   \label{lem:bound-sumalpha-suboptimal}
%   For any $i \in [K]$,
%   if $\alpha_i(t)$ and $m_i(t)$ are given by \eqref{eq:defalphabeta} and \eqref{eq:defmi},
%   respectively,
%   it holds 
%   for any $m_i^* \in [0, 1]$
%   that
%   \begin{align*}
%     \sum_{t=1}^T \alpha_i(t)
%     \leq
%     \sum_{t=1}^{T}\mathbf{1}[I(t) = i] (\ell_i(t) - m_i(t) )^2 
%     \leq
%     \sum_{t=1}^{T}\mathbf{1}[I(t) = i] (\ell_i(t) - m_i^* )^2 
%     + \log (1 + N_i(T))  + \frac{5}{4} .
%   \end{align*}
% \end{lemma}
From this lemma,
in the stochastic setting,
we have
\begin{align}
  % \nonumber
  \E \left[
    \sum_{t=1}^T \alpha_i(t)
  \right]
  % &
  % \leq
  % \E \left[
  %   \sum_{t=1}^{T}\mathbf{1}[I(t) = i] (\ell_i(t) - \mu_i )^2 
  % + \log (1 + N_i(T)) 
  % \right]
  % + \frac{5}{4} 
  % \\
  % &
  \leq
  \E \left[
    \sum_{t=1}^{T}p_i(t) \sigma_i^2
  + \log (1 + N_i(T)) 
  \right]
  + \frac{5}{4} 
  \leq
  \sigma_i^2 P_i + \log (1 + P_i) + \frac{5}{4} ,
  \label{eq:bound-sumalpha-suboptimal}
% $,
\end{align}
where the first inequality follows from Lemma~\ref{lem:bound-sumalpha-suboptimal} with
$m^*_i = \mu_i$ and
the last inequality follows from the definition \eqref{eq:defPi} of $P_i$
and Jensen's inequality.

For the optimal arm $i^*$,
we give a bound on $\sum_{t=1}^T \alpha_i(t)$ via the following lemma:
\begin{lemma}
  \label{lem:bound-sumalpha-optarm}
  It holds for any $i^* \in [K]$ that
  \begin{align}
    \label{eq:bound-sumalpha-optarm}
    \E
    \left[
      % \sqrt{\sum_{t=1}^T \alpha_{i^*}(t)}
      \sum_{t=1}^T \alpha_{i^*}(t)
    \right]
    \leq
    \E
    \left[
      % \sqrt{\sum_{t=1}^T \alpha_{i^*}(t)}
      \sum_{t=1}^T \mathbf{1}[I(t) = i^*] \min \left\{
        1, 
        \frac{2(1 - p_{i^*}(t))}{\gamma p_{i^*}(t)^2}
      \right\}
    \right]
    \leq
    \frac{2}{\sqrt{\gamma}}
    \sum_{i \in [K] \setminus \{ i^* \}}
    P_i.
  \end{align}
\end{lemma}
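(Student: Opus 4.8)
The plan is to bound $\sum_t \alpha_{i^*}(t)$ in two stages, exactly mirroring the two inequalities in the statement. For the first inequality, I would simply invoke the definition \eqref{eq:defalphabeta} of $\alpha_{i^*}(t)$: since $\ell_{i^*}(t), m_{i^*}(t) \in [0,1]$, we have $(\ell_{i^*}(t) - m_{i^*}(t))^2 \le 1$, so the leading factor $\mathbf{1}[I(t)=i^*](\ell_{i^*}(t)-m_{i^*}(t))^2$ is bounded by $\mathbf{1}[I(t)=i^*]$; multiplying the remaining $\min\{1, 2(1-p_{i^*}(t))/(p_{i^*}(t)^2\gamma)\}$ factor through gives the middle expression termwise, and taking expectations preserves the inequality.

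The substantive step is the second inequality. Here I would first take the conditional expectation given $p(t)$: since $\Pr[I(t)=i^* \mid p(t)] = p_{i^*}(t)$ and the $\min\{\cdots\}$ term is $p(t)$-measurable,
\begin{align*}
  \E\!\left[\mathbf{1}[I(t)=i^*]\min\Bigl\{1, \tfrac{2(1-p_{i^*}(t))}{\gamma p_{i^*}(t)^2}\Bigr\}\ \Big|\ p(t)\right]
  = p_{i^*}(t)\min\Bigl\{1, \tfrac{2(1-p_{i^*}(t))}{\gamma p_{i^*}(t)^2}\Bigr\}
  \le \frac{2(1-p_{i^*}(t))}{\gamma\, p_{i^*}(t)}.
\end{align*}
Now I need a pointwise bound of $\frac{2(1-p_{i^*}(t))}{\gamma p_{i^*}(t)}$ by something of the form $\frac{2}{\sqrt{\gamma}}(1-p_{i^*}(t))$; this holds precisely when $p_{i^*}(t) \ge 1/\sqrt{\gamma}$, i.e. when the optimal arm is played with reasonably large probability. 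So the argument splits on whether $p_{i^*}(t) \ge 1/\sqrt{\gamma}$ or not. When $p_{i^*}(t) \ge 1/\sqrt{\gamma}$, the displayed quantity is at most $\frac{2}{\sqrt\gamma}(1-p_{i^*}(t))$ directly. When $p_{i^*}(t) < 1/\sqrt\gamma$, I would instead go back to the $\min$ and use the other branch or the crude bound $p_{i^*}(t)\min\{1,\cdots\} \le p_{i^*}(t) < 1/\sqrt{\gamma} \le \frac{2}{\sqrt\gamma}(1-p_{i^*}(t))$, where the last step uses $1-p_{i^*}(t) > 1 - 1/\sqrt\gamma \ge 1/2$ (valid since $\gamma = \log T \ge \log 55 \ge 4$). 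Either way, $\E[\alpha_{i^*}(t)\mid p(t)] \le \frac{2}{\sqrt\gamma}(1-p_{i^*}(t))$.

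Summing over $t$, taking total expectation, and using $1 - p_{i^*}(t) = \sum_{i\ne i^*} p_i(t)$ together with the definition \eqref{eq:defPi} of $P_i = \E[\sum_t p_i(t)]$ yields
$\E[\sum_t \alpha_{i^*}(t)] \le \frac{2}{\sqrt\gamma}\sum_{t}\E[1-p_{i^*}(t)] = \frac{2}{\sqrt\gamma}\sum_{i\ne i^*}P_i$, as claimed. The only mild obstacle is handling the small-$p_{i^*}(t)$ regime cleanly so that the constant stays exactly $2$; the case split above, leaning on $\gamma \ge 4$ to ensure $1-p_{i^*}(t) \ge 1/2$ whenever $p_{i^*}(t) < 1/\sqrt\gamma$, is what makes the constant work out without slack. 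Everything else is a conditional-expectation computation plus the identity $\sum_{i\ne i^*} p_i(t) = 1 - p_{i^*}(t)$.
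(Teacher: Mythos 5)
Your proposal is correct and follows essentially the same route as the paper's proof: bound $(\ell_{i^*}(t)-m_{i^*}(t))^2$ by $1$, take the conditional expectation given $p(t)$ to get $\min\{p_{i^*}(t),\,2(1-p_{i^*}(t))/(\gamma p_{i^*}(t))\}$, and then split on whether $p_{i^*}(t)$ is above or below $1/\sqrt{\gamma}$, using $\sqrt{\gamma}\ge 2$ (from $T\ge 55$) to handle the small-$p_{i^*}(t)$ case. The paper's own argument is identical, including the final identity $1-p_{i^*}(t)=\sum_{i\ne i^*}p_i(t)$ and the definition of $P_i$.
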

In the proof of this lemma,
we use the upper bound for $\alpha_{i^*}(t)$ depending on $(1-p_{i^*}(t))$,
which is due to the entropy part of the regularizer functions.

Combining \eqref{eq:approxboundOFTRL}
with \eqref{eq:bound-sumalpha-suboptimal} for $i \in [K] \setminus \{ i^* \}$
and \eqref{eq:bound-sumalpha-optarm},
we obtain a regret bound of
\begin{align}
  \nonumber
  R(T)
  &
  = 
  O \left(
    \left(
    \sum_{i \in [K] \setminus\{ i^* \} }
    \sqrt{ \sigma_i^2 P_i}
    +
    \sqrt{ \sum_{i \in [K] \setminus \{ i^* \}} P_i / \sqrt{\gamma}}
    \right)
    \sqrt{
    \log {T}}
    +
    K \log T
  \right)
  \\
  &
  =
  O \left(
    \sum_{i \in [K] \setminus\{ i^* \} }
    \sqrt{ \tilde{\sigma}_i^2 P_i \log T}
    +
    K \log T
  \right),
  \quad
  % \left(
    \mbox{where}
    ~
    \tilde{\sigma}_i
    =
    \sigma_i 
    +
    O\left(\gamma^{-1/4}\right).
  % \right)
  \label{eq:approxboundOFTRL-sto}
\end{align}
We obtain
$
  R(T) = 2 R(T) - R(T) 
  =
  O \left( \sum_{i \neq i^*}
  \frac{\tilde{\sigma}_i^2 \log T}{\Delta_i} 
  +
  K \log T
  \right)
$ by combining \eqref{eq:approxboundOFTRL-sto} with the discussion in \eqref{eq:approxboundsto}.
%Combining this with the discussion in \eqref{eq:approxboundsto},
%%\inote{Update:}
%%From this,
%%by the discussion in \eqref{eq:approxboundsto},
%we obtain
%$
%  R(T) = 2 R(T) - R(T) 
%  =
%  O \left( \sum_{i \neq i^*}
%  \frac{\tilde{\sigma}_i^2 \log T}{\Delta_i} 
%  +
%  K \log T
%  \right)
%$.
% Here note that the regret is expressed as
% %We here use an expression of regret as follows: 
% % \begin{align}
% %   \nonumber
% $
%   R(T)
%   % &
%   % =
%   % \E
%   % \left[
%   %   \sum_{t=1}^T ( \ell_{I_t}(t) - \ell_{i^*}(t))
%   % \right]
% %  =
% %  \E
% %  \left[
% %    \sum_{t=1}^T ( \mu_{I(t)} - \mu_{i^*})
% %  \right]
%   =
%   \E
%   \left[
%     \sum_{t=1}^T \Delta_{I(t)}
%   \right]
%   % \\
%   % &
%   =
%   \E
%   \left[
%     \sum_{t=1}^T \sum_{i=1}^K \Delta_{i} p_i(t)
%   \right]
%   % =
%   % \sum_{i \in [K] \setminus \{ i^* \}}
%   % \Delta_i
%   % \E
%   % \left[
%   %   \sum_{t=1}^T p_i(t)
%   % \right]
%   =
%   \sum_{i \in [K] \setminus \{ i^* \}}
%   \Delta_i P_i $.
%   % \label{eq:RT-self-bounding}
% % \end{align}
% From this and \eqref{eq:approxboundOFTRL-sto},
% the regret can be bounded as
% $
%   R(T) = 2 R(T) - R(T) 
%   =
%   O \left( \sum_{i \neq i^*}
%   \left( \sqrt{\tilde{\sigma}_i^2 P_i \log T} - \Delta_i P_i \right)
%   +
%   K \log T
%   \right)
%   =
%   O \left( \sum_{i \neq i^*}
%   \frac{\tilde{\sigma}_i^2 \log T}{\Delta_i} 
%   +
%   K \log T
%   \right)
% $,
% where the last equality follows from $2\sqrt{ax} - bx = - (\sqrt{a/b} - \sqrt{bx})^2 + a/b \leq a/b$ that holds for any $a, x \geq 0$, and
% $b>0$.
By further refining the above analysis,
from Lemmas~\ref{lem:boundRTbeta}, \ref{lem:bound-sumalpha-suboptimal} and \ref{lem:bound-sumalpha-optarm},
we can obtain the regret bound in \eqref{eq:thm1-bound-sto}.
% \inote{update: added explanation for corrupted}
The regret bound for corrupted stochastic settings can also be shown 
from a similar $\{\sigma_i^2 P_i \}$-dependent regret bound.
In the proof,
we use the equation of $R(T) = (1+\lambda)R(t) - \lambda R(T)$ with
an appropriately chosen $\lambda \in (0, 1]$,
as considered in some existing studies,
e.g.,
by \citet{zimmert2021tsallis}.
The details are given in Appendix~\ref{sec:appendix-corrupted}.

\subsection{Adversarial setting}
We can obtain the regret bound in \eqref{eq:thm1-bound-adv} for the adversarial setting
in the following way.
%Regret bounds for the adversarial setting follows from Lemmas~\ref{lem:boundRTbeta} and \ref{lem:bound-sumalpha-suboptimal} as follows.
We first have
%Regret bounds for the adversarial setting follows from Lemmas~\ref{lem:boundRTbeta} and \ref{lem:bound-sumalpha-suboptimal}.
%From Lemma~\ref{lem:boundRTbeta} and \eqref{eq:defalphabeta},
% \begin{align}
$
  R(T)
  \leq
  2
  \sum_{i=1}^K
  \E \left[
    \sqrt{\sum_{t=1}^T \alpha_i(t) \log T}
  \right]
  +
  O(K \log T)
  \leq
  2
  \E \left[
    \sqrt{
      K
      \sum_{i=1}^K
      \sum_{t=1}^T \alpha_i(t) \log T}
  \right]
  +
  O(K \log T)
  \leq
  2
  \E \left[
    \sqrt{
      K
      \sum_{t=1}^T ( \ell_{I(t)} - m^*_{I(t)}  )^2 \log T}
  \right]
  +
  O(K \log T)
$
%\end{align}
for any $m^* \in [0,1]^K$,
where the first inequality follows from Lemma~\ref{lem:boundRTbeta} and \eqref{eq:defalphabeta},
the second inequality is due to the Cauchy-Schwarz inequality,
and the last inequality comes from Lemma~\ref{lem:bound-sumalpha-suboptimal}.
From $
  \sum_{t=1}^T ( \ell_{I(t)}(t) - m^*_{I(t)}  )^2 
  \leq
  \sum_{t=1}^T \| \ell(t) - m^* \|_{\infty}^2
$,
we have
the regret bound of 
$R(T) \leq 2 \sqrt{K Q_{\infty} \log T }  + O(K \log T)$,
where $Q_{\infty} = \min_{\bar{\ell} \in \re^K} \E [\sum_{t=1}^T \| \ell(t) - \bar{\ell} \|_{\infty}^2]$.
As $Q_{\infty} \leq T / 4$ follows from the definition,
this $Q_{\infty}$-dependent bound immediately implies
$R(T) \leq  \sqrt{K T \log T }  + O(K \log T)$.
Further,
we can show the regret bounds dependent on $L^* = \min_{i^* \in [K]} \E [ \sum_{t=1}^T \ell_{i^*}(t) ]$
via a similar discussion within existing studies,
e.g., by \citet{wei2018more}.
By considering the cases of $m^* = 0$ and $m^* = \mathbf{1}$,
we can obtain
$R(T) \leq 2 \sqrt{K L^* \log T }  + O(K \log T)$
and 
$R(T) \leq 2 \sqrt{K (T - L^*) \log T }  + O(K \log T)$,
respectively.
Regret bounds in this subsection are summarized in \eqref{eq:thm1-bound-adv}
(see Appendix \ref{sec:appendix-adv} for details).
%The regret bounds described in this subsection can be summarized in
%\eqref{eq:thm1-bound-adv}.
%
% Further,
% by considering the case of
% $m^* = 0$,
% we obtain
% $R(T) \leq 2 \sqrt{K \log T \sum_{t=1}^T \ell_I(t)^2 }  + O(K \log T)
% \leq \sqrt{K \log T \sum_{t=1}^T \ell_I(t) }  + O(K \log T)
% \leq \sqrt{ K \log T (R(T) + L^*) } + O(K \log T)
% $,
% \end{align}

\section{Regret Lower Bound for Stochastic Setting}
\newcommand{\n}{\nonumber}
\newcommand{\nn}{\nonumber\\}
\newenvironment{proof2}[1]{\noindent{\bf #1\,}}{\vspace{0.5mm}}

Whereas
the regret bound of form
$\limsup_{T\to\infty}\frac{R(T)}{\log T} \le \sum_{i:\neq i^*}a \frac{\sigma_i^2}{\Delta_i} + b$
inevitably involves
positive $a$ and $b$ \citep[Exercise16.7]{lattimore2020bandit},
there has not been a formal discussion on the best possible constants of this kind of form.
On the other hand,
\cite{honda_moment} derived the tight regret bound in terms of the moments of reward distributions for any consistent algorithms.
Here an algorithm is called consistent if it satisfies $R(T)=O(T^a)$ for any fixed $a \in (0, 1)$ and $\mathcal{D}$ \citep{lai1985asymptotically}.
%On the other hand,
%\cite{honda_moment} derived the tight regret bound in terms of the moments of reward distributions.
Though the original bound derived there is quite complicated (see \eqref{bound_dinf}--\eqref{nu_expression} in Appendix~\ref{append_lower}),
%we can see that
the bound is significantly simplified as shown in the following proposition,
the derivation of which is given in Appendix~\ref{append_lower}.
\begin{proposition}\label{prop_moment}
For any $0\le \mu^*<\mu_i\le 1$ and $\sigma_i^2 \le \mu_i(1-\mu_i)$,
there exists a loss distribution $\mathcal{D}$ over $[0,1]^K$ such that any consistent algorithm suffers
\begin{align}
\liminf_{T\to\infty}\frac{R(T)}{\log T}
&\ge
\sum_{i\neq i^*}
\frac{\Delta_i}{
\frac{\mu_i^2}{\sigma_i^2+\mu_i^2}\log \frac{\mu_i}{\mu^*}
+\frac{\sigma_i^2}{\sigma_i^2+\mu_i^2}\log \frac{\sigma_i^2}{\sigma_i^2+\mu_i\Delta_i}}
\label{lower_simple}
\end{align}
and there exists an algorithm achieving this bound with equality for all $\mathcal{D}$.
\end{proposition}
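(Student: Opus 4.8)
The plan is to reduce the statement to the moment-dependent lower bound of \citet{honda_moment}, recalled as \eqref{bound_dinf}--\eqref{nu_expression} in Appendix~\ref{append_lower}. That result states that for a stochastic instance $\mathcal{D}$ on $[0,1]^K$ in which arm $i^*$ is optimal with mean $\mu^*$, every consistent algorithm obeys $\liminf_{T\to\infty}R(T)/\log T\ge\sum_{i\neq i^*}\Delta_i/d_{\inf}(\nu_i,\mu^*)$, where $\nu_i$ is the loss distribution of arm $i$ and $d_{\inf}(\nu,\mu^*)=\inf\{D_{\mathrm{KL}}(\nu\|\kappa):\kappa\text{ a distribution on }[0,1],\ \E_{\kappa}[X]\le\mu^*\}$, and moreover that an asymptotically optimal algorithm attains this rate for every $\mathcal{D}$. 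Hence it suffices, given $\mu^*$, $\mu_i$ and $\sigma_i^2$, to construct one $\mathcal{D}$ whose suboptimal arms have the prescribed first two moments and for which $d_{\inf}(\nu_i,\mu^*)$ equals the denominator appearing in \eqref{lower_simple}; the explicit extremal distribution \eqref{nu_expression} tells us which $\nu_i$ to pick.

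First I would let arm $i^*$ be deterministic with loss $\mu^*$ and, for each $i\neq i^*$, take $\nu_i=\nu_i^\star:=(1-q_i)\,\delta_0+q_i\,\delta_{c_i}$ with $q_i=\mu_i^2/(\sigma_i^2+\mu_i^2)$ and $c_i=(\sigma_i^2+\mu_i^2)/\mu_i$. A one-line computation gives $\E_{\nu_i^\star}[X]=q_ic_i=\mu_i$ and $\E_{\nu_i^\star}[X^2]=q_ic_i^2=\sigma_i^2+\mu_i^2$, and $c_i\le1$ — i.e.\ $\nu_i^\star$ is supported on $[0,1]$ — holds exactly when $\sigma_i^2\le\mu_i(1-\mu_i)$; this is the only place that hypothesis enters. (If $\mu^*=0$ the denominator in \eqref{lower_simple} is $+\infty$ and both sides are $0$, so one may assume $\mu^*>0$.) The key point is that $\nu_i^\star$ is exactly what the optimal-distribution formula \eqref{nu_expression} collapses to when only the mean and the second moment are constrained, so $d_{\inf}(\nu_i^\star,\mu^*)$ is the smallest value of $d_{\inf}(\cdot,\mu^*)$ over all $[0,1]$-supported distributions with these two moments.

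Next I would evaluate $d_{\inf}(\nu_i^\star,\mu^*)$ through the Lagrangian-dual form of $K_{\inf}$ for $[0,1]$-supported distributions, $d_{\inf}(\nu,\mu^*)=\max_{\eta\in[0,1/\mu^*)}\E_{X\sim\nu}[\log(1+\eta(X-\mu^*))]$. For $\nu=\nu_i^\star$ this is the one-variable concave maximization $\max_{\eta}\{(1-q_i)\log(1-\eta\mu^*)+q_i\log(1+\eta(c_i-\mu^*))\}$; its first-order condition gives $\eta^\star=\Delta_i/(\mu^*(c_i-\mu^*))$, which lies in the admissible interval since $c_i\ge\mu_i>\mu^*$. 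Substituting $\eta^\star$ back and using $\mu^*+\Delta_i=\mu_i$, $\mu_ic_i=\sigma_i^2+\mu_i^2$ and $c_i-\mu_i=\sigma_i^2/\mu_i$ reduces the two logarithmic arguments to $1+\eta^\star(c_i-\mu^*)=\mu_i/\mu^*$ and $1-\eta^\star\mu^*=\sigma_i^2/(\sigma_i^2+\mu_i\Delta_i)$, so $d_{\inf}(\nu_i^\star,\mu^*)=\frac{\mu_i^2}{\sigma_i^2+\mu_i^2}\log\frac{\mu_i}{\mu^*}+\frac{\sigma_i^2}{\sigma_i^2+\mu_i^2}\log\frac{\sigma_i^2}{\sigma_i^2+\mu_i\Delta_i}$, the denominator of \eqref{lower_simple}. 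Summing $\Delta_i/d_{\inf}(\nu_i^\star,\mu^*)$ over $i\neq i^*$ gives the claimed lower bound, and the achievability half follows by running the asymptotically optimal algorithm of \citet{honda_moment} on this $\mathcal{D}$.

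The step I expect to be the main obstacle is the minimality claim in the second paragraph — that $\nu_i^\star$ indeed minimizes $d_{\inf}(\cdot,\mu^*)$ among all $[0,1]$-supported distributions with mean $\mu_i$ and second moment $\sigma_i^2+\mu_i^2$ (equivalently, maximizes the regret lower bound). This is not an elementary optimization, so the honest route is to quote the explicit characterization \eqref{nu_expression} of the extremal distribution from \citet{honda_moment} and verify only that its general form degenerates to this two-point distribution when exactly two moments are fixed. The rest — the moment and feasibility checks and the one-dimensional maximization defining $d_{\inf}$ — is routine.
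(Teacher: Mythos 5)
Your proposal is correct in substance but takes a genuinely different route from the paper for the lower-bound half. The paper's own proof never constructs a distribution or performs an optimization: it quotes the closed-form expression \eqref{dinf_expression}--\eqref{nu_expression} for $D_{\inf}^{(2)}$ from \citet{honda_moment}, substitutes $M_1=1-\mu_i$ and $M_2=(1-\mu_i)^2+\sigma_i^2$, and simplifies algebraically. You instead exhibit the hard instance explicitly as the two-point law $(1-q_i)\delta_0+q_i\delta_{c_i}$ and evaluate the per-instance quantity $d_{\inf}$ by a one-dimensional dual maximization; your moment checks, the feasibility condition $c_i\le 1\Leftrightarrow\sigma_i^2\le\mu_i(1-\mu_i)$, the stationary point $\eta^\star=\Delta_i/(\mu^*(c_i-\mu^*))$ (whose admissibility $\eta^\star\le 1/\mu^*$ follows from $c_i\ge\mu_i$), and the resulting values $\mu_i/\mu^*$ and $\sigma_i^2/(\sigma_i^2+\mu_i\Delta_i)$ of the two logarithmic arguments are all correct. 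Your route is more self-contained and makes visible both the worst-case distribution and exactly where the hypothesis $\sigma_i^2\le\mu_i(1-\mu_i)$ enters, which the paper's algebra hides.

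Two remarks. First, you overstate the difficulty of the step you flag as the main obstacle: for the existence claim in \eqref{lower_simple} you do not need $\nu_i^\star$ to \emph{minimize} $d_{\inf}$ over the moment class at all. The generic per-instance lower bound $\sum_{i\neq i^*}\Delta_i/d_{\inf}(\nu_i,\mu^*)$ applied to your concrete $\mathcal{D}$ already yields the stated inequality once $d_{\inf}(\nu_i^\star,\mu^*)$ is computed. Extremality (equivalently, the identification of the denominator with $D_{\inf}^{(2)}$) is needed only for the second claim, that a single algorithm attains the bound with equality for \emph{all} $\mathcal{D}$; there your plan of citing \citet{honda_moment} is acceptable, but note that ``running the asymptotically optimal algorithm on this $\mathcal{D}$'' only gives tightness for the constructed instance, whereas the proposition (and the paper, via Theorem~1 of \citet{honda_moment}) asserts it for every $\mathcal{D}$ with the given moments. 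Second, a small misreading: \eqref{nu_expression} is not ``the explicit extremal distribution''---$\nu^{(2)}$ is a scalar dual variable appearing inside the logarithms of \eqref{dinf_expression}---so the degeneration argument you sketch would have to be run on the primal characterization in \citet{honda_moment}, not on \eqref{nu_expression} itself. Neither point invalidates your argument.
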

Note that the expected loss $\mu_i$ in this proposition corresponds to $1-\mu_i$ under the notation of \cite{honda_moment}
since they considered cumulative reward maximization with reward in $[0,1]$.
Still, replacing $\mu_i$ with $1-\mu_i$ in \eqref{lower_simple} makes the expression considerably
messy, even though most stochastic bandit papers following \cite{lai1985asymptotically}
formulate the problem as reward maximization unlike adversarial bandit papers considering loss minimization.
This observation may partly suggest that
the formulation based on loss minimization is also natural for the stochastic settings.
%
%We can obtain this expression by
%converting \citet[Theorem 4]{honda_moment} for reward maximization into the form of loss minimization, that is,
%substituting $1-\mu_i$ into the expected reward.
%Though this is just a chain of elementary calculations,
%the simplicity of the result may partly support
%the naturalness of formulation as the loss minimization also for the stochastic MAB.

By this proposition,
%we can see that
the best-possible bound of form
$\sum_{i\neq i^*}f(\Delta_i,\,\sigma_i^2)$ is expressed as\footnote{%
If we optimize $\mu^*$ outside the summation then a tighter bound is obtained
at the cost of the simplicity.}
%the best possible problem-dependent bound
%in terms of $(\Delta_i,\sigma_i^2)$ is expressed as
%\begin{align}
%\lim_{T\to\infty}\frac{R(T)}{\log T}
%&\le
%\sup_{\substack{\{\mu_i\}_{i\in[K]}:\\ \mu_i-\mu^*=\Delta_i,\, \sigma_i^2 \le \mu_i(1-\mu_i)}}
%\sum_{i\neq i^*}
%\frac{\Delta_i}{
%\frac{\mu_i^2}{\sigma_i^2+\mu_i^2}\log \frac{\mu_i}{\mu^*}
%+\frac{\sigma_i^2}{\sigma_i^2+\mu_i^2}\log \frac{\sigma_i^2}{\sigma_i^2+\mu_i\Delta_i}}
%.\n
%\end{align}
%In particular, when we focus on the bound of form $\sum_{i\neq i^*}f(\Delta_i,\,\sigma_i^2)$,
%the tightest bound becomes
\begin{align}
\limsup_{T\to\infty}\frac{R(T)}{\log T}
&\le
\sum_{i\neq i^*}
\sup_{\substack{(\mu_i,\mu^*)\in[0,1]^2:\\ \mu_i-\mu^*=\Delta,\, \sigma_i^2 \le \mu_i(1-\mu_i)}}
\frac{\Delta_i}{
\frac{\mu_i^2}{\sigma_i^2+\mu_i^2}\log \frac{\mu_i}{\mu^*}
+\frac{\sigma_i^2}{\sigma_i^2+\mu_i^2}\log \frac{\sigma_i^2}{\sigma_i^2+\mu_i\Delta_i}}
\nn
&\approx\sum_{i\neq i^*}
\left(2\frac{\sigma_i^2}{\Delta_i}+\frac12\log \left(1+\frac{\sigma_i^2}{\Delta_i}\right)+1\right),\label{bound_approx}
\end{align}
where the relative error of the approximation in \eqref{bound_approx}
goes to 0 as
$\sigma_i^2/\Delta_i\to 0,\infty$.
We can numerically verify that the relative approximation error is
at most\footnote{The relative error is refined to $\pm 0.6\%$ if we replace $\frac12\log(1+\frac{\sigma_i^2}{\Delta_i})$
in \eqref{bound_approx} with $0.06\log(1+\frac{60\sigma_i^2}{\Delta_i})$.} $\pm6\%$ for all $\sigma_i^2/\Delta_i>0$.
%the approximation is verified numerically and the relative error is at most 6\% for all $\sigma_i^2/\Delta_i$
%and goes to 0 as $\sigma_i^2/\Delta_i\to 0,\infty$.
From this expression we can see that
the derived regret bound \eqref{eq:thm1-bound-sto} is close to twice or $2(1+\epsilon)$ times this best bound in most cases.

\section{Conclusion}
In this paper,
we provided an MAB algorithm with a variance-dependent regret bound, adversarial robustness,
and data-dependent regret bounds in adversarial environments.
The (stochastic) variance-dependent regret upper bound shown in this paper
% by this algorithm 
is close to twice the lower bound.
% One natural question is how we can close this gap of almost twice.
% \inote{update}
Removing this gap of twice is one important challenge that even 
variance-unaware adversarially robust algorithms have not yet overcome
(e.g., Tsallis-INF~\citep{zimmert2021tsallis} has a gap of twice as well).
One natural question would be
%whether we can improve performance in stochastic settings by sacrificing a certain amount of adversarial robustness. 
whether we can improve performance in stochastic settings without sacrificing the adversarial robustness. 
%  such as Tsallis-INF~\citep{zimmert2021tsallis}
%  by
% which has a gap of twice as well.
% This gap mainly comes from the difficulty of the adversarial robustness, where
% all existing adversarially robust algorithms have bounds at least
% twice worse than the best problem-dependent bound \cite{hoge}.
% Therefore we expect that the technique of the proposed algorithm can also be used to achieve a nearly optimal variance-dependent bound if
% some optimal variance-unaware adversarially robust algorithm is derived.
% \inote{should be modified}

% Acknowledgments---Will not appear in anonymized version
% \acks{We thank a bunch of people and funding agency.}

\bibliographystyle{abbrvnat}
\bibliography{reference.bib}

\newpage
\appendix

% \section{Asymptotic upper bound}
% The proposed algorithm achieves
% \begin{align}
%   \lim_{T \to \infty}
%   \frac{R(T)}{\log T}
%   \leq
%   \sum_{i \in [K] \setminus \{ i^* \} }
%   \left(
%     4 \frac{\sigma_i^2}{\Delta_i}
%     +
%     2 \delta \log \left(
%       1
%       +
%       \max \left\{
%         \left(
%         \frac{4 \sigma_i^2}{\Delta_i}
%         \right)^2 ,
%         \frac{8\delta \sigma_i^2}{\Delta_i}
%       \right\}
%     \right)
%   \right)
%   +
%   2 K (1 + \epsilon),
% \end{align}
% where
% $\delta = (1+ \epsilon^2)^3 \log \frac{1+\epsilon}{\epsilon} - (1+\epsilon)^2 - \frac{1+\epsilon}{2}
% = O(\log \frac{1}{\epsilon})$.

\section{Omitted Proofs}
This section provides proofs omitted in the main text.
We use the notation of $\bet = (1+\epsilon)$.
We then have $\beta_i(1) = \bet$ for any $i \in [K]$.
\subsection{Proof of Lemma~\ref{lem:boundOFTRL}}
% To show Lemma~\ref{lem:boundOFTRL},
% we use the following lemma:
% \begin{lemma}
%   \label{lem:BregBound}
%   Let $\psi: X \subseteq \re$ be a continuously-differentiable, strictly convex function defined on a convex set $X \subset \re^d$.
%   Let $D(x, y)$ denote the Bregman divergence associated with $\psi$.
%   Then,
%   for any $\ell \in \re^d$,
%   $x^* \in \argmin_{x \in X } \{ \linner \ell, x \rinner + \psi (x) \}$
%   and for any $x \in X$,
%   we have
%   \begin{align}
%     \linner \ell, x \rinner + \psi(x)
%     \geq
%     \linner \ell, x^* \rinner + \psi(x^*)
%     +
%     D(x, x^*).
%   \end{align}
% \end{lemma}
% \begin{proof}
%   As $x^* \in \argmin_{x \in X } \{ \linner \ell, x \rinner + \psi (x) \}$,
%   for any $x\in X$ we have
%   \begin{align}
%     \linner \ell + \nabla \psi (x^*), x - x^* \rinner \geq 0.
%   \end{align}
%   Hence,
%   we have
%   \begin{align*}
%     &
%     \linner \ell, x \rinner + \psi(x)
%     -
%     \linner \ell, x^* \rinner - \psi(x^*)
%     -
%     D(x, x^*)
%     \\
%     &
%     =
%     \linner \ell, x \rinner + \psi(x)
%     -
%     \linner \ell, x^* \rinner - \psi(x^*)
%     -
%     \psi(x)
%     +
%     \psi(x^*)
%     +
%     \linner \nabla \psi(x^*), x - x^* \rinner
%     =
%     \linner \ell + \nabla \psi (x^*), x - x^* \rinner \geq 0.
%   \end{align*}
% \end{proof}
\textbf{Proof of Lemma~\ref{lem:boundOFTRL}}\quad
From
% Lemma~\ref{lem:BregBound} and
the definitions of $p(t)$ and $q(t)$,
we have
\begin{align*}
  &
  \linner \sum_{t=1}^T \hat{\ell}(t), p^* \rinner + \psi_{T+1}(p^*)
  \\
  &
  \geq
  \linner \sum_{t=1}^T  \hat{\ell}(t), q(T+1) \rinner
  + \psi_{T+1}(q(T+1))
  + D_{T+1}(p^*, q(T+1))
  \\
  &
  =
  \linner m(T) + \sum_{t=1}^{T-1} \hat{\ell}(t), q(T+1) \rinner + \linner \ell(T) - m(T), q(T+1) \rinner 
  \\
  &
  \quad
  + D_{T+1}(p^*, q(T+1)) + \psi_{T+1}(q(T+1))
  \\
  &
  \geq
  \linner m(T) + \sum_{t=1}^{T-1} \hat{\ell}(t), p(T) \rinner
  + \psi_{T}(p(T))
  + D_{T}(q(T+1), p(T) )
  - \psi_T(q(T+1))
  \\
  &
  \quad
  + \linner \hat{\ell}(T) - m(T), q(T+1) \rinner + D_{T+1}(p^*, q(t+1)) + \psi_{T+1}(q(T+1))
  \\
  &
  =
  \linner \sum_{t=1}^{T-1} \hat{\ell}(t), p(T) \rinner
  + \psi_{T}(p(T))
  + D_{T}(q(T+1), p(T) )
  - \psi_T(q(T+1))
  \\
  &
  \quad
  + \linner m(T) , p(T) \rinner + \linner \hat{\ell}(T) - m(T), q(T+1) \rinner + D_{T+1}(p^*, q(t+1)) + \psi_{T+1}(q(T+1))
  \\
  &
  \geq
  \linner \sum_{t=1}^{T-1} \hat{\ell}(t), q(T) \rinner
  + \psi_{T}(q(T))
  + D_{T}(q(T+1), p(T) )
  - \psi_T(q(T+1))
  \\
  &
  \quad
  + \linner m(T) , p(T) \rinner + \linner \hat{\ell}(T) - m(T), q(T+1) \rinner + D_{T+1}(p^*, q(t+1)) + \psi_{T+1}(q(T+1))
  \\
  &
  \geq
  \sum_{t=1}^T
  \left(
    \linner m(t), p(t) \rinner  + \linner \hat{\ell}(t) - m(t), q(t+1) \rinner
  \right)
  \\
  &
  \quad
  +
  \sum_{t=1}^T
  \left(
    D_t(q(t+1), p(t))
%     + D_{t}(p(t), q(t))
    + \psi_{t+1}(q(t+1)) - \psi_{t}(q(t+1))
  \right)
  +
  \psi_1( q(1) ),
\end{align*}
where the first, the second, and the third inequalities follow from
the definition of $p_i(t)$ and $q_i(t)$,
%  Lemma~\ref{lem:BregBound}
and the last inequality can be obtained by repeating the same transformation $T$ times.
From this,
we have
\begin{align*}
  \sum_{t=1}^T \linner
    \hat{\ell}(t), p(t) - p^*
  \rinner
  &
  \leq
  \psi_{T+1}(p^*)
  -
  \psi_1( q(1) )
  +
  \sum_{t=1}^T
  (\psi_{t}(q(t+1))
  - \psi_{t+1}(q(t+1))
  )
  \\
  &
  \quad
  +
  \sum_{t=1}^T
  \left(
    \linner \hat{\ell}(t) - m(t), p(t) - q(t+1) \rinner
    - D_t(q(t+1), p(t))
%     + D_{t}(p(t), q(t))
  \right) ,
\end{align*}
which completes the proof of Lemma~\ref{lem:boundOFTRL}.
\qed

\subsection{Proof of Lemma~\ref{lem:boundRTbeta}}
Define
\begin{align*}
  \delta
  =
  \bet^3
  \left(
    -
    \log \left(
      1 - \frac{1}{\bet}
    \right)
    -
    \frac{1}{\bet}
    -
    \frac{1}{2 \bet^2}
  \right)
  =
  (1+\epsilon)^{3} \log\frac{1+ \epsilon}{\epsilon}
  -
  (1+ \epsilon)^2
  -
  \frac{1+\epsilon}{2} .
\end{align*}
Then,
for any $x \leq 1 / \bet$,
we have
\begin{align*}
  \frac{
  - \log(1 - x) - x - x^2/2
  }{
    |x|^3
  }
  \leq
  \delta
\end{align*}
holds,
which implies
\begin{align}
  x \leq 1 / \bet
  \quad
  \Longrightarrow
  \quad
  - \log( 1 - x )
  - x
  - x^2 / 2
  \leq
  \delta |x|^3 .
  \label{eq:boundgg}
\end{align}
This inequality will be used in the proof of Lemma~\ref{lem:boundRTbeta}.

% The inequalities in \eqref{eq:boundip} can be shown via the following Lemma:
We use the following lemma to analyze the stability terms in \eqref{eq:boundOFTRL}:
\begin{lemma}
  \label{lem:boundD12}
  Let $D^{(1)}$ and $D^{(2)}$ denote the Bregman divergences associated with $\phi^{(1)}(x) = - \log x$
  and $\phi^{(2)}(x) = (1-x) \log (1-x)$.
  Then,
  for any $x \in (0,1)$,
  we have
  \begin{align}
    \label{eq:boundf1}
    \max_{y \in \re} f^{(1)}(y)
    &
    :=
    \max_{y \in \re} \left\{ a (x - y) - D^{(1)}(y, x) \right\} =
    g(ax)
    &
    \quad
    (a \geq - 1/x),
    \\
    \max_{y \in \re} f^{(2)}(y)
    &
    :=
    \max_{y \in \re} \left\{ a (x - y) - D^{(2)}(y, x) \right\} =
    (1-x) h(a)
    &
    \quad
    (a \in \re),
    \label{eq:boundf2}
  \end{align}
  where $g$ and $h$ are defined as
  \begin{align}
    \label{eq:defgh}
    g(x) = x - \log (x + 1) ,
    \quad
    % + \delta |x|^3 \quad \left( x \geq - \frac{1}{\bet} \right),
    % \\
    h(x) = \exp(x) - x - 1   .
    % \quad \left( x \leq 1 \right).
    % \label{eq:defh}
  \end{align}
  % respectively.
  % in \eqref{eq:defg} and \eqref{eq:defh},
\end{lemma}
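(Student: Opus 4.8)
\textbf{Proof plan for Lemma~\ref{lem:boundD12}.}\quad
The plan is to treat \eqref{eq:boundf1} and \eqref{eq:boundf2} separately; each is a one-dimensional concave maximization solved by a first-order condition, after which everything reduces to algebraic simplification.

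For \eqref{eq:boundf1} I would first write the Bregman divergence explicitly. Since $(\phi^{(1)})'(x)=-1/x$, one has $D^{(1)}(y,x)=-\log y+\log x+y/x-1$, hence $f^{(1)}(y)=a(x-y)+\log y-\log x-y/x+1$. Because $(\phi^{(1)})''(x)=1/x^2>0$, the map $y\mapsto D^{(1)}(y,x)$ is strictly convex on $(0,\infty)$, so $f^{(1)}$ is strictly concave there; moreover, for $a>-1/x$ one has $f^{(1)}(y)\to-\infty$ both as $y\to 0^+$ and as $y\to\infty$, so the maximum is attained at the unique stationary point. Solving $(f^{(1)})'(y)=-a+1/y-1/x=0$ gives $y^{*}=x/(1+ax)$, which is positive exactly when $1+ax>0$. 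Substituting back and simplifying — the one nontrivial step being the cancellation $\tfrac{(ax)^2-1}{1+ax}=ax-1$ — yields $f^{(1)}(y^{*})=ax-\log(1+ax)=g(ax)$. The borderline case $a=-1/x$ is handled by direct substitution: there $f^{(1)}(y)=\log(y/x)\to+\infty$, which matches $g(-1)=+\infty$.

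For \eqref{eq:boundf2} the argument is parallel. From $(\phi^{(2)})'(x)=-\log(1-x)-1$ one gets $\tfrac{\partial}{\partial y}D^{(2)}(y,x)=(\phi^{(2)})'(y)-(\phi^{(2)})'(x)=\log\tfrac{1-x}{1-y}$, and $(\phi^{(2)})''(y)=1/(1-y)>0$ on the effective domain $y<1$, so $f^{(2)}$ is concave there. The stationary condition $-a-\log\tfrac{1-x}{1-y}=0$ gives $1-y^{*}=(1-x)e^{a}$, i.e.\ $y^{*}=1-(1-x)e^{a}<1$, which stays in the domain for every $a\in\re$. Setting $v=1-x$ and expanding $D^{(2)}(y^{*},x)$, the $v\log v$ terms cancel, leaving $D^{(2)}(y^{*},x)=v(ae^{a}-e^{a}+1)$; combining this with $x-y^{*}=v(e^{a}-1)$ gives $f^{(2)}(y^{*})=v(e^{a}-a-1)=(1-x)h(a)$.

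None of these steps is a genuine obstacle: the computations are elementary, and the only points requiring care are (i) checking strict convexity of $\phi^{(1)}$ and $\phi^{(2)}$ so that the first-order condition really identifies the maximizer rather than a saddle point or minimizer, (ii) tracking the effective domains ($y>0$ and $y<1$, respectively) and verifying $y^{*}$ lies in them, and (iii) the degenerate value $a=-1/x$ in \eqref{eq:boundf1}. The bulk of the work is the algebraic simplification, which collapses cleanly thanks to the cancellations noted above.
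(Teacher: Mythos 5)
Your proposal is correct and follows essentially the same route as the paper's proof: both maximizations are strictly concave one-dimensional problems solved by the first-order condition, with $y^{*}=x/(1+ax)$ and $1-y^{*}=(1-x)e^{a}$ respectively, followed by back-substitution. The extra care you take with the effective domains ($y>0$, $y<1$) and the borderline case $a=-1/x$ is sound but not needed beyond what the paper already does implicitly.
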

\begin{proof}
  The derivative of $f^{(1)}$ is expressed as
  \begin{align*}
    \frac{\mathrm{d}f^{(1)}}{\mathrm{d}y}(y) = - a + \frac{1}{y} - \frac{1}{x}.
  \end{align*}
  As $f^{(1)}$ is a concave function,
  the maximizer $y^*$ of $f^{(1)}$ satisfies $-a + \frac{1}{y^*} - \frac{1}{x} = 0$.
  Hence,
  the maximum value is expressed as
  \begin{align*}
    \max_{y \in \re} f^{(1)}(y)
    &
    =
    f^{(1)}(y^*)
    =
    a(x-y^*) + \log y^* - \log x + \frac{x - y^*}{x}
    \\
    &
    =
    - \log \frac{x}{y^*}
    + \frac{x - y^*}{y^*}
    =
    - \log (1+ax)
    + a x
    =
    g(ax),
  \end{align*}
  which proves \eqref{eq:boundf1}.
  Similarly,
  as $f^{(2)}$ is a concave function,
  the maximizer $y^* \in \re$ of $f^{(2)}$ satisfies
  % The derivative of $f^{(2)}$ can be expressed as
  \begin{align*}
    \frac{\mathrm{d}f^{(2)}}{\mathrm{d}y}(y^*) = - a + \log(1-y^*) - \log(1-x) = 0.
  \end{align*}
  Hence,
  we have
  \begin{align*}
    f^{(2)}(y^*)
    &
    =
    a (x - y^*) - (1-y^*)\log(1-y^*) + (1-x)\log(1-x) - ( \log (1-x) + 1 )(y^* - x)
    \\
    &
    =
    (1 - y^*) - (1-x) - (1 - x)\log(1-y^*) + (1-x)\log(1-x)
    \\
    &
    =
    (1-x)(\exp(a) - 1) - (1-x) a
    =
    (1-x)(\exp(a) - a - 1),
  \end{align*}
  which proves \eqref{eq:boundf2}.
\end{proof}
In addition to this lemma,
we use the following bounds on $g$ and $h$:
\begin{align}
    g(x) & = x - \log (x + 1) \leq \frac{1}{2}x^2
    + \delta |x|^3 \quad \left( x \geq - \frac{1}{\bet} \right),
    \label{eq:boundg}
    \\
    h(x) & = \exp(x) - x - 1 \leq x^2
    \quad \left( x \leq 1 \right).
    \label{eq:boundhh}
\end{align}
We can easily see that
\eqref{eq:boundg} follows from \eqref{eq:boundgg}.
The bound on $h(x)$ in \eqref{eq:boundhh} can be shown from Taylor's theorem.

% By a standard analysis for FTRL (e.g., Exercise 28.12 of \citep{lattimore2020bandit}),
% we have the following regret bound:
% \begin{lemma}
%   \label{lem:bound-RTalpha}
%   \begin{align}
%     R(T)
%     \leq
%     \E \left[
%       % 2 \sqrt{\gamma \sum_{t=1}^T \alpha_i(t)}
%       % +
%       % \gamma
%       % \log \left(
%       %   1 + \frac{1}{\gamma \beta_0^2 - 1 } \sum_{t=1}^T \alpha_i(t)
%       % \right)
%       2 \sqrt{\tilde{\beta}_0^2 + \frac{1}{\gamma} \sum_{t=1}^T \alpha_i(t)}
%       -
%       \tilde{\beta}_0
%       +
%       \delta \log \left(
%         1 + \frac{1}{\gamma \tilde{\beta}_0^2} \sum_{t=1}^T \alpha_i(t)
%       \right)
%     \right]
%     +
%     K (\sqrt{\gamma} + 1),
%   \end{align}
%   where we define $\tilde{\beta}_0 = \sqrt{\beta_0^2 - \frac{1}{\gamma}}$.
% \end{lemma}
% \begin{proof}(of Lemma~\ref{lem:boundOFTRL})
\quad\\
\textbf{Proof of Lemma~\ref{lem:boundRTbeta}}\quad
  % The proof is similar to that in \citep{ito2021hybrid}.
  Define $p^* = (1 - \frac{K}{T}) \chi_{i^*} + \frac{1}{T}\mathbf{1}$,
  where $\chi_{i^*} \in \{ 0, 1 \}^K$ denotes the indicator vector of $i^* \in [K]$,
  i.e.,
  $\chi_{i^*, i} = 1$ if $i = i^*$ and $\chi_{i^*, i} = 0$ if $i \neq i^*$.
  % \footnote{
  %   memo: choose $p^*$ so that $\psi_{T+1}(p^*)$ is bounded.
  % }
  We then have
  \begin{align}
    \nonumber
    R_{i^*}(T)
    &
    =
    \E \left[
    \sum_{t=1}^T
    \linner
    \ell(t), p(t) - \chi_{i^*}
    \rinner
    \right]
    =
    \E \left[
    \sum_{t=1}^T
    \linner
    \ell(t), p(t) - p^*
    \rinner
    +
    \sum_{t=1}^T
    \linner
    \ell(t), p^* - \chi_{i^*}
    \rinner
    \right]
    \\
    &
    =
    \E \left[
    \sum_{t=1}^T
    \linner
    \hat{\ell}(t), p(t) - p^*
    \rinner
    +
    \frac{K}{T}
    \sum_{t=1}^T
    \linner
    \ell(t), \frac{1}{K}\mathbf{1} - \chi_{i^*}
    \rinner
    \right]
    \leq
    \E \left[
    \sum_{t=1}^T
    \linner
    \hat{\ell}(t), p(t) - p^*
    \rinner
    \right]
    +
    K .
    \label{eq:boundRiT}
  \end{align}
  From Lemma~\ref{lem:boundOFTRL},
  the term of
  $
    \sum_{t=1}^T \linner
      \hat{\ell}(t), p(t) - p^*
    \rinner
  $
  in \eqref{eq:boundRiT} is bounded as \eqref{eq:boundOFTRL}.
  Penalty terms and stability terms in \eqref{eq:boundOFTRL} are bounded as follows:
  % % From the standard analysis for OFTRL (see, e.g., Exercise 28.12),
  % % we have
  % \begin{align}
  %   \label{eq:boundOFTRLpsi}
  %   \sum_{t=1}^T \linner
  %     \hat{\ell}(t), p(t) - p^*
  %   \rinner
  %   &
  %   \leq
  %   \psi_{T+1}(p^*)
  %   -
  %   \psi_1( q(1) )
  %   +
  %   \sum_{t=1}^T
  %   (\psi_{t}(q(t+1))
  %   - \psi_{t+1}(q(t+1))
  %   )
  %   \\
  %   &
  %   \quad
  %   +
  %   \sum_{t=1}^T
  %   \left(
  %     \linner \hat{\ell}(t) - m(t), p(t) - q(t+1) \rinner
  %     - D_t(q(t+1), p(t))
  % %     + D_{t}(p(t), q(t))
  %   \right),
  % \end{align}
  % where we define
  % \begin{align}
  %   q(t) = \argmin_{p \in \simplex{K}} \left\{
  %     \linner \sum_{s=1}^{t-1} \hat{\ell}(s), p \rinner + \psi_t(p)
  %   \right\}
  % \end{align}
  % and $D_t$ expresses the Bregman divergence associated with $\psi_t$,
  % i.e.,
  % \begin{align}
  %   D_t(q, p) =
  %   \psi_t(q) - \psi_t(p) - \linner \nabla \psi_t(p), q - p \rinner.
  % \end{align}
  % \textbf{Bounding parts in \eqref{eq:boundOFTRLpsi}:}
  \paragraph{\uline{Penalty terms} in \eqref{eq:boundOFTRL}}
  As the regularizer $\psi_t(p)$ is defined as
  $\sum_{i=1}^K \beta_i(t) \phi(p_i)$ as \eqref{eq:defpsi},
  we have
  \begin{align}
    \psi_{t} (p^*)
    =
    \sum_{i=1}^K \beta_i(t) \phi(p^*_i)
    \leq
    \sum_{i=1}^K \beta_i(t) \max_{x \in [1/T, 1]}  \phi(x)
    \leq
    \sum_{i=1}^K \beta_i(t)
    \max \{ \phi(1/T), \phi(1) \},
    \label{eq:boundpsi}
  \end{align}
  where the first inequality follows from the definition of $p^*$ and
  the second inequality holds since $\phi$ is a convex function.
  Further,
  from the definition of $\phi$ in \eqref{eq:defpsi},
  we have
  \begin{align*}
    \max \{ \phi(1/T), \phi(1) \}
    &
    =
    \max \left\{
      \frac{1}{T} - 1 + \log T + \gamma \left( \frac{1}{T} + \left( 1 - \frac{1}{T} \right) \log \left( 1 - \frac{1}{T} \right)  \right),
      \gamma
    \right\}
    \\
    &
    \leq
    \max \left\{
      \frac{1+\gamma}{T} - 1 + \log T ,
      \gamma
    \right\}
    = \gamma,
  \end{align*}
  where the last inequality follows from $\gamma = \log T$.
  From this and \eqref{eq:boundpsi},
  we have
  \begin{align}
    \label{eq:boundpsiT1}
    \psi_{T+1} (p^*) \leq \gamma \sum_{i=1}^K \beta_i(T+1).
    % =
    % \gamma
    % \sum_{i=1}^K \sqrt{ \beta_0^2 + \frac{1}{\gamma} \sum_{t=1}^T \alpha_i(t)}.
  \end{align}
  Further,
  as we have
  $\beta_i(t) \leq \beta_i(t+1)$ from \eqref{eq:defalphabeta} and $\phi(x) \geq 0$ for any $x \in (0, 1]$,
  we have
  \begin{align}
    \nonumber
    &
    - \psi_1(q(1))
    +
    \sum_{t=1}^T
    (\psi_{t}(q(t+1))
    - \psi_{t+1}(q(t+1))
    )
    \\
    &
    =
    -
    \sum_{i=1}^K
    \left(
      \beta_i(1) \phi(q_i(1) +
      \sum_{t=1}^{T}(\beta_i(t+1) - \beta_i(t))\phi(q_i(t+1))
    \right)
    \leq 0.
    \label{eq:boundsumpsi}
  \end{align}
  Combining \eqref{eq:boundpsiT1} and \eqref{eq:boundsumpsi},
  we obtain
  \begin{align}
    \psi_{T+1}(p^*)
    - \psi_1(q(1))
    +
    \sum_{t=1}^T
    (\psi_{t}(q(t+1))
    - \psi_{t+1}(q(t+1))
    )
    \leq \gamma \beta_i(T+1).
    \label{eq:boundpenalty}
  \end{align}
  % \textbf{Bounding parts in \eqref{eq:boundOFTRL}:}
  \paragraph{\uline{Stability terms} in \eqref{eq:boundOFTRL}}
  The Bregman divergence $D_t(p, q)$ is expressed as
  \begin{align*}
    \nonumber
    D_t(p, q)
    &
    =
    \sum_{i=1}^K
    \left(
      \beta_i(t) D^{(1)}(p_i, q_i)
      +
      \beta_i(t) \gamma D^{(2)}(p_i, q_i)
    \right)
    \\
    &
    \geq
    \sum_{i=1}^K
    \max \left\{
      \beta_i(t) D^{(1)}(p_i, q_i),
      \beta_i(t) \gamma D^{(2)}(p_i, q_i)
    \right\}
  \end{align*}
  where $D^{(1)}$ and $D^{(2)}$ stand for Bregman divergences associated with $\phi^{(1)}(x) = - \log x$ and
  $\phi^{(2)}(x) = (1-x)\log(1-x)$.
  We hence have
  \begin{align}
    &
    \linner
    \hat{\ell}({t}) - m(t),
    p(t) - q(t+1)
    \rinner
    -
    D_t ( q(t+1) , p(t))
    \nonumber
    \\
    &
    \leq
    \sum_{i=1}^K
    \left(
      (\hat{\ell}_i(t) - m_i(t))(p_{i}(t) - q_{i}(t+1))
      -
      \beta_i(t)
      \max \left\{
         D^{(1)}(q_i(t+1), p_i(t)),
         \gamma D^{(2)}(q_i(t+1), p_i(t))
      \right\}
    \right)
    \nonumber
    \\
    &
    \leq
    \sum_{i=1}^K
    \left(
      \min \left\{
        \beta_i(t)
        g\left(\frac{p_i(t)(\hat{\ell}_i(t) - m_i(t))}{\beta_i(t)}\right),
        \beta_i(t) \gamma (1 - p_{i}(t))
        h \left( \frac{\hat{\ell}_i (t) - m_i(t)}{\gamma \beta_i(t)} \right)
      \right\}
    \right),
    \label{eq:boundip}
  \end{align}
  where the last inequality follows from Lemma~\ref{lem:boundD12}
  and $g$ and $h$ are defined in \eqref{eq:defgh}.
  From this,
  since $\hat{\ell}(t) - m(t)$ is expressed as
  $\hat{\ell}(t) - m(t) = \frac{\ell_{j}(t) - m_j(t)}{p_j(t)} \chi_j$ with $j = I(t)$,
  we have
  \begin{align}
    &
    \linner
    \hat{\ell}({t}) - m(t),
    p(t) - q(t+1)
    \rinner
    -
    D_t ( q(t+1) , p(t))
    \nonumber
    \\
    &
    \leq
      \min \left\{
        \beta_j(t) g \left( \frac{\ell_j(t) - m_j(t)}{\beta_j(t)} \right),
        % \frac{(\ell_j(t) - m_j(t))^2 }{2\beta_j(t)}
        % + \frac{|\ell_j(t) - m_j(t)|^3 }{\beta_j(t)^2},
        \beta_j(t) \gamma (1 - p_{j}(t))
        h \left( \frac{\ell_j (t) - m_j(t)}{\gamma \beta_j(t)p_j(t)} \right)
      \right\}
      % -
      % \beta_i(t) D^{(1)}(p_i, q_i),
      % (\hat{\ell}_i(t) - m_i(t))(p_{i}(t) - p_{i}(T+1))
      % -
      % \beta_i(t) \log T D^{(2)}(p_i, q_i)
    \nonumber
    \\
    &
    \leq
    \left\{
      \begin{array}{ll}
        \frac{(\ell_j(t) - m_j(t))^2 }{2\beta_j(t) }
        + \frac{\delta|\ell_j(t) - m_j(t)|^3 }{\beta_j(t)^2}
        &
        (\gamma p_j(t) \leq 1 )
        \\
        \min\left\{
        \frac{(\ell_j(t) - m_j(t))^2 }{2\beta_j(t)}
        + \frac{\delta|\ell_j(t) - m_j(t)|^3 }{\beta_j(t)^2},
        \frac{(1-p_j(t))(\ell_j(t) - m_j(t))^2 }{\gamma p_j(t)^2 \beta_j(t)}
        \right\}
        &
        (\gamma p_j(t) > 1 )
      \end{array}
    \right.
    \nonumber
    \\
    &
    \leq
    \min\left\{
    \frac{(\ell_j(t) - m_j(t))^2 }{2\beta_j(t)}
    + \frac{\delta|\ell_j(t) - m_j(t)|^3 }{\beta_j(t)^2},
    \frac{(1-p_j(t))(\ell_j(t) - m_j(t))^2 }{\gamma p_j(t)^2 \beta_j(t)}
    \right\}
    \nonumber
    \\
    &
    \leq
    \left(
    \frac{1}{2 \beta_j(t)}
    +
    \frac{\delta}{\beta_j(t)^2}
    \right)
    {(\ell_j(t) - m_j(t))^2}
    \min\left\{
      1,
      \frac{2(1-p_j(t))}{\gamma p_j(t)^2}
    \right\}
    =
    \sum_{i=1}^K
    \left(
    \frac{1}{2 \beta_i(t)}
    +
    \frac{\delta}{\beta_i(t)^2}
    \right)\alpha_i(t),
    \label{eq:boundstab}
  \end{align}
  where the first inequality follows from \eqref{eq:defellhat} and \eqref{eq:boundip},
  the second inequality follows from \eqref{eq:boundg}, \eqref{eq:boundhh} and the fact that $|\frac{\ell_j(t) - m_j(t)}{\beta_j(t)}| \leq \frac{1}{\bet} \leq 1$,
  and the third inequality holds since $\gamma p_j(t) \leq 1$ means $\frac{1-p_j(t)}{\gamma p_j(t)^2} \geq \frac{1 - 1/\gamma}{\gamma (1/\gamma)^2} = \gamma - 1 \geq \frac{1}{2} + \delta$,
  which implies
  $
    \frac{(\ell_j(t) - m_j(t))^2 }{2\beta_j(t)}
    + \frac{\delta |\ell_j(t) - m_j(t)|^3 }{\beta_j(t)^2} \leq
    \frac{(1-p_j(t))(\ell_j(t) - m_j(t))^2 }{\gamma p_j(t)^2 \beta_j(t)}
  $.
  We hence have
  \begin{align*}
    \sum_{t=1}^T
    \left(
    \linner
    \hat{\ell}({t}) - m(t),
    p(t) - q(t+1)
    \rinner
    -
    D_t ( q(t+1) , p(t))
    \right)
    \leq
    \sum_{i=1}^K
    \sum_{t=1}^T
    \left(
    \frac{1}{2 \beta_i(t)}
    +
    \frac{\delta}{\beta_i(t)^2}
    \right)\alpha_i(t).
  \end{align*}
  We further have
  \begin{align}
    \label{eq:boundsumalphabeta}
    \sum_{t=1}^T \frac{\alpha_i(t)}{2 \beta_i(t)}
    \leq
    \gamma
    \left(
    \sqrt{ \bet^2 - \frac{1}{\gamma} + \frac{1}{\gamma} \sum_{t=1}^T \alpha_i(t)  }
    -
    \sqrt{\bet^2 - \frac{1}{\gamma}}
    \right)
    \leq
    \gamma
    \left(
    % \sqrt{ \tilde{\beta}_0^2 + \frac{1}{\gamma} \sum_{t=1}^T \alpha_i(t)  }
    \beta_{i}(T+1)
    -
    \bet
    \right)
    + 1 .
  \end{align}
  % where we define
  % \begin{align}
  %   \tilde{\beta_0} = \sqrt{\beta_0^2 - \frac{1}{\gamma}}.
  % \end{align}
  The first inequality in \eqref{eq:boundsumalphabeta} can be confirmed via the following:
  \begin{align*}
    &
    \sqrt{ \bet^2 - \frac{1}{\gamma} + \frac{1}{\gamma} \sum_{s=1}^{t} \alpha_i(s)  }
    -
    \sqrt{ \bet^2 - \frac{1}{\gamma} + \frac{1}{\gamma} \sum_{s=1}^{t-1} \alpha_i(s)  }
    \\
    &
    =
    \frac{\alpha_i(t)}
    {
      \gamma \left(
      \sqrt{ \bet^2 - \frac{1}{\gamma} + \frac{1}{\gamma} \sum_{s=1}^{t} \alpha_i(s)  }
      +
      \sqrt{ \bet^2 - \frac{1}{\gamma} + \frac{1}{\gamma} \sum_{s=1}^{t-1} \alpha_i(s)  }
      \right)
    }
    \geq
    \frac{\alpha_i(t)}
    {
      2 \gamma
      \sqrt{ \bet^2 + \frac{1}{\gamma} \sum_{s=1}^{t-1} \alpha_i(s)  }
    }
    =
    \frac{\alpha_i(t)}
    {
      2 \gamma \beta_i(t)
    } ,
    % \sqrt{\beta_0^2 - \frac{1}{\gamma}}
  \end{align*}
  where we used $\alpha_i(t) \leq 1$.
  The second inequality in \eqref{eq:boundsumalphabeta} follows from
  \begin{align*}
    \beta_{0} - \sqrt{\bet^2 - \frac{1}{\gamma}}
    =
    \frac{ 1 }{\gamma}
    \frac{1}{
    \beta_{0} + \sqrt{\bet^2 - \frac{1}{\gamma}}
    }
    \leq
    \frac{ 1 }{\gamma}.
  \end{align*}
  Similarly,
  we have
  \begin{align}
    \nonumber
    \sum_{t=1}^T
    \frac{\alpha_i(t)}{\beta_i(t)^2}
    &
    =
    \sum_{t=1}^T
    \frac{\alpha_i(t)}{\bet^2 + \frac{1}{\gamma} \sum_{s=1}^{t-1} \alpha_i(s) }
    =
    \gamma
    \sum_{t=1}^T
    \frac{\alpha_i(t)}{\gamma \bet^2 + \sum_{s=1}^{t-1} \alpha_i(s) }
    \\
    &
    \leq
    % \gamma \log \left(
    %   1 + \frac{1}{\gamma \beta_0^2 - 1} \sum_{t=1}^T \alpha_i(t)
    % \right)
    \gamma
    \log \left(
      1 + \frac{1}{\gamma {\beta}_0^2 - 1 } \sum_{t=1}^T \alpha_i(t)
    \right)
    \leq
    % \gamma \log \left(
    %   1 + \frac{1}{\gamma \beta_0^2 - 1} \sum_{t=1}^T \alpha_i(t)
    % \right)
    2 \gamma
    \log
    \frac{\beta_{i}(T+1)}{\beta_i(1)}
    +
    2
    % \left(
    %   1 + \frac{1}{\gamma {\beta}_0^2 - 1 } \sum_{t=1}^T \alpha_i(t)
    % \right)
    .
    \label{eq:boundalphabeta2}
  \end{align}
  The first inequality in \eqref{eq:boundalphabeta2} can be shown via the following:
  \begin{align*}
    &
    \log \left(
      1 + \frac{1}{\gamma {\beta}_0^2 - 1 } \sum_{s=1}^t \alpha_i(s)
    \right)
    -
    \log \left(
      1 + \frac{1}{\gamma {\beta}_0^2 - 1 } \sum_{s=1}^{t-1} \alpha_i(s)
    \right)
    % =
    % -
    % \log
    %   \frac{
    %   1 + \frac{1}{\gamma \tilde{\beta}_0^2 } \sum_{s=1}^{t-1} \alpha_i(s)
    %   }
    %   {
    %   1 + \frac{1}{\gamma \tilde{\beta}_0^2 } \sum_{s=1}^t \alpha_i(s)
    %   }
    \\
    &
    =
    -
    \log
    \left(
      1
      -
      \frac{
        \alpha_i(t)
      }
      {
      \gamma {\beta}_0^2 - 1 + \sum_{s=1}^t \alpha_i(s)
      }
    \right)
    % \geq
    % \frac{
    %   \alpha_i(t)
    % }
    % {
    % \gamma \tilde{\beta}_0^2 + \sum_{s=1}^t \alpha_i(s)
    % }
    \geq
    \frac{
      \alpha_i(t)
    }
    {
    \gamma {\beta}_0^2 + \sum_{s=1}^{t-1} \alpha_i(s)
    },
  \end{align*}
  where the last inequality follows from
  % $\gamma \tilde{\beta}_0^2 + \alpha_i(t) = \gamma {\beta}_0^2 - 1 + \alpha_i(t) \leq \gamma \beta_0^2$.
  $\alpha_i(t) \leq 1$.
  The second inequality in \eqref{eq:boundalphabeta2} follows from
  \begin{align*}
    &
    \log \left(
      1 + \frac{1}{\gamma {\bet}^2 - 1 } \sum_{t=1}^T \alpha_i(t)
    \right)
    \leq
    \log \left(
      1 + \frac{1}{\gamma {\bet}^2} \sum_{t=1}^T \alpha_i(t)
    \right)
    +
    \log \frac{\gamma{\bet}^2}{\gamma \bet^2 - 1}
    \\
    &
    =
    \log \left(
      \frac{\beta_{i}(T+1)^2}{\bet^2}
    \right)
    +
    \log \left(
      1
      +
      \frac{1}{\gamma \bet^2 - 1}
    \right)
    \leq
    2
    \log
      \frac{\beta_{i}(T+1)}{\bet}
    +
    \frac{2}{\gamma} .
  \end{align*}
  Combining \eqref{eq:boundstab}, \eqref{eq:boundsumalphabeta}, and \eqref{eq:boundalphabeta2},
  we obtain
  \begin{align*}
    &
    \sum_{t=1}^T
    \left(
    \linner
    \hat{\ell}({t}) - m(t),
    p(t) - q(t+1)
    \rinner
    -
    D_t ( q(t+1) , p(t))
    \right)
    \\
    &
    \leq
    \gamma
    \sum_{i=1}^K
    \left(
      \beta_i(T+1)
      -
      {\beta}_i(1)
      +
      2
      \delta
      \log \frac{\beta_i(T+1)}{\beta_i(1)}
    \right)
    +
    K(1 + 2 \delta ).
  \end{align*}
  Combining this,
  \eqref{eq:boundRiT},
  \eqref{eq:boundOFTRL} and \eqref{eq:boundpenalty},
  % \eqref{eq:boundpsiT1}, and \eqref{eq:boundsumpsi},
  we obtain
  \begin{align*}
    R_{i^*}(T)
    \leq
    \gamma
    \sum_{i=1}^K
    \E \left[
      2
      \beta_i(T+1)
      % \sqrt{\tilde{\beta}_0^2 + \frac{1}{\gamma} \sum_{t=1}^T \alpha_i(t)}
      -
      {\beta}_i(1)
      +
      2
      \delta \log
      \frac{\beta_i(T+1)}{\beta_i(1)}
      % \left(
      %   1 + \frac{1}{\gamma \tilde{\beta}_0^2} \sum_{t=1}^T \alpha_i(t)
      % \right)
    \right]
    +
    2 K (1 + \delta).
  \end{align*}
% \end{proof}
\qed

% \subsection{Proof of \eqref{eq:boundip}}

\subsection{Proof of Lemma~\ref{lem:bound-sumalpha-suboptimal}}
\begin{lemma}
  \label{lem:logTregret}
  Suppose $\ell(s) \in [0,1]$ for any $s \in [t]$ and
  define $m(t) \in [0, 1]$ by
  \begin{align}
    m(t) = \frac{1}{ t }\left(
      \frac{1}{2} + \sum_{s=1}^{t-1} \ell(s)
    \right).
  \end{align}
  We then have
  \begin{align}
    \sum_{t=1}^T ((\ell(t)-m(t))^2 - (\ell(t)-m^*)^2) \leq \frac{5}{4} + \log T
  \end{align}
  for any $m^* \in [0, 1]$.
\end{lemma}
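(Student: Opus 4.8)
The plan is to recognize $m(t)$ as the follow-the-leader predictor for the quadratic losses $f_s(m) = (m-\ell(s))^2$, $s\ge 1$, augmented by a ``ghost'' initial loss $f_0(m) = (m-\tfrac12)^2$: setting the derivative of $\sum_{s=0}^{t-1} f_s$ to zero gives $t\,m = \tfrac12 + \sum_{s=1}^{t-1}\ell(s)$, whose solution is exactly $m(t)$. I would then introduce the potential
\[
  \Psi_t = \min_{m \in \re}\Bigl( f_0(m) + \sum_{s=1}^{t} f_s(m) \Bigr),
\]
noting $\Psi_0 = 0$ and $m(1) = \tfrac12$.

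The key step is an exact evaluation of $\Psi_t - \Psi_{t-1}$. Since $f_0 + \sum_{s=1}^{t-1} f_s$ is a quadratic in $m$ whose $m^2$-coefficient equals $t$ and whose minimum value $\Psi_{t-1}$ is attained at $m(t)$, it equals $t\,(m-m(t))^2 + \Psi_{t-1}$. Adding $f_t(m) = (m-\ell(t))^2$ and minimizing over $m$ (completing the square) gives
\[
  \Psi_t = \Psi_{t-1} + \frac{t}{t+1}\,(\ell(t)-m(t))^2,
\]
hence $\Psi_T = \sum_{t=1}^T \frac{t}{t+1}\,(\ell(t)-m(t))^2$.

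From here the estimate is routine. On the one hand, by definition $\Psi_T \le f_0(m^*) + \sum_{s=1}^T f_s(m^*) \le \tfrac14 + \sum_{s=1}^T (\ell(s)-m^*)^2$ for any $m^* \in [0,1]$, using $(m^*-\tfrac12)^2 \le \tfrac14$. On the other hand, writing $\tfrac{t}{t+1} = 1 - \tfrac{1}{t+1}$ and using $(\ell(t)-m(t))^2 \le 1$,
\[
  \sum_{t=1}^T (\ell(t)-m(t))^2 = \Psi_T + \sum_{t=1}^T \frac{(\ell(t)-m(t))^2}{t+1} \le \Psi_T + \sum_{t=1}^T \frac{1}{t+1} \le \Psi_T + \log(T+1).
\]
Combining the two displays and bounding $\log(T+1) \le \log T + \tfrac1T \le \log T + 1$ yields $\sum_{t=1}^T\bigl((\ell(t)-m(t))^2 - (\ell(t)-m^*)^2\bigr) \le \tfrac54 + \log T$.

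I do not expect a genuine obstacle here: the only things requiring care are tracking the $m^2$-coefficient $t$ of the running quadratic so that the recursion for $\Psi_t - \Psi_{t-1}$ comes out exactly, and book-keeping the additive constant so it lands at $5/4$. An alternative would be to avoid the potential function and telescope through a be-the-leader inequality, but for quadratic losses the potential computation above is the most transparent route.
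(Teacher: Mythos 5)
Your proof is correct. It rests on the same underlying idea as the paper's: interpret $m(t)$ as the follow-the-leader point for the quadratic losses $(m-\ell(s))^2$ plus the ghost loss $(m-\tfrac12)^2$, and exploit the identity that the running quadratic (with leading coefficient $t$) equals its minimum plus $t(m-m(t))^2$ — this is exactly the paper's equation for the quadratic optimum. The execution differs in the bookkeeping: the paper telescopes the losses evaluated at the \emph{be-the-leader} points $m(t+1)$ and then needs a separate completing-the-square inequality, $ax-tx^2\le a^2/(4t)$ with $|2\ell(t)-m(t)-m(t+1)|\le 2$, to convert $(\ell(t)-m(t+1))^2$ back to $(\ell(t)-m(t))^2$, which produces the correction $\sum_{t=1}^T \tfrac1t$. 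You instead compute the potential increment \emph{exactly} as $\Psi_t-\Psi_{t-1}=\tfrac{t}{t+1}(\ell(t)-m(t))^2$ (the weighted-mean identity $\min_m\{t(m-a)^2+(m-b)^2\}=\tfrac{t}{t+1}(a-b)^2$), so the passage from the leader at time $t+1$ to the leader at time $t$ is absorbed into the recursion and the correction becomes $\sum_{t=1}^T\tfrac1{t+1}\le\log(T+1)$. This is a slightly cleaner route — one fewer inequality, and the intermediate bound $\tfrac14+\log(T+1)$ is marginally sharper than the paper's $\tfrac14+1+\log T$ — though both relax to the stated $\tfrac54+\log T$. All of your individual steps (the recursion, the comparison $\Psi_T\le\tfrac14+\sum_t(\ell(t)-m^*)^2$, and the harmonic-sum estimate) check out.
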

\begin{proof}
  From the definition of $m(t)$,
  $m(t)$ is expressed as
  \begin{align*}
    m(t) \in
    \argmin_{m \in \re} \left\{
      \left( m - \frac{1}{2} \right)^2 + \sum_{s=1}^{t-1} \left( m - \ell(s) \right)^2
    \right\} .
  \end{align*}
  From this,
  as the function
  $ \left( m - \frac{1}{2} \right)^2 + \sum_{s=1}^{t-1} \left( m - \ell(s) \right)^2 $
  is a quadratic function in $m$ with the leading coefficient $t$,
  we have
  \begin{align}
    \label{eq:quadraticopt}
    \left( m - \frac{1}{2} \right)^2 + \sum_{s=1}^{t-1} \left( m - \ell(s) \right)^2
    =
    \left( m(t) - \frac{1}{2} \right)^2 + \sum_{s=1}^{t-1} \left( m(t) - \ell(s) \right)^2
    +
    t (m - m(t))^2
  \end{align}
  for any $m \in \re$.
  Using this,
  we obtain
  \begin{align*}
    &
    \sum_{t=1}^T (\ell(t) - m^*)^2 + \left( m^* - \frac{1}{2}  \right)^2
    \\
    &
    =
    \sum_{t=1}^T (\ell(t) - m(T+1))^2 + \left( m(T+1) - \frac{1}{2}  \right)^2
    +
    (T+1) (m^* - m(T+1))^2
    \\
    &
    \geq
    \sum_{t=1}^T (\ell(t) - m(T+1))^2 + \left( m(T+1) - \frac{1}{2}  \right)^2
    \\
    &
    =
    \sum_{t=1}^{T-1} (\ell(t) - m(T+1))^2 + \left( m(T+1) - \frac{1}{2}  \right)^2
    +
    (\ell(T) - m(T+1))^2
    \\&
    =
    \sum_{t=1}^{T-1} (\ell(t) - m(T))^2 + \left( m(T) - \frac{1}{2}  \right)^2
    +
    (\ell(T) - m(T+1))^2
    +
    T (m(T+1) - m(T))^2
    \\
    &
    =
    \left(m(1) - \frac{1}{2}\right)^2
    +
    \sum_{t=1}^T (\ell(t) - m(t+1))^2
    +
    \sum_{t=1}^T t (m(t+1) - m(t))^2
    \\
    &
    =
    \sum_{t=1}^T (\ell(t) - m(t+1))^2
    +
    \sum_{t=1}^T t (m(t+1) - m(t))^2,
  \end{align*}
  where the first and the third equality follow from \eqref{eq:quadraticopt},
  and the forth equality can be shown by repeating the same transformation $T$ times.
  We hence have
  \begin{align*}
    &
    \sum_{t=1}^T \left((\ell(t)-m(t))^2 - (\ell(t)-m^*)^2 \right)
    \\
    &
    \leq
    \sum_{t=1}^T \left( (\ell(t)-m(t))^2 - (\ell(t)-m(t+1))^2 - t(m(t+1)-m(t))^2 \right) + \left( m^* - \frac{1}{2} \right)^2
    \\
    &
    =
    \sum_{t=1}^T \left((2\ell(t)-m(t) - m(t+1))(m(t+1) - m(t) ) - t(m(t+1)-m(t))^2 \right) + \left( m^* - \frac{1}{2} \right)^2
    \\
    &
    \leq
    \sum_{t=1}^T \frac{1}{4t}(2\ell(t)-m(t) - m(t+1))^2
    +
    \left( m^* - \frac{1}{2} \right)^2
    \\
    &
    \leq
    \sum_{t=1}^T \frac{1}{t}
    +
    \left( m^* - \frac{1}{2} \right)^2
    \leq
    \frac{5}{4}
    +
    \log T,
  \end{align*}
  where the first inequality follows from
  $a x - t x^2 = \frac{a^2}{4t} - ( \frac{a}{2 \sqrt{t}} - \sqrt{t}x )^2 \leq \frac{a^2}{4t}$ that holds for any $a, x \in \re$,
  and the last inequality holds since
  $ |2\ell(t)-m(t) - m(t+1) | \leq 2$ follows from $\ell(t), m(t) \in [0, 1]$.
\end{proof}

\quad\\
\textbf{Proof of Lemma~\ref{lem:bound-sumalpha-suboptimal}}\quad
  From the definition of $\alpha_i(t)$,
  we have
  \begin{align*}
    \sum_{t=1}^T \alpha_i(t)
    &
    \leq
    \sum_{t=1}^T \mathbf{1}[I(t) = i] (\ell_i(t) - m_i(t))^2
    \\
    &
    \leq
    \sum_{t=1}^T \mathbf{1}[I(t) = i] (\ell_i(t) - m^*_i)^2
    +
    \frac{5}{4}
    +
    \log (1 + \sum_{t=1}^T \mathbf{1}[I(t) = i] )
    \\
    &
    =
    \sum_{t=1}^T \mathbf{1}[I(t) = i] (\ell_i(t) - m^*_i)^2
    +
    \frac{5}{4}
    +
    \log (1 + N_i(T)),
  \end{align*}
  where the inequality follows from Lemma~\ref{lem:logTregret} and the definition of $m_i(t)$ given in \eqref{eq:defmi}.
\qed

\subsection{Proof of Lemma~\ref{lem:bound-sumalpha-optarm}}
% \begin{lemma}
%   \label{lem:bound-alpha_i*}
%   For any $i^* \in [K]$, and $t \in [T]$
%   we have
%   \begin{align}
%     \E[ \alpha_{i^*}(t) ]
%     \leq
%     \frac{2}{\sqrt{\gamma}}
%     \E\left[
%       1 - p_{i^*}(t)
%     \right]
%     =
%     \frac{2}{\sqrt{\gamma}}
%     \sum_{i \in [K] \setminus \{ i^* \}}
%     \E\left[
%       p_i(t)
%     \right]
%     .
%   \end{align}
%   Consequently,
%   for any $i^* \in [K]$,
%   we have
%   \begin{align}
%     \E
%     \left[
%       % \sqrt{\sum_{t=1}^T \alpha_{i^*}(t)}
%       \sum_{t=1}^T \alpha_{i^*}(t)
%     \right]
%     \leq
%     \frac{2}{\sqrt{\gamma}}
%     \sum_{i \in [K] \setminus \{ i^* \}}
%     P_i
%   \end{align}
% \end{lemma}
\begin{proof}
  From the definition of $\alpha_i(t)$ in \eqref{eq:defalphabeta},
  we have
  \begin{align*}
    \E[\alpha_i(t) | p_i(t)]
    &
    =
    \E\left[
      \mathbf{1}[I(t) = i]
      (\ell_i(t) - m_i(t))^2
      \min\left\{
        1,
        \frac{2(1-p_i(t))}{\gamma p_i(t)^2 }
      \right\}
      |
      p_i(t)
    \right]
    \\
    &
    \leq
    % \E\left[
    \E\left[
      \mathbf{1}[I(t) = i]
      \min\left\{
        1,
        \frac{2(1-p_i(t))}{\gamma p_i(t)^2 }
      \right\}
      |
      p_i(t)
    \right]
    \\
    &
    =
    % \E\left[
      \min\left\{
        p_i(t),
        \frac{2(1-p_i(t))}{\gamma p_i(t) }
      \right\}
    % \right]
    \leq
    % \E \left[
    \left\{
      \begin{array}{ll}
        p_i(t) & (p_i(t) < \frac{1}{\sqrt{\gamma}} ) \\
        2 \frac{1-p_i(t)}{ \sqrt{\gamma}} & (p_i(t) \geq \frac{1}{\sqrt{\gamma}} )
      \end{array}
    \right.
    % \right]
    % \\
    % &
    \leq
    \frac{2}{\sqrt{\gamma}}
    % \E\left[
      (
      1 - p_i(t)
      ),
    % \right] ,
  \end{align*}
  where the first inequality follows from the condition of $\ell_i(t), m_i(t) \in [0, 1]$
  and
  the last inequality is due to $\sqrt{\gamma} \geq 2$ that follows from the assumption of $T \geq 55$.
\end{proof}

\subsection{Proof of the regret bound of (\ref{eq:thm1-bound-sto}) in stochastic settings}
\label{sec:appendix-sto}
This subsection shows
the regret bound in \eqref{eq:thm1-bound-sto} by using Lemmas~\ref{lem:boundRTbeta},
\ref{lem:bound-sumalpha-suboptimal}, and \ref{lem:bound-sumalpha-optarm}.
More precisely,
we will show the following:
\begin{proposition}
  \label{prop:bound-sto}
  In any stochastic setting with a unique optimal arm,
  % the regret for
  the proposed algorithm achieves
  \begin{align}
    \label{eq:prop-bound-sto}
    \limsup_{T \to \infty}
    \frac{R(T)}{\log T}
    \leq
    \sum_{i \in [K] \setminus \{ i^* \}}
    h \left(
      \frac{\sigma_i^2}{\Delta_i}
    \right)
    +
    2 \bet
  \end{align}
  where $h: \re_{\geq 0} \to \re$ is defined as
  \begin{align}
    h(z)
    &
    =
    \left\{
      \begin{array}{l}
        2 \bet
        \quad  \mbox{if} \quad  0 \leq z \leq \frac{\bet}{2(1 + \delta/\bet )}  ,
        \\
        % \quad \quad \mbox{otherwise},
        \\
        2 z \left({ 1 + \sqrt{1 + 2 \frac{\delta }{z} }}\right) - 2 \delta
        +
        4 \delta \left(
          \log \frac{z}{\bet}
          +
          \log \left(
            { 1 + \sqrt{1 + 2 \frac{\delta }{z} }}
          \right)
        \right)
        +
        \frac{\bet^2}{z}
        -
        2\bet
        \\
        \quad \quad \quad \quad \quad \quad \quad \quad \quad \quad \quad \quad \quad \quad
        \quad \quad \quad \quad \quad \quad \quad \quad \quad \quad
          \mbox{if} \quad z > \frac{\bet}{2(1 + \delta/\bet )}
      \end{array}
    \right. .
    \label{eq:defhh}
  \end{align}
\end{proposition}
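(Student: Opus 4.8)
The plan is to feed the master bound of Lemma~\ref{lem:boundRTbeta} the $\sum_{t}\alpha_i(t)$-estimates \eqref{eq:bound-sumalpha-suboptimal} and \eqref{eq:bound-sumalpha-optarm}, and then close the argument with the self-bounding identity $R(T)=\sum_{i\ne i^*}\Delta_iP_i$ used with multiplier one, i.e.\ $R(T)=2R(T)-R(T)$. Since $\beta_i(1)=\bet$ and $\beta_i(T+1)=\sqrt{\bet^2+\frac1\gamma\sum_{t=1}^T\alpha_i(t)}$, concavity of $\sqrt{\cdot}$ (and of $\log$) together with \eqref{eq:bound-sumalpha-suboptimal} gives $\E[\beta_i(T+1)]\le\sqrt{\bet^2+\frac1\gamma(\sigma_i^2 P_i+\log(1+P_i)+\tfrac54)}$ for suboptimal $i$, while \eqref{eq:bound-sumalpha-optarm} gives $\E[\beta_{i^*}(T+1)]\le\sqrt{\bet^2+\frac{2}{\gamma^{3/2}}\sum_{i\ne i^*}P_i}$. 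Substituting these into Lemma~\ref{lem:boundRTbeta} produces a deterministic upper bound $\mathcal U(T)$ on $R(T)$, and the self-bounding step then yields $R(T)\le 2\mathcal U(T)-\sum_{i\ne i^*}\Delta_iP_i$.

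Before optimizing I would first run a crude version of this computation to obtain the a~priori estimate $P_i=O(\log T)$ for every arm. This is the only place the uniqueness hypothesis enters: it guarantees $\Delta_{\min}:=\min_{i\ne i^*}\Delta_i>0$, hence $\sum_{i\ne i^*}P_i\le R(T)/\Delta_{\min}$, which lets one control the optimal-arm contribution coming from \eqref{eq:bound-sumalpha-optarm}. Bounding $\log(1+P_i)\le 2\log T$ trivially, using $\alpha_i(t)\le 1$, $\log x\le x$, and $\sqrt{a+b}\le\sqrt a+\sqrt b$, and then applying the self-bounding step together with an AM--GM step to absorb the resulting $O(\sqrt{R(T)})$ term, gives $R(T)=O\big(\log T\,(K+\sum_{i\ne i^*}\sigma_i^2/\Delta_i)\big)$ and hence $P_i=O(\log T)$. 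With this in hand, $\log(1+P_i)=O(\log\log T)$ and $\frac{1}{\gamma^{3/2}}\sum_{i\ne i^*}P_i=O((\log T)^{-1/2})$, so (with $K$ fixed) every extra term is $o(\log T)$: the optimal-arm contribution to $\mathcal U(T)$ becomes $\gamma\bet+o(\gamma)$, and the suboptimal-arm contribution becomes $\gamma\big(2b_i-\bet+\delta\log(b_i^2/\bet^2)\big)+o(\gamma)$ with $b_i:=\sqrt{\bet^2+\sigma_i^2 P_i/\gamma}\ge\bet$.

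Now the self-bounding step bounds $R(T)/\gamma$ by $2\bet+o(1)$ plus, for each $i\ne i^*$, the quantity $2\big(2b_i-\bet+\delta\log(b_i^2/\bet^2)\big)-\Delta_iP_i/\gamma$. Writing $z_i:=\sigma_i^2/\Delta_i$ and noting $\Delta_iP_i/\gamma=(b_i^2-\bet^2)/z_i$, this equals $4b_i-2\bet+4\delta\log(b_i/\bet)-(b_i^2-\bet^2)/z_i$, which is at most $\sup_{b\ge\bet}\big(4b-2\bet+4\delta\log(b/\bet)-(b^2-\bet^2)/z_i\big)$. Setting the derivative to zero gives the interior critical point $b^\star=z_i+\sqrt{z_i^2+2\delta z_i}=z_i(1+\sqrt{1+2\delta/z_i})$, which belongs to $[\bet,\infty)$ precisely when $z_i\ge\frac{\bet}{2(1+\delta/\bet)}$ (equivalently, when the derivative is nonnegative at $b=\bet$); in that case the supremum equals, using $b^{\star 2}=2z_i(b^\star+\delta)$ to simplify, exactly the second branch $h(z_i)$ in \eqref{eq:defhh}, and otherwise the supremum is attained at $b=\bet$ with value $2\bet$, matching the first branch. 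Summing over $i\ne i^*$, adding the $2\bet$ from the optimal arm, dividing by $\log T$, and letting $T\to\infty$ gives \eqref{eq:prop-bound-sto}.

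The main obstacle is the accounting of lower-order terms: one must check that the $\log(1+P_i)$ factors, the gaps introduced by Jensen's inequality, and the $\gamma^{-1/2}$ factor from \eqref{eq:bound-sumalpha-optarm} all stay $o(\log T)$ after being multiplied by $\gamma$, which is what forces the two-stage structure (a~priori $O(\log T)$ bound, then the sharp bound) and makes the uniqueness assumption essential; one must also arrange the concavity and $\sqrt{a+b}\le\sqrt a+\sqrt b$ steps so that they leave the leading coefficients $2$ and $2\delta$ of $\beta_i(T+1)$ and $\log\beta_i(T+1)$ in Lemma~\ref{lem:boundRTbeta} untouched, since these double to $4$ and $4\delta$ under self-bounding and determine the exact form of $h$. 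Given that, the identification of the optimum with the piecewise function $h$, including its threshold, is routine calculus.
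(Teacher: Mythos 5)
Your proposal is correct and follows essentially the same route as the paper's Appendix proof: Lemma~\ref{lem:boundRTbeta} fed with \eqref{eq:bound-sumalpha-suboptimal} and \eqref{eq:bound-sumalpha-optarm}, Jensen's inequality, the self-bounding identity $R(T)=\sum_{i\neq i^*}\Delta_i P_i$ with multiplier one, and then a one-dimensional concave maximization whose critical point $b^\star=z_i(1+\sqrt{1+2\delta/z_i})$ and threshold $z_i\gtrless \frac{\bet}{2(1+\delta/\bet)}$ reproduce \eqref{eq:defhh} exactly (your substitution $b=\beta_i(T+1)$ is the paper's $y=\sqrt{1+\sigma_i^2x/\bet^2}$ up to the factor $\bet$, and Lemma~\ref{lem:maxabc} is your explicit critical-point computation). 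The one place you genuinely diverge is the control of the lower-order terms: you run a two-stage argument, first extracting an a~priori $P_i=O(\log T)$ (via $\Delta_{\min}>0$ from uniqueness) so that $\log(1+P_i)$ and the $\gamma^{-3/2}\sum_i P_i$ contribution from the optimal arm can be discarded as $o(\log T)$ before optimizing, whereas the paper keeps every lower-order term inside the objective $f_i$ of \eqref{eq:deffi}, takes $\max_{x\geq 0}\{2f_i(x)-\Delta_i x\}$ unconditionally, and shows the maximizer is dominated by an explicit $\tilde x_i$ so that those terms contribute only $O(\gamma^{-1/4})$. Both devices are sound; the paper's avoids needing any preliminary regret bound (and so never invokes $\Delta_{\min}$ directly, only per-arm gaps), while yours is arguably more transparent at the cost of running the self-bounding machinery twice.
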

This proposition leads to \eqref{eq:thm1-bound-sto} in Theorem~\ref{thm:main}.
In fact,
for
$
  z > \frac{\bet}{2(1 + \delta/\bet )}
$,
$h(z)$ in \eqref{eq:defhh} is bounded as
\begin{align*}
  h (z)
  &
  \leq
  2 z \left( 1 + 1 + \frac{\delta }{z} \right) - 2 \delta
  +
  4 \delta \left(
    \log z
    +
    \log \left(
      { 1 + \sqrt{1 + 2 \frac{\delta }{z} }}
    \right)
  \right)
  +
  \frac{\bet^2}{\bet}
  \cdot
  2 \left(1 +  \frac{\delta}{\bet}\right)
  -
  2\bet
  \\
  &
  =
  4 z
  +
  4 \delta
  \left(
    \log z
    +
    \log \left(
      { 1 + \sqrt{1 + 2 \frac{\delta }{z} }}
    \right)
    +
    \frac12
  \right)
  \\
  &
  \leq
  4z + c \log (1 + z)
  \quad
  \left(
    c = O\left(
      \delta^2
    \right)
    =
    O\left(
      \left( \log \frac{1}{\epsilon} \right)^2
    \right)
  \right),
\end{align*}
where the last inequality follows from $\log (1+z) = \Omega( 1 / \delta )$
that holds for
$
  z > \frac{\bet}{2(1 + \delta/\bet )}
$.
Hence,
for any $z \geq 0$,
$h(z)$ is bounded as
\begin{align}
  h(z) \leq \max \left\{
    4 z + c \log (1+z),
    2 \bet
  \right\} .
  \label{eq:boundh}
\end{align}
From this and Proposition~\ref{prop:bound-sto},
recalling that $\bet = 1 + \epsilon$,
% \eqref{eq:limRTh},
we obtain \eqref{eq:thm1-bound-sto} in Theorem~\ref{thm:main}.
Further,
in the case of $\epsilon = 0.2$,
we can confirm numerically that
$h(z) \leq \max\left\{ 4 z + 4.2 \log (1+z), 2.4 \right\}$,
which leads to the regret bound described in Table~\ref{table:regretbound-sto}.

The rest of this subsection will provide proof of Proposition~\ref{prop:bound-sto}.
As a preliminary,
we show the following lemma.
\begin{lemma}
  \label{lem:maxabc}
  It holds for any $a,b,c>0$ that
  \begin{align}
    \max_{y \geq 0}
    \left\{
      a y
      +
      b \log y
      -
      c y^2
    \right\}
    =
    \frac{1}{2}
    \left(
      \frac{a}{4c} \left({a + \sqrt{a^2 + 8bc}}\right) - b
    \right)
    +
    b
    \log \frac{a + \sqrt{a^2 + 8bc}}{4c}  .
    \label{eq:maxabc}
  \end{align}
\end{lemma}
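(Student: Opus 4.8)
The plan is to treat this as a one–variable unconstrained optimization. First I would observe that the objective $f(y) = ay + b\log y - cy^2$ is defined on $(0,\infty)$ (the constraint $y\ge 0$ is effectively $y>0$ since $\log y \to -\infty$ as $y \to 0^+$), and that it is strictly concave there: $ay$ is linear, $b\log y$ is concave because $b>0$, and $-cy^2$ is concave because $c>0$. Moreover $f(y) \to -\infty$ both as $y \to 0^+$ and as $y \to \infty$, so $f$ attains its supremum at a unique interior point $y^*$ characterized by the first-order condition $f'(y^*)=0$.

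Next I would write out the first-order condition $a + \tfrac{b}{y^*} - 2cy^* = 0$ and clear denominators to obtain the quadratic $2c(y^*)^2 - ay^* - b = 0$. Solving and taking the positive root (the negative root is ruled out by $y^*>0$, and the discriminant $a^2+8bc>0$ guarantees a real root) gives
\[
  y^* = \frac{a + \sqrt{a^2 + 8bc}}{4c}.
\]
Then I would substitute back. Rather than plugging $y^*$ in directly, it is cleaner to use the first-order condition in the form $2c(y^*)^2 = ay^* + b$, i.e.\ $c(y^*)^2 = \tfrac12(ay^*+b)$, so that
\[
  f(y^*) = ay^* + b\log y^* - c(y^*)^2 = \frac{ay^* - b}{2} + b\log y^*.
\]
Finally, substituting the closed form for $y^*$ into $\tfrac{ay^*-b}{2}$ and into $b\log y^*$ yields exactly the right-hand side of \eqref{eq:maxabc}, using $\tfrac{a}{4c}\bigl(a+\sqrt{a^2+8bc}\bigr) = a y^*$.

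I do not expect any genuine obstacle here; the only points requiring a word of care are the concavity/coercivity argument that justifies reducing to the stationary point, and the choice of the positive root of the quadratic. Both are immediate from $a,b,c>0$. The rest is the routine algebraic substitution sketched above.
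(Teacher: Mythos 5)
Your proposal is correct and follows essentially the same route as the paper's proof: concavity reduces the problem to the first-order condition, which gives the quadratic $2c(y^*)^2 - ay^* - b = 0$, and the identity $c(y^*)^2 = \tfrac12(ay^*+b)$ is then used to simplify $f(y^*)$ to $\tfrac12(ay^*-b) + b\log y^*$ before substituting the closed form of $y^*$. The only difference is that you spell out the coercivity and the choice of the positive root slightly more explicitly, which the paper leaves implicit.
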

\begin{proof}
  Denote
  $f(y)
    =
    a y
    +
    b \log y
    -
    c y^2
  $
  and let $f'$ represent the derivative of $f$.
  As $f$ is a concave function,
  the maximum is attained by $y^*>0$ satisfying $f'(y^*) = 0$,
  which is rewritten as follows:
  % The derivative of the LHS of \eqref{eq:maxabc} can be expressed as
  \begin{align*}
    f'(y^*)
    =
    a + \frac{b}{y^*} - 2 c y^* = 0.
  \end{align*}
  This is equivalent to
  \begin{align*}
    2 c y^{*2} - ay^* - b = 0.
  \end{align*}
  By solving this quadratic equation,
  we obtain
  \begin{align*}
    y^* = \frac{a + \sqrt{a^2 + 8 bc}}{4c}.
  \end{align*}
  Hence,
  the maximum of $f$ is expressed as
  \begin{align*}
  \max_{y > 0} f(y)
  &
  =
  f(y^*)
  =
  a y^* + b \log y^*
  - c y^{*2}
  =
  a y^* + b \log y^*
  - \frac{1}{2}(ay^* + b)
  \\
  &
  =
  \frac{1}{2}(ay^* - b) + b \log y^*
  =
  \frac{1}{2}\left(\frac{a}{4c} \left(a + \sqrt{a^2 + 8 bc}\right) - b\right) + b \log \frac{a + \sqrt{a^2 + 8 bc}}{4c},
  \end{align*}
  which completes the proof.
\end{proof}
We are now ready to show Proposition~\ref{prop:bound-sto}.
\quad\\
\textbf{Proof of Proposition~\ref{prop:bound-sto}}\quad
  From Lemma~\ref{lem:bound-sumalpha-suboptimal}
  we have \eqref{eq:bound-sumalpha-suboptimal}.
  Hence,
  for each suboptimal arm $i \in [K] \setminus \{ i^* \}$
  the term in Lemma~\ref{lem:boundRTbeta} can be bounded as follows:
  % \begin{align}
  %   \nonumber
  %   \E \left[
  %     \sum_{t=1}^T \alpha_i(t)
  %   \right]
  %   &
  %   \leq
  %   \E \left[
  %     \sum_{t=1}^{T}\mathbf{1}[I(t) = i] (\ell_i(t) - \mu_i )^2
  %   + \log (1 + N_i(T))
  %   \right]
  %   + \frac{5}{4}
  %   \\
  %   &
  %   =
  %   \E \left[
  %     \sum_{t=1}^{T}p_i(t) \sigma_i^2
  %   + \log (1 + N_i(T))
  %   \right]
  %   + \frac{5}{4}
  %   \leq
  %   \sigma_i^2 P_i + \log (1 + P_i) + \frac{5}{4} ,
  %   \label{eq:bound-sumalpha-suboptimal}
  % % $,
  % \end{align}
  % where the last inequality follows from the definition \eqref{eq:defPi} of $P_i$
  % and Jensen's inequality.
  \begin{align}
    \nonumber
    &
    \E \left[
      2 \beta_i(T+1) - \beta_i(1) + 2 \delta \log \frac{\beta_i(T+1)}{\beta_i(1)}
    \right]
    \\
    \nonumber
    &
    =
    \E \left[
    2 \sqrt{\bet^2 + \frac{1}{\gamma} \sum_{t=1}^T \alpha_{i}(t)}
    -
    \bet
    +
    \delta \log \left(
      1 + \frac{1}{\gamma \bet^2} \sum_{t=1}^T \alpha_{i}(t)
    \right)
    \right]
    \\
    \nonumber
    &
    \leq
    2 \sqrt{\bet^2 + \frac{1}{\gamma} \left( \sigma_i^2 P_i + \log(1 + P_i) + \frac{5}{4} \right)}
    -
    \bet
    +
    \delta \log \left(
      1 + \frac{1}{\gamma \bet^2}
      \left( \sigma_i^2 P_i + \log(1 + P_i) + \frac{5}{4} \right)
    \right)
    \\
    \nonumber
    &
    \leq
    2 \sqrt{\bet^2 + \frac{\sigma_i^2 P_i}{\gamma}}
    +
    \frac{1}{\gamma \bet}
    \left( \log(1 + P_i) + \frac{5}{4} \right)
    -
    \bet
    +
    \delta \log \left(
      1 + \frac{\sigma_i^2 P_i}{\gamma \bet^2}
      % \left( \sigma_i^2 P_i + \log(1 + P_i) + \frac{5}{4} \right)
    \right)
    +
    \frac{\delta}{\gamma \bet^2}
    \left( \log(1 + P_i) + \frac{5}{4} \right)
    \\
    &
    \leq
    2 \sqrt{ \bet^2 + \frac{\sigma_i^2 P_i}{\gamma}}
    -
    \bet
    +
    \delta \log \left(
      1 + \frac{\sigma_i^2 P_i}{\gamma \bet^2}
      % \left( \sigma_i^2 P_i + \log(1 + P_i) + \frac{5}{4} \right)
    \right)
    +
    \frac{\xi}{\gamma}
    % \left(
    % \frac{1}{\tilde{\beta}_0}
    % +
    % \frac{\delta}{\tilde{\beta}_0^2}
    % \right)
    \left( \log(1 + P_i) + \frac{5}{4} \right) ,
    \label{eq:bound-beta-suboptimal}
  \end{align}
  where the first inequality follows from \eqref{eq:bound-sumalpha-suboptimal},
  the second inequality follows from $\sqrt{x + y} \leq \sqrt{x} + \frac{y}{2 \sqrt{x}}$ that holds for any $x > 0$ and $y \geq 0$,
  and $\log(1 + x + y) \leq \log(1 + x) + y$ that holds for any $x, y \geq 0$.
  We denote $\xi = \frac{1}{1+\epsilon} + \frac{\delta}{(1+\epsilon)^2}$.

  From Lemma~\ref{lem:bound-sumalpha-optarm},
  for the optimal arm $i^*$,
  we have
  \begin{align}
    &
    \E \left[
      2 \beta_{i^*}(T+1) - \beta_{i^*}(1) + 2 \delta \log \frac{\beta_{i^*}(T+1)}{\beta_{i^*}(1)}
    \right]
    \nonumber
    \\
    &
    =
    \E \left[
    2 \sqrt{\bet^2 + \frac{1}{\gamma} \sum_{t=1}^T \alpha_{i^*}(t)}
    -
    \bet
    +
    \delta \log \left(
      1 + \frac{1}{\gamma \bet^2} \sum_{t=1}^T \alpha_{i^*}(t)
    \right)
    \right]
    \nonumber
    \\
    &
    \leq
    \E
    \left[
    2 \sqrt{ \bet^2 + \frac{1}{\gamma} \sum_{t=1}^T \alpha_{i^*}(t)}
    -
    \bet
    +
    2 \delta
    \left(
      \sqrt{
        1+
        \frac{1}{\gamma \bet^2} \sum_{t=1}^T \alpha_{i^*}(t)
      }
      - 1
    \right)
    \right]
    \nonumber
    \\
    &
    =
    2
    \left(
      \bet + \delta
      % 1 + \frac{\delta}{\beta_{i^*}(1)}
    \right)
    \E
    \left[
      \sqrt{
        1+
        \frac{1}{\gamma \bet^2 } \sum_{t=1}^T \alpha_{i^*}(t)
      }
      - 1
    \right]
    +
    \bet
    \nonumber
    \\
    &
    \leq
    2
    \left(
      \bet + \delta
      % 1 + \frac{\delta}{\beta_{i^*}(1)}
    \right)
    \left(
      \sqrt{
        1+
        \frac{2}{\gamma^{3/2} \bet^2} \sum_{i \in [K] \setminus \{ i^* \}} P_i
      }
      - 1
    \right)
    +
    \bet
    \nonumber
    \\
    % &
    % \leq
    % 2
    % \left(
    %   \bet + \delta
    %   % 1 + \frac{\delta}{\beta_{i^*}(1)}
    % \right)
    % \sum_{i \in [K] \setminus \{ i^* \}}
    % \left(
    %   \sqrt{
    %     1+
    %     \frac{2}{\gamma^{3/2} \bet^2} P_i
    %   }
    %   - 1
    % \right)
    % +
    % \bet
    % \nonumber
    % \\
    &
    \leq
    2
    \left(
      1 + \delta
    \right)
    \sum_{i \in [K] \setminus \{ i^* \}}
      \sqrt{
        \frac{2}{\gamma^{3/2} } P_i
      }
    +
    \bet,
    \label{eq:bound-beta-optarm}
    % \leq
    % 2
    % \left(
    %   \tilde{\beta}_0 + \delta
    % \right)
    % \sum_{i\in [K] \setminus \{ i^* \}}
    % \left(
    %   \sqrt{
    %     1+
    %     \frac{2 P_i}{\gamma^{3/2} \tilde{\beta}_0^2}
    %   }
    %   - 1
    % \right)
    % +
    % \tilde{\beta}_0 .
    % \\
    % &
    % \leq
    % \frac{
    %   2\sqrt{2}
    %   (
    %     \tilde{\beta}_0 + \delta
    %   )
    % }
    % {
    %   \tilde{\beta}_0
    %   \gamma^{3/4}
    % }
    % \sum_{i\in [K] \setminus \{ i^* \}}
    %   \sqrt{
    %     P_i
    %   }
    % +
    % \tilde{\beta}_0 .
    % % \log \left(
    % %   1 + \frac{1}{\gamma \tilde{\beta}_0^2} \sum_{t=1}^T \alpha_i(t)
    % % \right)
  \end{align}
  where the first inequality follows from the inequality of $\log(1+x) \leq 2 (\sqrt{1+x} - 1)$
  and the second inequality follows from Lemma~\ref{lem:bound-sumalpha-optarm}.
  Combining Lemma~\ref{lem:boundRTbeta} with \eqref{eq:bound-beta-suboptimal} and \eqref{eq:bound-beta-optarm},
  we obtain
  \begin{align}
    \frac{R_{i^*}(T)}{\gamma}
    &
    \leq
    % \gamma
    \sum_{i \in [K] \setminus \{ i^* \}}
    \left(
      2 \sqrt{ \bet^2 + \frac{\sigma_i^2 P_i}{\gamma}}
      -
      \bet
      +
      \delta \log \left(
        1 + \frac{\sigma_i^2 P_i}{\gamma \bet^2}
        % \left( \sigma_i^2 P_i + \log(1 + P_i) + \frac{5}{4} \right)
      \right)
      +
      \frac{\xi}{\gamma}
      \left( \log(1 + P_i) + \frac{5}{4} \right)
    \right)
    \nonumber
    \\
    &
    \quad
    +
    2
    \left(
      1 + \delta
      % 1 + \frac{\delta}{\beta_{i^*}(1)}
    \right)
    \sum_{i \in [K] \setminus \{ i^* \}}
    % \left(
      \sqrt{
        % 1+
        \frac{2}{\gamma^{3/2} } P_i
      }
      % - 1
    % \right)
    +
    \bet
    +
    \frac{2(1 + \delta)}{\gamma}
    \nonumber
    \\
    &
    =
    \sum_{i \in [K] \setminus \{i^* \}} f_i \left(
      \frac{P_i}{\gamma}
    \right)
    + \bet
    +
    \frac{1}{\gamma}
    \left( 2 + 2 \delta + \frac{5}{4} \xi \right),
    \label{eq:boundRTsto}
  \end{align}
  where we define
  \begin{align}
    \label{eq:deffi}
    f_i(x)
    =
    2 \sqrt{ \bet^2 + \sigma_i^2 x }
    +
    2 ( 1 + \delta ) \sqrt{ \frac{2 x }{ \sqrt{\gamma}  }}
    +
    \delta \log \left( 1 + \frac{\sigma_i^2 x }{\bet^2}   \right)
    +
    \frac{\xi}{\gamma} \log ( 1 + \gamma x )
    -
    \bet
  \end{align}
  % \begin{align}
  %   \tilde{\sigma}_i
  %   =
  %   {\sigma}_i
  %   +
  %   \frac{ \sqrt{2} (\tilde{\beta}_0 + \delta) }
  %   { \tilde{\beta}_0 \gamma^{1/4} },
  %   \quad
  %   \xi =
  %   \frac{1}{\tilde{\beta}_0}
  %   +
  %   \frac{\delta}{\tilde{\beta}_0^2}.
  % \end{align}
  On the other hand,
  in stochastic settings,
  the regret can be expressed as
  \begin{align}
    R_{i^*}(T)
    &
    =
    \E
    \left[
      \sum_{t=1}^T ( \ell_{I(t)}(t) - \ell_{i^*}(t))
    \right]
    =
    \E
    \left[
      \sum_{t=1}^T ( \mu_{I(t)} - \mu_{i^*})
    \right]
    \nonumber
    \\
    &
    =
    \E
    \left[
      \sum_{t=1}^T \Delta_{I(t)}
    \right]
    =
    \E
    \left[
      \sum_{t=1}^T \sum_{i=1}^K \Delta_{i} p_i(t)
    \right]
    =
    \sum_{i \in [K] \setminus \{ i^* \}}
    \Delta_i
    \E
    \left[
      \sum_{t=1}^T p_i(t)
    \right]
    =
    \sum_{i \in [K] \setminus \{ i^* \}}
    \Delta_i P_i .
    \label{eq:RTDelta}
  \end{align}
  Combining \eqref{eq:boundRTsto} and \eqref{eq:RTDelta},
  we obtain
  \begin{align}
    &
    \frac{ R_{i^*}(T) }{\log T}
    =
    \frac{ R_{i^*}(T) }{\gamma}
    =
    2\frac{R_{i^*}(T)}{\gamma}
    -
    \frac{R_{i^*}(T)}{\gamma}
    =
    2\frac{R_{i^*}(T)}{\gamma}
    -
    \frac{1}{\gamma}
    \sum_{i \in [K] \setminus \{ i^* \}}
    \Delta_i P_i
    \nonumber
    \\
    &
    \leq
    \sum_{i \in [K] \setminus \{ i^* \}}
    \left(
      2 f_i \left(
        \frac{P_i}{\gamma}
      \right)
      -
      \Delta_i
      \frac{P_i}{\gamma}
    \right)
    -
    2(K-2)\bet
    % +
    % {\beta}_{i^*}(1)
    +
    \frac{2}{\gamma}
    \left( 2 + 2 \delta + \frac{5}{4} \xi \right)
    \nonumber
    \\
    &
    \leq
    \sum_{i \in [K] \setminus \{ i^* \}}
    \max_{x \geq 0}
    \left\{
      2 f_i (x)
      -
      \Delta_i x
    \right\}
    +
    2\bet
    % +
    % {\beta}_{i^*}(1)
    +
    \frac{2}{\gamma}
    \left( 2 + 2 \delta + \frac{5}{4} \xi \right).
    \label{eq:limRTgamma}
  \end{align}
  % Define
  % \begin{align}
  %   f_i(x) =
  %   4 \sqrt{\tilde{\sigma}_i^2 x}
  %   +
  %   2 \delta \log \left(
  %     1 + \frac{\sigma_i^2 x}{ \tilde{\beta}_0^2}
  %   \right)
  %   +
  %   2 \frac{\xi}{\gamma}
  %   \log(1 + \gamma x) - \Delta_i x.
  %   % -
  %   % \Delta_i x
  % \end{align}
  Let us evaluate $
    \max_{x \geq 0}
    \left\{
      2 f_i (x)
      -
      \Delta_i x
    \right\}
  $.
  The derivative of $f_i(x)$ is expressed as
  % We then have
  \begin{align*}
    f'_i(x)
    :=
    \frac{\mathrm{d}f_i}{\mathrm{d}x}(x)
    =
    \frac{\sigma_i^2}{\sqrt{ \bet^2 + \sigma_i^2 x}}
    +
    \frac{2
      \left(1+{\delta} \right)
    }{\gamma^{1/4}}
    \frac{1}{\sqrt{ 2 x} }
    +
    \frac{\delta \sigma_i^2 }{\bet^2 + \sigma_{i}^2 x}
    % \left(
    %   1 + \frac{\sigma_i^2 x}{ \tilde{\beta}_0^2}
    % \right)^{-1}
    +
    \frac{\xi}{1 + \gamma x} .
    % - \Delta_i .
  \end{align*}
  As $f_i$ is a concave function,
  the maximum of $ 2 f_i(x) - \Delta_i x $ is attained by $x_i^* \in \re$ satisfying $2 f'(x_i^*) = \Delta_i$.
  Define $\tilde{x}_i \geq 0$ by
  \begin{align}
    \tilde{x}_i
    :=
    \max \left\{
      \left(
      \frac{4 \sigma_i}{\Delta_i}
      \right)^2,
      \frac{8\delta}{\Delta_i},
      \frac{16\xi}{\gamma \Delta_i},
      \frac{2^9 (1+\delta)^2}{\sqrt{\gamma} \Delta_i^2}
    \right\}.
  \end{align}
  We then have
  \begin{align*}
    2f_i'(\tilde{x}_i)
    &
    \leq
    \frac{2 \sigma_i}{\sqrt{
      \left(
      \frac{4 \sigma_i}{\Delta_i}
      \right)^2
    }}
    +
    \frac{4(1+\delta)}{\gamma^{1/4}}
    \frac{1}{\sqrt{2 \cdot
      \frac{2^9 (1+\delta)^2}{\sqrt{\gamma} \Delta_i^2}
    }}
    +
    \frac{2 \delta \sigma_i^2 }{{\bet}^2 + \sigma_{i}^2
      \frac{8\delta}{\Delta_i}
    }
    % \left(
    %   1 + \frac{\sigma_i^2 x}{ \tilde{\beta}_0^2}
    % \right)^{-1}
    +
    \frac{ 2 \xi}{1 + \gamma
      \frac{16\xi}{\gamma \Delta_i}
    }
    \\
    &
    \leq
    \frac{\Delta_i}{2}
    + \frac{\Delta_i}{8}
    + \frac{\Delta_i}{4}
    + \frac{\Delta_i}{8}
    =
    \Delta_i,
  \end{align*}
  which implies $\tilde{x}_i \geq x_i^*$.
  Hence,
  we have
  \begin{align}
    &
    \max_{x \geq 0} \left\{ 2f_i(x) - \Delta_i x \right\}
    =
    2 f_i(x_i^*) - \Delta_i x_i^*
    \nonumber
    \\
    &
    =
    4 \sqrt{ \bet^2 + \sigma_i^2 x_i^* }
    +
    4 ( 1 + \delta )  \sqrt{ \frac{2 x_i^* }{ \sqrt{\gamma}  }}
    +
    2 \delta \log \left( 1 + \frac{\sigma_i^2 x_i^* }{\bet^2}   \right)
    +
    2 \frac{\xi}{\gamma} \log ( 1 + \gamma x_i^* )
    - \Delta_i x_i^*
    - 2 \bet
    \nonumber
    \\
    &
    % =
    \le
    4 \sqrt{ \bet^2 + \sigma_i^2 x_i^* }
    +
    % 4 ( \bet + \delta ) \left( \sqrt{ 1 + \frac{2 \tilde{x}_i }{\bet^2 \sqrt{\gamma}  }} - 1 \right)
    4 ( 1 + \delta )  \sqrt{ \frac{2 \tilde{x}_i }{ \sqrt{\gamma}  }}
    +
    2 \delta \log \left( 1 + \frac{\sigma_i^2 {x}_i^* }{\bet^2}   \right)
    +
    2 \frac{\xi}{\gamma} \log ( 1 + \gamma \tilde{x}_i )
    - \Delta_i x_i^*
    - 2 \bet
    \nonumber
    \\
    &
    \leq
    \max_{x \geq 0}
    \left\{
      4 \sqrt{ \bet^2 + \sigma_i^2 x }
      +
      2 \delta \log \left( 1 + \frac{\sigma_i^2 x }{\bet^2}   \right)
      - \Delta_i x
    \right\}
    +
    % 4 ( \bet + \delta ) \left( \sqrt{ 1 + \frac{2 \tilde{x}_i }{\bet^2 \sqrt{\gamma}  }} - 1 \right)
    4 ( 1 + \delta )  \sqrt{ \frac{2 \tilde{x}_i }{ \sqrt{\gamma}  }}
    +
    2 \frac{\xi}{\gamma} \log ( 1 + \gamma \tilde{x}_i )
    - 2 \bet
    \nonumber
    \\
    &
    =
    \max_{x \geq 0}
    \left\{
      g_i(x)
      - \Delta_i x
    \right\}
    -
    2 \bet
    +
    O\left(
      \frac{1}{\gamma^{1/4}}
    \right),
    \label{eq:limmax2f}
  \end{align}
  % From this
  % and the fact that
  % $
  %   \lim_{\gamma \to \infty} \tilde{x}_i
  %   =
  %   \max \left\{
  %     \left(
  %     \frac{4 \sigma_i}{\Delta_i}
  %     \right)^2,
  %     \frac{8\delta}{\Delta_i}
  %   \right\}
  % $,
  % we have
  % \begin{align}
  %   \lim_{\gamma \to \infty}
  %   \max_{x \geq 0} \left\{ 2f_i(x) - \Delta_i x \right\}
  %   &
  %   \leq
  %   \max_{x \geq 0}
  %   \left\{
  %     4 \sqrt{ \bet^2 + \sigma_i^2 x }
  %     +
  %     2 \delta \log \left( 1 + \frac{\sigma_i^2 x }{\bet^2}   \right)
  %     - \Delta_i x
  %   \right\}
  %   -
  %   2 \bet
  %   \\
  %   &
  %   =
  %   \max_{x \geq 0}
  %   \left\{
  %     g_i(x)
  %     - \Delta_i x
  %   \right\}
  %   -
  %   2 \bet ,
  %   \label{eq:limmax2f}
  % \end{align}
  where we define
  \begin{align}
    g_i(x)
    =
    4 \sqrt{ \bet^2 + \sigma_i^2 x }
    +
    2 \delta \log \left( 1 + \frac{\sigma_i^2 x }{\bet^2}   \right)
  \end{align}
  From \eqref{eq:limmax2f} and \eqref{eq:limRTgamma},
  we have
  \tnote{change $\lim$ to limsup and liminf (no local tex env, don't change for safe now)}
  \begin{align}
    \limsup_{T \to \infty}
    \frac{R_{i^*}(T)}{\log T}
    \leq
    \sum_{i \in [K] \setminus \{ i^* \}}
    \left(
    \max_{x \geq 0}
    \left\{
      % 4 \sqrt{ \bet^2 + \sigma_i^2 x }
      g_i(x)
      - \Delta_i x
    \right\}
    - 2 \bet
    % +
    % 2 \delta \log \left( 1 + \frac{\sigma_i^2 \hat{x}_i }{\bet^2}   \right)
    \right)
    +
    2 \bet .
    \label{eq:boundRTg}
  \end{align}
  In the following,
  we use the notation of $z_i = \frac{\sigma_i^2}{\Delta_i}$.
  As we have
  \begin{align*}
    g'_i(x)
    =
    \frac{2 \sigma_i^2}{\sqrt{\bet^2 + \sigma_i^2 x}}
    +
    \frac{2 \delta \sigma_i^2}{\bet^2 + \sigma_i^2 x}
    \leq
    2
    \sigma_i^2
    \left(
      \frac{1}{\bet}
      +
      \frac{\delta}{\bet^2}
    \right),
  \end{align*}
  if $z_i = \frac{\sigma_i^2}{\Delta_i} \leq \frac{1}{2
    (
      {1}/{\bet}
      +
      {\delta}/{\bet^2}
    )
  } = \frac{\bet}{2(1 + \delta/\bet)}$,
  the maximum of $g_i(x) - \Delta_i x$ is attained by $x = 0$.
  Hence,
  we have
  \begin{align}
    \max_{x \geq 0}
    \left\{
      % 4 \sqrt{ \bet^2 + \sigma_i^2 x }
      g_i(x)
      - \Delta_i x
    \right\}
    =
    g_i(0)
    =
    4 \bet
    \quad
    \mbox{if}
    \quad
    z_i := \frac{\sigma_i^2}{\Delta_i} \leq \frac{\bet}{2(1 + \delta/\bet)}.
    \label{eq:boundg0}
  \end{align}
  Otherwise,
  we have
  \begin{align*}
    g_i(x) - \Delta_i x
    &
    =
    4 \bet \sqrt{ 1 + \frac{\sigma_i^2 x}{\bet^2} }
    +
    2 \delta \log \left( 1 + \frac{\sigma_i^2 x }{\bet^2}   \right)
    -
    \frac{\bet^2 \Delta_i}{\sigma_i^2}
    \left( 1 + \frac{\sigma_i^2 x}{\bet^2} \right)
    +
    \frac{\bet^2 }{z_i}
    \\
    &
    =
    4 \bet \sqrt{ 1 + \frac{\sigma_i^2 x}{\bet^2} }
    +
    4 \delta \log \left( \sqrt{ 1 + \frac{\sigma_i^2 x }{\bet^2} } \right)
    -
    \frac{\bet^2 }{z_i}
    \left( \sqrt{ 1 + \frac{\sigma_i^2 x}{\bet^2} } \right)^2
    +
    \frac{\bet^2 }{z_i}.
  \end{align*}
  From this,
  by setting
  $y = \sqrt{ 1 + \frac{\sigma_i^2 x}{\bet^2} } $,
  we obtain
  % which implies that the maximum can be bounded as
  % we can express the maximum as
  \begin{align}
    \max_{x \geq 0}
    \left\{
      % 4 \sqrt{ \bet^2 + \sigma_i^2 x }
      g_i(x)
      - \Delta_i x
    \right\}
    \leq
    \max_{y \geq 0}
    \left\{
      4 \bet y
      +
      4 \delta \log y
      -
      \frac{\bet^2 }{z_i} y^2
    \right\}
    +
    \frac{\bet^2 }{z_i} .
    \label{eq:boundg1}
  \end{align}
  We here use the following:
  \begin{align*}
    \max_{y \geq 0}
    \left\{
      a y
      +
      b \log y
      -
      c y^2
    \right\}
    =
    \frac{1}{2}
    \left(
      \frac{a}{4c} \left({a + \sqrt{a^2 + 8bc}}\right) - b
    \right)
    +
    b
    \log \frac{a + \sqrt{a^2 + 8bc}}{4c}  ,
  \end{align*}
  which holds for any $a, b, c > 0$.
  We hence have
  \begin{align}
    &
    \max_{y \geq 0}
    \left\{
      4 \bet y
      +
      4 \delta \log y
      -
      \frac{\bet^2 }{z_i} y^2
    \right\}
    \nonumber
    \\
    &
    =
    \frac{1}{2}
    \left(
      \frac{4 \bet z_i}{4\bet^2} \left({ 4 \bet + \sqrt{(4 \bet)^2 + 32 \frac{\delta \bet^2}{z_i} }}\right) - 4 \delta
    \right)
    +
    4 \delta
    \log \frac{4 \bet + \sqrt{(4 \bet)^2 + 32 {\delta \bet^2}/{z_i}}}{4 {\bet^2}/{z_i}}
    \nonumber
    \\
    &
    =
    2
    \left(
      z_i \left({ 1 + \sqrt{1 + 2 \frac{\delta }{z_i} }}\right) - \delta
    \right)
    +
    4 \delta \left(
      \log \frac{z_i}{\bet}
      +
      \log \left(
        { 1 + \sqrt{1 + 2 \frac{\delta }{z_i} }}
      \right)
    \right).
    \label{eq:boundg2}
  \end{align}
  Combining \eqref{eq:limmax2f} with \eqref{eq:boundg0}, \eqref{eq:boundg1} and \eqref{eq:boundg2},
  we obtain
  \begin{align}
    % \lim_{\gamma \to \infty}
    \max_{x \geq 0} \left\{
      2 f_i(x) - \Delta_i x
    \right\}
    \leq
    h \left(
      \frac{\sigma_i^2}{\Delta_i}
    \right)
    +
    O \left(
      \frac{1}{\gamma^{1/4}}
    \right)
    ,
    \label{eq:boundmax2f}
  \end{align}
  % Combining \eqref{eq:boundRTg} with \eqref{eq:boundg0}, \eqref{eq:boundg1} and \eqref{eq:boundg2},
  % we obtain
  % \begin{align}
  %   \lim_{T \to \infty}
  %   \frac{R_{i^*}(T)}{\log T}
  %   \leq
  %   2 (1 + \epsilon) +
  %   \sum_{i \in [K] \setminus \{ i^* \}}
  %   h \left(
  %     \frac{\sigma_i^2}{\Delta_i}
  %   \right)
  %   \label{eq:boundRTh}
  % \end{align}
  where $h: \re_{\geq 0} \to \re$ is defined by \eqref{eq:defhh}.
  %  $h(z) = 2\bet$ for
  % $
  %   z \leq \frac{\bet}{2(1 + \delta/\bet )}
  % $
  % and
  % \begin{align}
  %   % h(z)
  %   % &
  %   % = 2 \bet \quad \mbox{if} \quad
  %   %   z \leq \frac{\bet}{2(1 + \delta/\bet )} ,
  %   % \\
  %   h(z)
  %   &
  %   =
  %   \left\{
  %     \begin{array}{l}
  %       2 \bet
  %       \quad  \mbox{if} \quad  0 \leq z \leq \frac{\bet}{2(1 + \delta/\bet )}  ,
  %       \\
  %       % \quad \quad \mbox{otherwise},
  %       \\
  %       2 z \left({ 1 + \sqrt{1 + 2 \frac{\delta }{z} }}\right) - 2 \delta
  %       +
  %       4 \delta \left(
  %         \log \frac{z}{\bet}
  %         +
  %         \log \left(
  %           { 1 + \sqrt{1 + 2 \frac{\delta }{z} }}
  %         \right)
  %       \right)
  %       +
  %       \frac{\bet^2}{z}
  %       -
  %       2\bet
  %       \\
  %       \quad \quad \quad \quad \quad \quad \quad \quad \quad \quad \quad \quad \quad \quad
  %       \quad \quad \quad \quad \quad \quad \quad \quad \quad \quad
  %         \mbox{if} \quad z > \frac{\bet}{2(1 + \delta/\bet )} .
  %   % \\
  %   % &
  %       % &
  %       % \quad \quad \mbox{otherwise}.
  %       % \left( z > \frac{1+\epsilon}{2(1 + \delta/ (1+\epsilon)) }   \right)
  %     \end{array}
  %   \right.
  %   \label{eq:defhh}
  % \end{align}
  From \eqref{eq:limRTgamma} and \eqref{eq:boundmax2f},
  we have \eqref{eq:prop-bound-sto}.
  % \begin{align}
  %   \lim_{T \to \infty}
  %   \frac{R_{i^*}(T)}{\log T}
  %   \leq
  %   \sum_{i \in [K] \setminus \{ i^* \}}
  %   h \left(
  %     \frac{\sigma_i^2}{\Delta_i}
  %   \right)
  %   +
  %   2 \bet
  %   \label{eq:limRTh}
  % \end{align}
\qed

\subsection{Analysis for stochastic settings with adversarial corruption}
\label{sec:appendix-corrupted}
This subsection shows a regret bound for corrupted the stochastic settings
described as follows:
\begin{proposition}
  \label{prop:bound-corrupted}
  In any corrupted stochastic setting with a unique optimal arm,
  % the regret for
  the proposed algorithm achieves
  \begin{align}
    \label{eq:prop-bound-corrupted}
    % \lim_{T \to \infty}
    R(T)
    \leq
    \mathcal{R}
    +
    O \left(
      \sqrt{C \mathcal{R}}
    \right) ,
    % \sum_{i \in [K] \setminus \{ i^* \}}
    % h \left(
    %   \frac{\sigma_i^2}{\Delta_i}
    % \right)
    % +
    % 2 \bet
  \end{align}
  where $\mathcal{R}$ is the RHS of \eqref{eq:thm1-bound-sto}
  and $C$ is the corruption level defined in Section~\ref{sec:setting}.
\end{proposition}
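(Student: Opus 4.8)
The plan is to follow the proof of Proposition~\ref{prop:bound-sto} and adapt its self-bounding argument to the corrupted setting, with three changes: a corruption-inflated variance estimate, a relaxed self-bounding constraint, and the split $R(T)=(1+\lambda)R(T)-\lambda R(T)$ with a tunable $\lambda\in(0,1]$ in place of $R(T)=2R(T)-R(T)$. Throughout, $i^*$ denotes the optimal arm of $\cD$ and $\zeta(t)=\|\ell(t)-\ell'(t)\|_\infty$.

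First I would establish a corruption-robust version of \eqref{eq:bound-sumalpha-suboptimal}. Applying Lemma~\ref{lem:bound-sumalpha-suboptimal} with $m_i^*=\mu_i$ and writing $\ell_i(t)=\ell_i'(t)+(\ell_i(t)-\ell_i'(t))$, the elementary bound $(\ell_i(t)-\mu_i)^2\le(\ell_i'(t)-\mu_i)^2+3\zeta(t)$ together with the independence of the fresh draw $\ell'(t)$ from $(p(t),I(t))$ yields $\E[\sum_t\alpha_i(t)]\le\sigma_i^2P_i+\log(1+P_i)+\tfrac{5}{4}+3C_i$, where $C_i=\E[\sum_t\mathbf{1}[I(t)=i]\zeta(t)]$ and $\sum_iC_i=C$. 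Lemma~\ref{lem:bound-sumalpha-optarm} needs no change, since its proof uses only $\ell_i(t),m_i(t)\in[0,1]$. Feeding these into Lemma~\ref{lem:boundRTbeta} and peeling the corruption contribution off $\beta_i(T+1)=\sqrt{\bet^2+\tfrac1\gamma\sum_t\alpha_i(t)}$ via $\sqrt{a+b}\le\sqrt a+\sqrt b$, $\log(1+x)\le\sqrt x$, and Cauchy--Schwarz $\sum_i\sqrt{C_i}\le\sqrt{KC}$, I would obtain $R(T)\le V(P)+O(\sqrt{KC\log T})$, where $V(P)$ is $\gamma$ times the right-hand side of \eqref{eq:boundRTsto}.

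Next I would record the self-bounding constraint: comparing the regret against the post-corruption losses with the regret against the temporary losses $\ell'(t)$ gives $R(T)\ge\sum_{i\neq i^*}\Delta_iP_i-2C$. For any $\lambda\in(0,1]$, combining this with the bound above,
\[
R(T)\le(1+\lambda)V(P)-\lambda\sum_{i\neq i^*}\Delta_iP_i+2\lambda C+O(\sqrt{KC\log T}).
\]
Using $(1+\lambda)f_i(x)-\lambda\Delta_ix=\tfrac{1+\lambda}{2}\bigl(2f_i(x)-\tfrac{2\lambda}{1+\lambda}\Delta_ix\bigr)$ and repeating the per-arm maximization of Proposition~\ref{prop:bound-sto} (Lemma~\ref{lem:maxabc} and the $\tilde{x}_i$ argument), each arm will contribute at most $\tfrac{1+\lambda}{2}h\!\bigl(\tfrac{1+\lambda}{2\lambda}\cdot\tfrac{\sigma_i^2}{\Delta_i}\bigr)+o(1)$. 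Since $\tfrac{1+\lambda}{2}h(\tfrac{1+\lambda}{2\lambda}z)=h(z)$ at $\lambda=1$ and the excess over $h(z)$ for $\lambda<1$ vanishes to first order there, the total excess over $\sum_i h(\sigma_i^2/\Delta_i)$, multiplied by $\gamma$, should be bounded by $O\bigl(\tfrac{(1-\lambda)^2}{\lambda}\mathcal{R}+(1-\lambda)\mathcal{R}+\mathcal{R}\log\tfrac1\lambda\bigr)$, using $\gamma\sum_i\sigma_i^2/\Delta_i\le\mathcal{R}/4$ and $K\log T=O(\mathcal{R})$ (the latter since \eqref{eq:thm1-bound-sto} contributes at least $2(1+\epsilon)\log T$ per arm). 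Taking $\lambda=\sqrt{\mathcal{R}/(\mathcal{R}+8C)}\in(0,1]$ makes each of $\tfrac{(1-\lambda)^2}{\lambda}\mathcal{R}$, $(1-\lambda)\mathcal{R}\le\min\{8C,\mathcal{R}\}$, $2\lambda C$, $\mathcal{R}\log\tfrac1\lambda$, and $\sqrt{KC\log T}$ be $O(\sqrt{C\mathcal{R}})$, which gives $R(T)\le\mathcal{R}+O(\sqrt{C\mathcal{R}})$.

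The hard part will be the per-arm maximization with the unbalanced weights $(1+\lambda)$ and $\lambda$: one must quantify precisely how $\tfrac{1+\lambda}{2}h(\tfrac{1+\lambda}{2\lambda}z)$ exceeds $h(z)$ as $\lambda$ shrinks --- in particular how rescaling the argument by $\tfrac{1+\lambda}{2\lambda}$ affects the logarithmic part of $h$ and its flat branch $h\equiv2(1+\epsilon)$ --- and verify that, after multiplying by $\gamma$ and summing over the (up to $K$) arms, this excess is at most $\mathcal{R}$ times a factor vanishing as $C\to0$, so that balancing it against $2\lambda C$ yields $O(\sqrt{C\mathcal{R}})$ while the leading term remains exactly $\mathcal{R}$ and not merely $O(\mathcal{R})$. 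The remaining bookkeeping of lower-order terms parallels the proof of Proposition~\ref{prop:bound-sto}.
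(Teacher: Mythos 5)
Your proposal is correct and follows essentially the same route as the paper: the corruption-inflated bound on $\E[\sum_t\alpha_i(t)]$ feeding into Lemma~\ref{lem:boundRTbeta}, the relaxed self-bounding constraint $R(T)\ge\sum_{i\neq i^*}\Delta_i P_i-2C$, the split $R(T)=(1+\lambda)R(T)-\lambda R(T)$ with the per-arm maximization reduced to $\tfrac{1+\lambda}{2}h\bigl(\tfrac{1+\lambda}{2\lambda}\cdot\tfrac{\sigma_i^2}{\Delta_i}\bigr)$, and a choice of $\lambda$ balancing $\lambda C$ against $(\tfrac1\lambda-1)\mathcal{R}$. The step you flag as hard is handled in the paper exactly as you anticipate, via $h(z)\le\max\{4z+c\log(1+z),2(1+\epsilon)\}$ together with $\tfrac{(1+\lambda)^2}{\lambda}\le 4+(\tfrac1\lambda-1)$ and $\log(1+z/\lambda)\le\log(1+z)+(\tfrac1\lambda-1)z$, yielding a per-arm excess of $(1+c)(\tfrac1\lambda-1)\sigma_i^2/\Delta_i$, which is of the same order as your claimed bound; your additive $3\zeta(t)$ treatment of the corruption term is a harmless (arguably more careful) variant of the paper's $P_i'$ decomposition.
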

\begin{proof}
  In stochastic settings with adversarial corruption,
  using Lemma~\ref{lem:bound-sumalpha-suboptimal} with $m_i^* = \mu_i$ we have
  \begin{align}
    \E \left[
      \sum_{t=1}^T \alpha_i(t)
    \right]
    &
    \leq
    \E \left[
      \sum_{t=1}^{T}\mathbf{1}[I(t) = i] (\ell_i(t) - \mu_i )^2
    + \log (1 + N_i(T))
    \right]
    + \frac{5}{4}
    \nonumber
    \\
    &
    =
    \E \left[
      \sum_{t=1}^{T}p_i(t) (\ell_i(t) - \ell'_i(t) + \ell'_i(t) - \mu_i )^2
    + \log (1 + N_i(T))
    \right]
    + \frac{5}{4}
    \nonumber
    \\
    &
    =
    \E \left[
      \sum_{t=1}^{T}p_i(t) \left( (\ell_i(t) - \ell'_i(t))^2  + \sigma_i^2 \right)
    + \log (1 + N_i(T))
    \right]
    + \frac{5}{4}
    \nonumber
    \\
    &
    \leq
    \sigma_i^2 P_i + \log (1 + P_i) + \frac{5}{4}
    +
    P'_i,
    % \E \left[
    %   \sum_{t=1}^{T}p_i(t) (\ell_i(t) - \ell'_i(t))^2
    % \right].
    \label{eq:bound-sumalpha-corrupted}
  \end{align}
  where we define
  \begin{align}
    \label{eq:defPprime}
    P'_i
    =
    \E \left[
      \sum_{t=1}^{T}p_i(t) (\ell_i(t) - \ell'_i(t))^2
    \right].
  \end{align}
  Hence,
  in a similar argument to that of showing \eqref{eq:bound-beta-suboptimal},
  by using \eqref{eq:bound-sumalpha-corrupted} instead of \eqref{eq:bound-sumalpha-suboptimal},
  we obtain
  \begin{align}
    &
    \E \left[
      2 \beta_i(T+1) - \beta_i(1) + 2 \delta \log \frac{\beta_i(T+1)}{\beta_i(1)}
    \right]
    \nonumber
    \\
    &
    =
    \E \left[
    2 \sqrt{\bet^2 + \frac{1}{\gamma} \sum_{t=1}^T \alpha_{i}(t)}
    -
    \bet
    +
    \delta \log \left(
      1 + \frac{1}{\gamma \bet^2} \sum_{t=1}^T \alpha_{i}(t)
    \right)
    \right]
    % \\
    % &
    % \leq
    % 2 \sqrt{\bet^2 + \frac{1}{\gamma} \left( \sigma_i^2 P_i + \log(1 + P_i) + \frac{5}{4} \right)}
    % -
    % \bet
    % +
    % \delta \log \left(
    %   1 + \frac{1}{\gamma \bet^2}
    %   \left( \sigma_i^2 P_i + \log(1 + P_i) + \frac{5}{4} \right)
    % \right)
    % \\
    % &
    % \leq
    % 2 \sqrt{\bet^2 + \frac{\sigma_i^2 P_i}{\gamma}}
    % +
    % \frac{1}{\gamma \bet}
    % \left( \log(1 + P_i) + \frac{5}{4} \right)
    % -
    % \bet
    % +
    % \delta \log \left(
    %   1 + \frac{\sigma_i^2 P_i}{\gamma \bet^2}
    %   % \left( \sigma_i^2 P_i + \log(1 + P_i) + \frac{5}{4} \right)
    % \right)
    % +
    % \frac{\delta}{\gamma \bet^2}
    % \left( \log(1 + P_i) + \frac{5}{4} \right)
    \nonumber
    \\
    &
    \leq
    2 \sqrt{ \bet^2 + \frac{\sigma_i^2 P_i}{\gamma}}
    -
    \bet
    +
    \delta \log \left(
      1 + \frac{\sigma_i^2 P_i}{\gamma \bet^2}
      % \left( \sigma_i^2 P_i + \log(1 + P_i) + \frac{5}{4} \right)
    \right)
    +
    \frac{\xi}{\gamma}
    \left( \log(1 + P_i) + \frac{5}{4} \right)
    +
    2 \sqrt{\frac{P'_i}{\gamma}}
    +
    \delta \log \left(
      1 + \frac{P'_i}{\gamma \bet^2}
    \right)
    \nonumber
    \\
    &
    \leq
    2 \sqrt{ \bet^2 + \frac{\sigma_i^2 P_i}{\gamma}}
    -
    \bet
    +
    \delta \log \left(
      1 + \frac{\sigma_i^2 P_i}{\gamma \bet^2}
      % \left( \sigma_i^2 P_i + \log(1 + P_i) + \frac{5}{4} \right)
    \right)
    +
    \frac{\xi}{\gamma}
    \left( \log(1 + P_i) + \frac{5}{4} \right)
    +
    \left(
    2 + \frac{\delta}{\bet}
    \right)
    \sqrt{\frac{P'_i}{\gamma}}.
    \label{eq:bound-beta-suboptimal-corrupted}
  \end{align}
  Combining this with
  Lemma~\ref{lem:boundRTbeta} and \eqref{eq:bound-beta-optarm},
  via a similar argument to that of showing \eqref{eq:boundRTsto},
  we have
  \begin{align}
    \frac{R_{i^*}(T)}{\gamma}
    \leq
    \sum_{i \in [K] \setminus \{i^* \}} f_i \left(
      \frac{P_i}{\gamma}
    \right)
    +
    \bet
    % +
    % {\beta}_{i^*}(1)
    +
    \frac{1}{\gamma}
    \left( 2 + 2 \delta + \frac{5}{4} \xi \right)
    +
    \left(
    2 + \frac{\delta}{\bet}
    \right)
    \sum_{i \in [K] \setminus \{ i^* \}}
    \sqrt{\frac{P'_i}{\gamma}},
    \label{eq:boundRTcorrupted0}
  \end{align}
  where $f_i$ is defined in \eqref{eq:deffi}.
  We further have
  \begin{align}
    &
    \sum_{i \in [K] \setminus \{ i^* \}}
    \sqrt{\frac{P'_i}{\gamma}}
    \leq
    \sqrt{
      \frac{K-1}{\gamma}
      \sum_{i \in [K] \setminus \{ i^* \}} P'_i
    }
    =
    \sqrt{
      \frac{K-1}{\gamma}
      \E \left[
      \sum_{t=1}^T
      \sum_{i \in [K] \setminus \{ i^* \}}
      p_i(t)( \ell_i(t) - \ell'_i(t) )^2
      \right]
    }
    \nonumber
    \\
    &
    \leq
    \sqrt{
      \frac{K-1}{\gamma}
      \E \left[
      \sum_{t=1}^T
      \| \ell_i(t) - \ell'_i(t) \|_{\infty}^2
      \right]
    }
    \leq
    \sqrt{
      \frac{K-1}{\gamma}
      \E \left[
      \sum_{t=1}^T
      \| \ell_i(t) - \ell'_i(t) \|_{\infty}
      \right]
    }
    =
    \sqrt{
      \frac{K-1}{\gamma} C
    },
    \label{eq:boundRTcorrupted1}
  \end{align}
  where the first inequality follows from the Cauchy-Schwarz inequality,
  the first equality follows from the definition of $P'_i$ in \eqref{eq:defPprime},
  the second inequality follows from the fact that $\sum_{i \in [K] \setminus \{ i^* \}} p_i(t) \leq 1$,
  and
  the third inequality follows from the assumption of $\ell(t), \ell'(t) \in [0, 1]^K$.
  Combining \eqref{eq:boundRTcorrupted0} and \eqref{eq:boundRTcorrupted1},
  we obtain
  \begin{align}
    \frac{R_{i^*}(T)}{\gamma}
    \leq
    \sum_{i \in [K] \setminus \{i^* \}} f_i \left(
      \frac{P_i}{\gamma}
    \right)
    +
    \bet
    +
    \frac{1}{\gamma}
    \left( 2 + 2 \delta + \frac{5}{4} \xi \right)
    +
    \left(
    2 + \frac{\delta}{\bet}
    \right)
    \sqrt{
      \frac{K-1}{\gamma} C
    } .
    \label{eq:boundRTcorrupted}
  \end{align}
  Further,
  in corrupted stochastic settings,
  the regret can be bounded as
  \begin{align}
    R_{i^*}(T)
    &
    =
    \E
    \left[
      \sum_{t=1}^T ( \ell_{I(t)}(t) - \ell_{i^*}(t))
    \right]
    \nonumber
    \\
    &
    =
    \E
    \left[
      \sum_{t=1}^T ( \ell'_{I(t)}(t) - \ell'_{i^*}(t))
    \right]
    +
    \E
    \left[
      \sum_{t=1}^T  (\ell_{I(t)}(t) - \ell'_{I(t)}(t) + \ell'_{i^*}(t) - \ell_{i^*}(t))
    \right]
    \nonumber
    \\
    &
    % =
    % \E
    % \left[
    %   \sum_{t=1}^T \Delta_{I_t}
    % \right]
    % =
    % \E
    % \left[
    %   \sum_{t=1}^T \sum_{i=1}^K \Delta_{i} p_i(t)
    % \right]
    % =
    % \sum_{i \in [K] \setminus \{ i^* \}}
    % \Delta_i
    % \E
    % \left[
    %   \sum_{t=1}^T p_i(t)
    % \right]
    \geq
    \sum_{i \in [K] \setminus \{ i^* \}}
    \Delta_i P_i - 2 C ,
    \label{eq:RTDelta-corrupted}
  \end{align}
  where the last inequality follows from a similar argument to that of showing \eqref{eq:RTDelta} and
  the assumptions of corrupted stochastic settings.
  From \eqref{eq:boundRTcorrupted} and \eqref{eq:RTDelta-corrupted},
  for any $\lambda \in (0, 1 ]$,
  we have
  \begin{align}
    \frac{R_{i^*}(T)}{\log T}
    &
    =
    (1 + \lambda) \frac{R_{i^*}(T)}{\gamma}
    -
    \lambda \frac{R_{i^*}(T)}{\gamma}
    \nonumber
    \\
    &
    \leq
    \sum_{i \in [K] \setminus \{i^*\}}
    \max_{x \geq 0} \left\{
      (1 + \lambda)  f_i(x)  - \lambda \Delta_i x
    \right\}
    +
    2
    \left(
      2 + \frac{\delta}{\bet}
    \right)
    \sqrt{
      \frac{K-1}{\gamma} C
    }
    +
    2 \lambda \frac{C}{\gamma}
    \nonumber
    \\
    &
    \quad
    +
    (1+\lambda)
    \left(
      \bet +
      \frac{1}{\gamma}
      \left(
        2 + 2 \delta + \frac{5}{4}\xi
      \right)
    \right),
    \label{eq:limRTgamma-corrupted}
  \end{align}
  which can be shown in a way similar to how we showed \eqref{eq:limRTgamma}.
  Further,
  we have
  \begin{align}
    &
    \max_{x \geq 0} \left\{
      (1 + \lambda) f_i(x) - \lambda \Delta_i x
    \right\}
    =
    \frac{1+\lambda}{2}
    \max_{x \geq 0} \left\{
      2 f_i(x) - \frac{2 \lambda \Delta_i}{1+\lambda} x
    \right\}
    \nonumber
    \\
    &
    \leq
    \frac{1+\lambda}{2}
    h \left(
      \frac{(1+\lambda)\sigma_i^2}{2 \lambda \Delta_i}
    \right)
    +
    O\left( \frac{1}{\gamma^{1/4}} \right)
    \nonumber
    \\
    &
    \leq
    \max\left\{
    \frac{(1+\lambda)^2}{\lambda}
    \frac{\sigma_i^2}{ \Delta_i}
    +
    % \frac{1+\lambda}{2 \lambda}
    c
    \log \left(
      1 +
      \frac{\sigma_i^2}{\lambda \Delta_i}
    \right),
    (1 + \lambda) \bet
    \right\}
    +
    O\left( \frac{1}{\gamma^{1/4}} \right)
    \nonumber
    \\
    &
    \leq
    % \sum_{i \in [K] \setminus \{i^*\}}
    \max\left\{
      4
    \frac{\sigma_i^2}{ \Delta_i}
    +
    % \frac{1+\lambda}{2 \lambda}
    c
    \log \left(
      1 +
      \frac{\sigma_i^2}{\Delta_i}
    \right),
    2
    \bet
    \right\}
    +
    (1 + c)
    \left(
    \frac{1}{\lambda} - 1
    \right)
    \frac{\sigma_i^2}{\Delta_i}
    +
    O\left( \frac{1}{\gamma^{1/4}} \right),
    \label{eq:boundlamfDx}
  \end{align}
  where $h(z)$ is defined as \eqref{eq:defhh},
  the inequality follows from \eqref{eq:boundmax2f},
  the second inequality comes from \eqref{eq:boundh} and $\lambda \in (0, 1]$,
  and the last inequality follows from
  \begin{align*}
    \frac{(1 + \lambda)^2}{\lambda}
    &
    =
    \lambda
    +
    2
    +
    \frac{1}{\lambda}
    \leq
    3
    +
    \frac{1}{\lambda}
    =
    4
    +
    \left(
    \frac{1}{\lambda}
    -
    1
    \right) ,
    \\
    \log \left(
      1 +
      \frac{\sigma_i^2}{\lambda \Delta_i}
    \right)
    &
    \leq
    \frac{1}{\lambda}
    \log \left(
      1 +
      \frac{\sigma_i^2}{\Delta_i}
    \right)
    \leq
    \log \left(
      1 +
      \frac{\sigma_i^2}{\Delta_i}
    \right)
    +
    \left(
    \frac{1}{\lambda}
    - 1
    \right)
    \frac{\sigma_i^2}{\Delta_i} .
  \end{align*}
  Combining \eqref{eq:limRTgamma-corrupted} and \eqref{eq:boundlamfDx},
  we obtain
  \begin{align}
    \frac{R_{i^*}(T)}{\log T}
    &
    \leq
    \sum_{i \in [K] \setminus \{i^*\}}
    \max\left\{
      4
    \frac{\sigma_i^2}{ \Delta_i}
    +
    % \frac{1+\lambda}{2 \lambda}
    c
    \log \left(
      1 +
      \frac{\sigma_i^2}{\Delta_i}
    \right),
    2
    \bet
    \right\}
    +
    2 \bet
    \nonumber
    \\
    &
    \quad
    +
    2
    \left(
      2 + \frac{\delta}{\bet}
    \right)
    \sqrt{
      \frac{K-1}{\gamma} C
    }
    +
    2 \lambda \frac{C}{\gamma}
    +
    (1+c)
    \left(
      \frac{1}{\lambda} - 1
    \right)
    \sum_{i \in [K] \setminus \{ i^* \}}
    \frac{\sigma_i^2}{\Delta_i}
    +
    O
    \left( \frac{1}{\gamma^{1/4}} \right) .
    \label{eq:boundRTlogTcorrupted}
  \end{align}
  If we choose
  \begin{align}
    \lambda
    =
    \sqrt{
      \frac{
        \gamma
        \sum_{i \in [K] \setminus \{ i^* \}}
        \frac{\sigma_i^2}{\Delta_i}
      }{
        \gamma
        \sum_{i \in [K] \setminus \{ i^* \}}
        \frac{\sigma_i^2}{\Delta_i}
        +
        2 C
      }
    },
  \end{align}
  we have
  \begin{align*}
    \lambda
    &
    \leq
    \sqrt{
      \frac{
        \gamma
        \sum_{i \in [K] \setminus \{ i^* \}}
        \frac{\sigma_i^2}{\Delta_i}
      }{
        2 C
      }
    },
    \\
    \frac{1}{\lambda} - 1
    &
    =
     \sqrt{
      1 +
      \frac{ 2 C }{
        \gamma
        \sum_{i \in [K] \setminus \{ i^* \}}
        \frac{\sigma_i^2}{\Delta_i}
      }
     }
     - 1
    \leq
    \sqrt{
      \frac{
        2C
      }{
        \gamma
        \sum_{i \in [K] \setminus \{ i^* \}}
        \frac{\sigma_i^2}{\Delta_i}
      }
    },
  \end{align*}
  which implies that
  \begin{align*}
    &
    2
    \left(
      2 + \frac{\delta}{\bet}
    \right)
    \sqrt{
      \frac{K-1}{\gamma} C
    }
    +
    2 \lambda \frac{C}{\gamma}
    +
    (1+c)
    \left(
      \frac{1}{\lambda} - 1
    \right)
    \sum_{i \in [K] \setminus \{ i^* \}}
    \frac{\sigma_i^2}{\Delta_i}
    \\
    &
    =
    O\left(
      \frac{C}{\gamma}
      \sum_{i \in [K] \setminus \{ i^* \}}
      \left(
      \frac{\sigma_i^2}{\Delta_i}
      +
      1
      \right)
    \right).
  \end{align*}
  From this and \eqref{eq:boundRTlogTcorrupted},
  recalling that $\gamma = \log T$ and $\bet = 1+\epsilon$,
  we obtain
  \begin{align}
    R_{i^*} (T)
    &
    \leq
    \sum_{i \in [K] \setminus \{i^*\}}
    \max\left\{
      4
    \frac{\sigma_i^2}{ \Delta_i}
    +
    % \frac{1+\lambda}{2 \lambda}
    c
    \log \left(
      1 +
      \frac{\sigma_i^2}{\Delta_i}
    \right),
    2 (1+ \epsilon)
    \right\}
    \log T
    +
    2 (1 + \epsilon ) \log T
    \nonumber
    \\
    &
    \quad
    +
    O \left(
      C
      \sum_{i \in [K] \setminus \{ i^* \}}
      \left(
      \frac{\sigma_i^2}{\Delta_i}
      +
      1
      \right)
      \log T
    \right)
    +
    o ( \log T ) ,
    \label{eq:thm1-bound-corrupted}
  \end{align}
  which completes the proof of Proposition~\ref{prop:bound-corrupted}.
\end{proof}

\subsection{Proof of the regret bound of (\ref{eq:thm1-bound-adv}) in adversarial settings}
\label{sec:appendix-adv}
In this subsection,
we show the regret bounds described in \eqref{eq:thm1-bound-adv} that holds for adversarial settings.
In the following,
we show
$R(T) \leq 2 \sqrt{ K Q_\infty \log T } + O(K \log T)$.
From Lemma~\ref{lem:boundRTbeta} combined with~\eqref{eq:defalphabeta} and Lemma~\ref{lem:bound-sumalpha-suboptimal},
for any $m^* \in [0, 1]^K$,
we have
\begin{align}
  R(T)
  &
  \leq
  \gamma \sum_{i=1}^K
  \E \left[
    2 \beta_i(T+1)
    -
    \beta_i(T)
    +
    2 \delta \log \frac{\beta_i(T+1)}{\beta_i(1)}
  \right]
  +
  2 K (1 + \delta)
  \nonumber
  \\
  &
  \leq
  2 \gamma \sum_{i=1}^K
  \E \left[
    \beta_i(T+1)
  \right]
  +
  O( K \gamma )
  \nonumber
  \\
  &
  =
  2 \gamma \sum_{i=1}^K
  \E \left[
    \sqrt{
      \bet^2
      +
      \frac{1}{\gamma}
      \sum_{t=1}^T \alpha_i(t)
    }
  \right]
  +
  O( K \gamma )
  \nonumber
  \\
  &
  \leq
  2 \gamma \sum_{i=1}^K
  \E \left[
    \sqrt{
      \bet^2
      +
      \frac{1}{\gamma}
      \left(
      \sum_{t=1}^T
      \mathbf{1}[I(t) = i] (\ell_i(t) - m^*_i)^2
      +
      \log (1 + N_i(T))
      +
      \frac{5}{4}
      \right)
    }
  \right]
  +
  O( K \gamma )
  \nonumber
  \\
  &
  \leq
  2 \gamma \sum_{i=1}^K
  \E \left[
    \sqrt{
      \frac{1}{\gamma}
      \sum_{t=1}^T
      \mathbf{1}[I(t) = i] (\ell_i(t) - m^*_i)^2
    }
  \right]
  +
  O( K \gamma )
  \nonumber
  \\
  &
  \leq
  2 \E \left[
    \sqrt{
      K \gamma  \sum_{i=1}^K
      \sum_{t=1}^T
      \mathbf{1}[I(t) = i] (\ell_i(t) - m^*_i)^2
    }
  \right]
  +
  O( K \gamma )
  \nonumber
  \\
  &
  =
  2 \E \left[
    \sqrt{
      K \gamma
      \sum_{t=1}^T
      (\ell_{I(t)}(t) - m^*_{I(t)})^2
    }
  \right]
  +
  O( K \gamma ),
  \label{eq:boundRTm*}
\end{align}
where the first and the third inequalities follow from Lemmas~\ref{lem:boundRTbeta} and \ref{lem:bound-sumalpha-suboptimal},
respectively,
the second inequality comes from $\beta_i(T+1) = O(T)$ following from \eqref{eq:defalphabeta},
and the last inequality is due to the Cauchy-Schwarz inequality.
As
we have
$
  (\ell_{I(t)}(t) - m^*_{I(t)})^2
  \leq \| \ell(t) - m^* \|_{\infty}^2
$,
\eqref{eq:boundRTm*} implies
\begin{align*}
  R(T)
  \leq
  2 \E \left[
    \sqrt{
      K \gamma
      \sum_{t=1}^T
      \|\ell(t) - m^* \|_{\infty}^2
    }
  \right]
  +
  O( K \gamma ).
\end{align*}
As this holds for any $m^* \in [0, 1]^K$,
% \eqref{eq:boundRTm*} immediately implies that
we have
$R(T) \leq 2 \sqrt{ K Q_{\infty} \log T } + O(K \log T)$.

In the following,
we show
$R(T) \leq 2 \sqrt{ K L^* \log T } + O(K \log T)$
and
$R(T) \leq 2 \sqrt{ K (T-L^*) \log T } + O(K \log T)$.
From
\eqref{eq:boundRTm*} with $m^* = 0$,
for $i^* \in \argmin_{i} \E \left[ \sum_{t=1}^T \ell_i(t) \right] $,
we have
\begin{align}
  % \E \left[
  %   \sum_{i=1}^T
  %   \ell_{I(t)}(t)
  %   -
  %   \ell_{i^*}(t)
  % \right]
  R(T)
  &
  \leq
  2 \E \left[
    \sqrt{
      K \gamma
      \sum_{t=1}^T
      \ell_{I(t)}(t)^2
    }
  \right]
  +
  O( K \gamma )
  \nonumber
  \\
  &
  \leq
  2 \E \left[
    \sqrt{
      K \gamma
      \sum_{t=1}^T
      \ell_{I(t)}(t)
    }
  \right]
  +
  O( K \gamma )
  \nonumber
  \\
  &
  \leq
  2 \E \left[
    \sqrt{
      K \gamma
      \left(
      \sum_{t=1}^T
      \left(
      \ell_{I(t)}(t)
      -
      \ell_{i^*}(t)
      \right)
      +
      \sum_{t=1}^T
      \ell_{i^*}(t)
      \right)
    }
  \right]
  +
  O( K \gamma )
  \nonumber
  \\
  &
  \leq
  2
  \sqrt{
    K \gamma
    \left(
    \E \left[
    \sum_{t=1}^T
    \left(
    \ell_{I(t)}(t)
    -
    \ell_{i^*}(t)
    \right)
    \right]
    +
    \E \left[
    \sum_{t=1}^T
    \ell_{i^*}(t)
    \right]
    \right)
  }
  +
  O( K \gamma )
  \nonumber
  \\
  &
  =
  2
  \sqrt{
    K \gamma
    \left(
    R(T)
    +
    L^*
    \right)
  }
  +
  O( K \gamma ),
  % \nonumber
\end{align}
where the second inequality follows from $\ell_i(t) \in [0, 1]$
and the forth inequaliy is due to the Cauchy-Schwarz inequality.
By solving this inequation in $R(T)$,
we obtain
\begin{align}
  \nonumber
  R(T) \leq 2 \sqrt{K \gamma L^*} + O (K \gamma)
  =
  2 \sqrt{K L^* \log T}
  +
  O(K\log T).
\end{align}
Similarly,
from \eqref{eq:boundRTm*} with $m^* = \mathbf{1}$,
we have
\begin{align}
  % \E \left[
  %   \sum_{i=1}^T
  %   \ell_{I(t)}(t)
  %   -
  %   \ell_{i^*}(t)
  % \right]
  R(T)
  &
  \leq
  2 \E \left[
    \sqrt{
      K \gamma
      \sum_{t=1}^T
      (\ell_{I(t)}(t) - 1)^2
    }
  \right]
  +
  O( K \gamma )
  \nonumber
  \\
  &
  \leq
  2 \E \left[
    \sqrt{
      K \gamma
      \sum_{t=1}^T
      (1 - \ell_{I(t)}(t) )
    }
  \right]
  +
  O( K \gamma )
  \nonumber
  \\
  &
  \leq
  2 \E \left[
    \sqrt{
      K \gamma
      \left(
      T
      -
      \sum_{t=1}^T
      \ell_{i^*}(t)
      -
      \sum_{t=1}^T
      \left(
      \ell_{I(t)}(t)
      -
      \ell_{i^*}(t)
      \right)
      \right)
    }
  \right]
  +
  O( K \gamma )
  \nonumber
  \\
  &
  \leq
  2
  \sqrt{
    K \gamma
    \left(
      T - L^* - R(T)
    \right)
  }
  +
  O( K \gamma ),
\end{align}
which implies
\begin{align}
  \nonumber
  R(T) \leq 2 \sqrt{K \gamma (T - L^*)} + O (K \gamma)
  =
  2 \sqrt{K (T - L^*) \log T}
  +
  O(K\log T).
\end{align}

\section{Proof of Proposition \ref{prop_moment}}
\label{append_lower}
% {\color{red}honda: should be given in the appendix}
The achievability result is a rephrasing of \citet[Theorem 1]{honda_moment}.
It is shown in \citet[Theorem 4]{honda_moment} that
there exist distributions such that any consistent algorithm satisfies
\begin{align}
\liminf_{T\to\infty}\frac{R(T)}{\log T}
\ge
\sum_{i\neq i^*}\frac{\Delta_i}{D_{\inf}^{(2)}(M_{i,1},M_{i,2},1-\mu^*)},\label{bound_dinf}
\end{align}
where $M_{i,1}=\E[1-\ell_i]$ and $M_{i,2}=\E[(1-\ell_i)^2]$ are the first and second moments of the reward, respectively,
and
\begin{align}
D_{\inf}^{(2)}(M_1,M_2,1-\mu^*)
&=
\frac{(1-M_1)^2}{1-2M_1+M_2}\log \left(1-\left(\frac{M_1-M_2}{1-M_1}-(1-\mu^*)\right)\nu^{(2)}\right)\nn
&\quad\;\;\;\;+\frac{M_2-M_1^2}{1-2M_1+M_2}\log \left(1-\mu^*\nu^{(2)}\right)\label{dinf_expression}
\end{align}
for
\begin{align}
\nu^{(2)}=\frac{(1-M_1)(M_1-1+\mu^*)}{(1-M_1)(1-\mu^*)^2-(1-M_2)(1-\mu^*)+M_1-M_2}.\label{nu_expression}
\end{align}
Here note that $\mu^*$ in \citet{honda_moment} is replaced with $1-\mu^*$ since we are considering the loss instead of the reward,

By substituting
$M_{1}=1-\mu_i$ and $M_{2}=(1-\mu_i)^2+\sigma_i^2$, we have
\begin{align}
\nu^{(2)}
%&=\frac{(1-m)(m-\mu)}{(1-m)\mu^2-(1-m^2-v)\mu+m-m^2-v}\\
&=\frac{\mu_i(\mu^*-\mu_i)}{\mu_i(1-\mu^*)^2-(2\mu_i - \mu_i^2 -\sigma_i^2)(1-\mu^*)+\mu_i-\mu_i^2-\sigma_i^2}\nn
&=\frac{\mu_i(\mu^*-\mu_i)}{\mu_i-2\mu_i\mu^*+\mu_i(\mu^*)^2-2\mu_i + \mu_i^2 +\sigma_i^2
+2\mu_i\mu^* - \mu_i^2\mu^* -\sigma_i^2\mu^*
+\mu_i-\mu_i^2-\sigma_i^2}\nn
&=\frac{\mu_i(\mu^*-\mu_i)}{\mu_i(\mu^*)^2
 - \mu_i^2\mu^* -\sigma_i^2\mu^*
}\nn
&=
\frac{\mu_i(\mu^*-\mu_i)}{\mu^*(\mu_i\mu^*
 - \mu_i^2 -\sigma_i^2)
}.
\end{align}
Therefore,
\begin{align}
\lefteqn{
D_{\inf}^{(2)}(M_1,M_2,1-\mu^*)
}\nn
%&=
%\frac{(1-M_1)^2}{1-2M_1+M_2}\log \left(1-\left(\frac{M_1-M_2}{1-M_1}-(1-\mu^*)\right)\nu^{(2)}\right)\nn
%&\quad\;\;\;\;+\frac{M_2-M_1^2}{1-2M_1+M_2}\log \left(1-\mu^*\nu^{(2)}\right)\nn
%&=
%\frac{(1-m)^2}{v+(1-m)^2}\log \left(1-\left(\frac{m-m^2-v}{1-m}-\mu\right)\nu^{(2)}\right)
%+\frac{v}{v+(1-m)^2}\log \left(1-\left(1-\mu\right)\nu^{(2)}\right)\nn
&=
\frac{\mu_i^2}{\sigma_i^2+\mu_i^2}\log \left(1-\left(\frac{\mu_i-\mu_i^2-\sigma_i^2}{\mu_i}-(1-\mu^*)\right)\nu^{(2)}\right)
+\frac{\sigma_i^2}{\sigma_i^2+\mu_i^2}\log \left(1-\mu^*\nu^{(2)}\right),\nn
&=
\frac{\mu_i^2}{\sigma_i^2+\mu_i^2}\log \left(1-\left(\frac{\mu_i\mu^*-\mu_i^2-\sigma_i^2}{\mu_i}\right)\nu^{(2)}\right)
+\frac{\sigma_i^2}{\sigma_i^2+\mu_i^2}\log \left(1-\mu^*\nu^{(2)}\right)\nn
&=
\frac{\mu_i^2}{\sigma_i^2+\mu_i^2}\log \left(1-\frac{\mu^*-\mu_i}{\mu^*}\right)
+\frac{\sigma_i^2}{\sigma_i^2+\mu_i^2}\log \left(1-\frac{\mu_i(\mu^*-\mu_i)}{\mu_i\mu^*
 - \mu_i^2 -\sigma_i^2}\right)\nn
&=
\frac{\mu_i^2}{\sigma_i^2+\mu_i^2}\log \frac{\mu_i}{\mu^*}
+\frac{\sigma_i^2}{\sigma_i^2+\mu_i^2}\log \frac{\sigma_i^2}{\mu_i^2 +\sigma_i^2-\mu_i\mu^*}\nn
&=
\frac{\mu_i^2}{\sigma_i^2+\mu_i^2}\log \frac{\mu_i}{\mu^*}
+\frac{\sigma_i^2}{\sigma_i^2+\mu_i^2}\log \frac{\sigma_i^2}{\sigma_i^2+\mu_i\Delta_i},
\end{align}
which leads to the desired result combined with \eqref{bound_dinf}.
\qed

\section{Modification for Path-Length Regret Bounds}
\label{sec:path-length}
By modifying the update rule for $\{ m(t) \}$ in the proposed algorithm,
we can obtain a path-length regret bound
dependent on $V_1 = \E[ \sum_{t=1}^{T-1} \| \ell(t) - \ell(t-1) \|_1 ]$.
Let us consider computing $m(t)$ in a similar way to that of \citet{ito2021parameter},
i.e.,
set
\begin{align}
  m_i(1) = \frac{1}{2}
  \quad
  ( i \in [K]),
  \quad
  m_i(t+1)
  =
  \left\{
    \begin{array}{ll}
      (1- \eta) m_i(t) + \eta \ell_i(t) & \mbox{if} \quad i = I(t)
      \\
      m_i(t) & \mbox{if} \quad i \neq I(t)
    \end{array}
  \right.
\end{align}
with a parameter $\eta \in (0, 1/2)$,
instead of \eqref{eq:defmi} in this paper.
Then,
from Proposition 13 in \citet{ito2021parameter},
we have
\begin{align}
  \sum_{t=1}^T \sum_{i=1}^K \alpha_i(t)
  \leq
  \frac{1}{1-2 \eta}
  \sum_{t=1}^T
  (\ell_{I(t)}(t) - m^*_{I(t)}(t))^2
  +
  \frac{2}{\eta(1- 2 \eta)}
  \left(
    \frac{K}{8}
    +
    \sum_{t=1}^{T-1}
    \|
    m^*(t)
    -
    m^*(t+1)
    \|_1
  \right)
  \label{eq:boundalphaeta}
\end{align}
for any sequence $\{ m^*(t) \}_{t=1}^T \subseteq [0, 1]^K$.
From \eqref{eq:boundalphaeta} with
$m^*(t) = \ell(t)$ for all $t \in [T]$,
we have
\begin{align*}
  \E \left[
    \sum_{t=1}^T \sum_{i=1}^K \alpha_i(t)
  \right]
  &
  \leq
  \frac{2}{\eta(1- 2 \eta)}
  \left(
    \frac{K}{8}
    +
    \E \left[
    \sum_{t=1}^{T-1}
    \|
    \ell(t)
    -
    \ell(t+1)
    \|_1
    \right]
  \right)
  =
  \frac{2}{\eta(1- 2 \eta)}
  \left(
    \frac{K}{8}
    +
  V_1
  \right).
\end{align*}
From this,
via the argument in Section~\ref{sec:appendix-adv},
we have the following path-length regret bound:
\begin{align}
  \label{eq:path-length-appendix}
  R(T)
  \leq
  \sqrt{
    \frac{K}{\eta(1-2\eta)}
    \left(
    8 V_1
    +
    K 
    \right)
    \log T
  }
  +
  O(K \log T).
\end{align}

Further,
the modified version preserves the regret bounds achieved by the original version
though the leading constant factors get increased.
By considering \eqref{eq:boundalphaeta} with $m^*(t) = m^*$ for all $t$,
we obtain
\begin{align*}
  \E \left[
    \sum_{t=1}^T \sum_{i=1}^K \alpha_i(t)
  \right]
  &
  \leq
  \frac{1}{1 - 2 \eta}
  \min_{ m^* \in [0, 1]^K }
  \left\{
    \E \left[
    \sum_{t=1}^T
    ( \ell_{I(t)}(t) - m^*_{I(t)})^2
    \right]
  \right\}
  +
  \frac{K}{4 \eta(1- 2 \eta)}
  \\
  &
  \leq
  \frac{1}{1 - 2 \eta}
  \min
  \left\{
    Q_{\infty},
    R(T) + L^*,
    T - L^* - R(T)
  \right\}
  +
  \frac{K}{4 \eta(1- 2 \eta)},
\end{align*}
which implies
\begin{align}
  \label{eq:data-dependent-appendix}
  R(T)
  \leq
  \sqrt{
    \frac{K}{1-2\eta}
    \left(
    \min\{
      T,
      4 Q_{\infty},
      4 L^*,
      4 (T - L^*)
    \}
    +
    \frac{K}{\eta}
    \right)
    \log T
  }
  +
  O(K \log T) .
\end{align}
Combining \eqref{eq:path-length-appendix} and \eqref{eq:data-dependent-appendix},
we obtain \eqref{eq:bound-pathlength}.

Similarly,
in stochastic settings,
we have
\begin{align*}
  \E \left[
  \sum_{t=1}^{T}
  \alpha_i(t)
  \right]
  &
  \leq
  \frac{1}{1- 2 \eta}
  \E
  \left[
    \sum_{t=1}^T
    \mathbf{1}[I(t)=i]
    (\ell_i(t) - \mu_i)^2
  \right]
  +
  \frac{1}{4 \eta (1 - 2 \eta)}
  \\
  &
  =
  \frac{1}{1- 2 \eta}
  \sigma_i^2 P_i
  +
  \frac{1}{4 \eta (1 - 2 \eta)}.
\end{align*}
By using this instead of \eqref{eq:bound-sumalpha-suboptimal},
via the argument in Section~\ref{sec:appendix-sto},
we can obtain
\begin{align*}
  R(T)
  &
  \leq
  \frac{1}{1- 2 \eta}
    \left(
    \sum_{i \in [K] \setminus \{ i^* \}}\max\left\{ 4 \frac{\sigma_i^2}{\Delta_i}
    +
    c
    \log \left(
      1 + \frac{\sigma_i^2}{\Delta_i}
    \right),\,2(1+\epsilon)\right\}
    +
    2
    (1+\epsilon)
    \right)
    \log T
    \\
    &
    \quad
    +
    O \left(
      K
      \sqrt{\frac{\log T}{ \eta(1-2\eta)}}
    \right)
    +
    o (\log T)
\end{align*}
in stochastic settings.

\end{document}